\newcommand{\Ave} {{\rm Ave}}
\newcommand{\I} {{\cal I}}
\newcommand{\E} {{\mathbb E}}
\newcommand{\vvarphi} {\varphi}
\newcommand{\data} {\varphi}
\newcommand{\Z} {{\mathbb Z}}
\newcommand{\R} {{\mathbb R}}
\newcommand{\En} {{U}}
\newcommand{\N} {{\mathbb N}}
\newcommand{\om} {{\omega}}
\newcommand  \aaa {{\alpha}}
\newcommand\Ld{{\bf L^2}}
\newcommand\trans{^{\mathrm{T}}}
\newcommand\Id{\mathrm{Id}}
\newcommand\EE{\mathbb{E}}
\newcommand\expect[2][]{\EE\ifstrempty{#1}{}{_{#1}}\big (#2\big)}
\newcommand\norm[1]{\left\lVert #1 \right\rVert}
\newcommand\paren[1]{\left( #1 \right)}
\DeclareMathOperator{\diag}{diag}
\DeclareMathOperator{\Tr}{Tr}
\def\cistar{\kern.2em\mbox{$\odot\kern-.67em\star\kern .4em$}}
\newcommand {\Hess} {\nabla^2}
\title{Hierarchic Flows to Estimate and Sample High-dimensional Probabilities
}
\newif\ifuniqueAffiliation
\author[1]{Etienne Lempereur}%
\author[2,3]{Stéphane Mallat}%
\affil[1]{Département d'informatique, Ecole normale supérieure, Paris, France}
\affil[2]{Collège de France, Paris, France}
\affil[3]{Flatiron Institute, New York, USA}
\begin{document}

\newtheorem{theorem}{Theorem}[section]
\newtheorem{lemma}{Lemma}[section]
\newtheorem{proposition}{Proposition}[section]
\newtheorem{definition}{Definition}[section]
\newtheorem{corollary}{Corollary}[section]
\newtheorem{condition}{Condition}[section]
\newtheorem{assumption}{Assumption}[section]
\newtheorem{remark}{Remark}[section]
\newcommand{\z} {{z}}
\newcommand{\pii} {{p}}

\maketitle
\begin{abstract}
Finding low-dimensional interpretable models of complex physical fields such as
turbulence remains an open question, 80 years after the pioneer work of Kolmogorov.
Estimating high-dimensional probability distributions from data samples 
suffers from an optimization and an approximation curse of dimensionality.
It may be avoided by following a hierarchic probability flow from coarse to fine scales. 
This inverse renormalization group is defined by conditional probabilities across scales, renormalized in a wavelet basis. 
For a $\vvarphi^4$ scalar potential, sampling these hierarchic models avoids the critical slowing down at the phase transition.  In a well chosen wavelet basis, conditional probabilities can be captured with low dimensional parametric models, because interactions between wavelet coefficients are local in space and scales.
An outstanding issue is also to approximate non-Gaussian fields having long-range  interactions in space and across scales.
We introduce low-dimensional models of wavelet conditional probabilities
with the scattering covariance.
It is calculated with a second wavelet transform, which defines interactions
over two hierarchies of scales. We estimate
and sample these wavelet scattering models to generate
2D vorticity fields of turbulence, and images of dark matter densities.
\end{abstract}



\section{Introduction}

Estimating models of high-dimensional probability distributions from data is at the heart  of data science and statistical physics. 
For a physical system at equilibrium, the probability distribution of a field 
$\data \in \R^d$ (such as an image) has a density
$p(\data) = {\cal Z}^{-1}\, e^{-\En(\data)}$, where $\En(\data)$ is the Gibbs energy \citep{landau2013statistical}. Learning means approximating and optimizing
an estimation of the high-dimensional energy function $\En$, from $m$ data samples $\{\data^{(i)} \}_{i\leq m}$ resulting from measurements or numerical simulations.
New data can then be generated by sampling the estimated model of $p$, which is also used to estimate solutions
of inverse problems \cite{kaipio2006statistical,aster2018parameter}.
The estimation of a Gibbs energy is particularly difficult when $\data$ has long range dependencies, and its dimension $d$ is large. An outstanding problem is to build probabilistic models of turbulent flows, which dates back to the work of Kolmogorov in 1942 \citep{kolmogorov1941local,kolmogorov1942equations}. 

In statistics, $p$ is estimated by defining an
approximation class $p_\theta$ and by optimizing $\theta$. These approximation and optimization problems are plagued by the curse of dimensionality.
Section \ref{sec:energyestim-sampling} reviews both aspects. It includes
linear approximations of Gibbs energies, 
maximum likelihood estimation versus score matching, and sampling by Langevin diffusions. The optimization curse of dimensionality over $\theta$ is avoided if the log-Sobolev constant of $p$ remains bounded when $d$ increases \citep{gross1975logarithmic,ledoux2000geometry,bakry2014analysis}.
The approximation curse of dimensionality is also avoided if we can define accurate models with a parameter vector $\theta$ whose dimension is bounded or has a slow growth when $d$ increases.
This is possible if interactions are local within $\data$, as in
Markov random fields \cite{geman1984stochastic}. 
Sadly enough, neither of these
two properties is satisfied by complex data such as turbulence fields.
For multiscale fields,
the renormalization group gives a powerful hierarchic approach to
address these approximation and optimization curse of dimensionality.

From a cybernetics perspective, Herbert Simons observed that the architecture of most complex systems is hierarchic, in physics, biology, economic, symbolic languages, or social organizations. He argues that it probably results from their dynamic evolution, where intermediate states 
must be stable \cite{simon1962architecture}. This attractive idea could explain why 
the curse of dimensionality can be avoided when analyzing such systems, but
the notion of hierarchy is loosely defined. The core principle of hierarchic organizations is to build long range interactions from a limited number of local interactions between neighbors in the hierarchy. But, large systems include multiple hierarchies that produce complex long-range interactions in multidimensional matrix organizations, as opposed to a tree. For example, large corporations often include 
horizontal hierarchic organizations dedicated to specific projects, at each vertical hierarchic level
 \cite{turkMatrixOrganisation}. It creates long-range interactions between employees working on the same project.
In physics, the renormalization group theory provides a hierarchic analysis of multiple body interactions.
It computes a flow of probabilities from a fine microscopic scale towards larger macroscopic scales \citep{kadanoff1966scaling,wilson1971renormalization,wilson1972critical,Delamotte_2012}.
 
However, major difficulties have been encountered to take into account non-local interactions in space and across scales. Particularly for non-renormalizable systems such as turbulence, whose degrees of freedom increase with the dimension $d$ of $\data$ \cite{Bohr_Jensen_Paladin_Vulpiani_1998}.

This paper defines hierarchic models to estimate and sample high-dimensional 
non-Gaussian processes having non-local interactions. Section \ref{sec:prob-sep} considers data 
$\data \in \R^d$ defined on graphs, or images. A first hierarchic organization is constructed with coarse graining approximations $\data_j$ of $\data$,
of progressively smaller sizes as the scale $2^j$ increases. 
Figure \ref{fig:intro} gives an illustration of the vorticity field of a 2D turbulence.
The probability density $p(\data)$ is progressively mapped into
densities $p_j (\data_j)$ from fine to coarse scales $2^j$.
This renormalization group transformation is computed by marginal integrations of the
high frequency degrees of freedom, which progressively disappear as $j$ increases.
There is no difficulty to estimate and sample $p_J (\varphi_J)$ if $\data_J$ has a low dimension.
From this estimation, 
the high-dimensional model of $p$ can be 
estimated and sampled with a reverse
Markov chain. It transforms $p_J$ into $p$ by iteratively
computing $p_{j-1}$ from $p_j$, as shown
by Figure \ref{fig:intro}.
Each transition kernel $\bar p_j$ of this hierarchic flow 
is the conditional probability of $\data_{j-1}$
given $\data_j$.  The main difficulty is to understand on what conditions one can estimate and sample these transition kernels,
without suffering from the curse of dimensionality.

\begin{figure}
\label{fig:intro}
\centering
\includegraphics[width=0.25\textwidth]{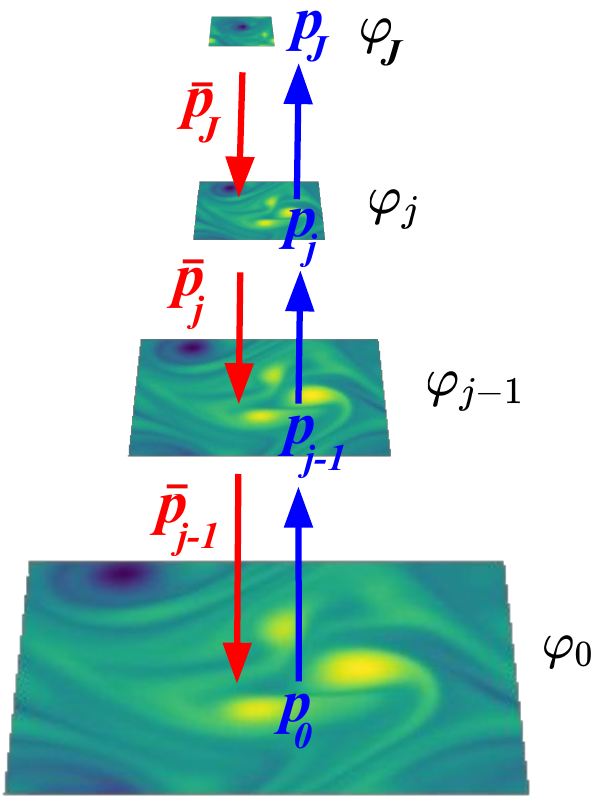}
\caption{The renormalization group \textit{(illustrated in blue)} computes the probability distributions $p_j (\data_j)$ of images $\data_j$ at progressively larger scales $2^j$, with marginal integrations of high-frequency degrees of freedom, up to a maximum scale $2^J$. A hierarchic flow \textit{(illustrated in red)} is a top-down inverse renormalization group which recovers $p$ from $p_J$ by estimating each transition probability $\bar p_j$ from $p_j$ to $p_{j-1}$ \cite{marchand_wavelet_2022}. Difficulties arise when $\bar p_j$ is non-local, as in turbulent flows.} 
\end{figure}

A hierarchic probability flow across scales is
an inverse to Wilson's renormalization group. If we represent high frequencies in wavelet
bases then the transition kernels can be written as conditional probabilities of wavelet coefficients \cite{marchand_wavelet_2022}.
Renormalizing wavelet coefficients
is a strategy to control log-Sobolev constants of transition probabilities.
For the $\vvarphi^4$ model of ferromagnetism at phase transition, it was shown in
\cite{marchand_wavelet_2022} that a renormalization in a wavelet basis 
eliminates the "critical slowing down" of Langevin
sampling algorithms \citep{10.1093/oso/9780198834625.001.0001}.
This is verified and analyzed for different types of 
wavelet bases in Section \ref{sec:scal-pot-ene}. The critical slowing down is beaten thanks to the localization of wavelets in the Fourier domain.
Moreover, since wavelets have a localized spatial support,
the conditional probabilities $\bar p_j(\bar\data_j |\data_j)$ can be approximated with a local scalar potential, which defines a low-dimensional parametric
model. The choice of wavelet is thus a trade-off between spatial
and frequency localizations, which affect the log-Sobolev constant and the
model dimension.

The energy of $\vvarphi^4$ is local in space and "renormalizable," which means that it can be approximated with a number of coupling parameters, which does not depend on the field dimension $d$, at all scales.
This property does not apply to
complex systems such as fluid turbulence, which have progressively more degrees of
freedom as the dimension $d$ increases \citep{Bohr_Jensen_Paladin_Vulpiani_1998}.
To face this issue, we introduce hierarchic potential models, whose dimensions increase when the scale $2^j$ decreases. The hierarchy preserves a coupling flow equation, which relates the coupling parameters of energies from one scale to the next.

In physics and statistics, Gibbs energies of non-Gaussian probability distributions are often approximated with polynomials of degrees larger than $2$, typically $3$ or $4$ \cite{landau2013statistical}.
For a stationary field of dimension $d$, it involves $d^2$ or $d^3$ approximation terms, whose estimations have a large variance. Moreover, for multiscale fields the parametrization of Gibbs
energies is often unstable, leading to brutal phase transitions as in the $\varphi^4$ model.
Section \ref{sec:scat-cov} introduces stable hierarchic parameterizations, with multiscale
interaction energy models of dimension $O(\log^3 d)$. High order polynomials are replaced
by scattering covariance coefficients. 
They are computed with a second wavelet transform, applied to 
the modulus
of the first wavelet transform. It defines a second hierarchy, with a second scale parameter. The resulting scattering coefficients \cite{mallatscat}
capture long-range non-Gaussian interactions across space and scales  
\cite{morel2023scale,cheng2023scattering}.
It defines a renormalization group representation of non-renormalizable systems,
with $O(\log^3 d)$ degrees of freedom. Gibbs energy models of two-dimensional turbulent vorticity fields and dark matter density fields
are estimated and sampled with this hierarchic approach.

\section{Probability Models, Estimation and Sampling}
\label{sec:energyestim-sampling}

We denote by $\data \in \R^d$ a data vector of dimension $d$.
We suppose that it has a probability density $p(\data) = {\cal Z}^{-1} e^{-\En(\data)}$ with respect to the Lebesgue measure, with
a Gibbs energy $U$ which is twice differentiable. 
Our goal is to estimate an accurate model of $p$ from 
$m$ samples $\{ \data^{(i)} \}_{i \leq m}$, and to generate new data by sampling this model. 
This section is a brief introduction to
sampling of high-dimensional probabilities with Langevin diffusions,
and to the estimation of parametric models 
by maximum likelihood and score matching.

\subsection{Langevin Sampling and Log Sobolev Inequalities}
\label{sec:langevin-sampling-logSobolev}

If $\En(\data)$ is known, 
one can sample $p$ with the Langevin dynamics, which is a Markov process that iteratively updates a field $\data_t$ with the stochastic differential equation
\[
d \data_t = - \nabla_\data \En(\data_t)\, dt + \sqrt 2 \, dB_t~,
\]
where $B_t$ is a Brownian motion. It is a gradient descent on the energy, which is perturbed by the addition of a Gaussian white noise. Let $\data_0$ be a sample of a density $p_0$ at time $t=0$. At time $t$, $\data_t$ is a sample of a density $p_t$ which is guaranteed to converge to the density $p$ of Gibbs energy $\En(\data)$ \citep{Lelièvre_Stoltz_2016}. The unique invariant measure of this Markov chain is $p$. We can thus sample $p$ by running the Langevin equation over samples of an initial measure $p_0$, for example Gaussian, but the convergence may be extremely slow.

The convergence of $p_t$ towards $p$ is defined with a KL divergence
\[
KL(p_t , p) = \int p_t (\data) \, \log \frac{p_t(\data)} {p(\data)}\, d\data \geq 0 .
\]
De Bruin identity \cite{bauerschmidt2023stochastic} proves that
\begin{equation}
\label{deBruin}
\frac{d KL(p_t , p)} {dt} = - \I (p_t , p) ,
\end{equation}
where $\I$ is the relative Fisher information
\[
\I(q , p) = \int q(\data)\, \|\nabla_\data \log \frac{ q(\data)} {p(\data)}\|^2\, d \data .
\]
The exponential convergence of $p_t$ towards $p$ is guaranteed 
if $p$ satisfies a log-Sobolev inequality, which is recalled.

\begin{definition}
The log-Sobolev constant $c(p)$ of $p$ is the smallest constant so that for any probability density $q$
\begin{equation}
\label{log-sobolev}
KL(q , p) \leq c(p)\, {\cal I} (q , p) .
\end{equation}
\end{definition}

Log-Sobolev constants relate entropy and gradients of smooth normalized functions $f(\data)$ in
the functional space ${\bf L^2}(p \,d\data)$. Indeed,  $f^2 = {q}/ p$ satisfies
$\|f\|_2 = 1$ in  ${\bf L^2}(p \,d\data)$ because $\int q(\data) d \data = 1$, 
and one can verify that the log-Sobolev inequality (\ref{log-sobolev}) is equivalent to 
\begin{equation}
\label{log-sobolev2}
\int f^2(\data)\, \log f^2(\data) \,p(\data) d\data \leq 4\, c(p) \int \|\nabla f(\data) \|^2 p(\data)d\data .
\end{equation}
If $p$ is a normal Gaussian then $c(p) = 1/2.$

The log-Sobolev constant gives an exponential rate of convergence of a Langevin diffusion. Indeed, inserting the log-Sobolev definition (\ref{log-sobolev}) in (\ref{deBruin}) proves that
\begin{equation}
\label{exp-dec}
KL(p_t , p) \leq e^{ {-t} /{c(p)}} KL(p_0 , p)~.
\end{equation}
The time it takes for Langevin diffusions to reach a fixed precision is thus at most
proportional to the log-Sobolev constant $c(p)$. 
The trouble is that $c(p)$ typically grows exponentially with the dimension $d$ of $\data$.

Upper bounds of 
the log-Sobolev constant can be computed in two cases \citep{bauerschmidt2023stochastic}. Under independence conditions,
if $\data = (\data_k)_k$ and $p$ can be separated as a tensor product of densities $p(\data) = \prod_k p_k (\data_k)$ then 
\begin{equation}
\label{product}
c(p) = \max_k \big(c(p_k) \big)_k .
\end{equation}
In other words, the log-Sobolev constant of independent random variables is the maximum log-Sobolev constant of each variable. 
The second case, when $\En$ is strongly convex,
is the Bakry-Emery theorem \citep{bakry2014analysis}, which proves that
\begin{equation}
\label{lower-norm}
\forall \data~~,~~\Hess_\data \En(\data) \geq \aaa Id~~\Rightarrow~c(p) \leq \frac 1 {2 \aaa} .
\end{equation}
The maximum constant $\aaa$ satisfying this equality is the infimum over $\data$ of all eigenvalues  of the Hessian matrices $\Hess_\data \En(\data)$.
The following proposition gives
a lower bound of the log-Sobolev constant from the covariance of $p$.

\begin{proposition}
 \label{prop:covar-lower}
Let 
$\mu_{\max}$ be the largest
eigenvalue of the covariance of $p$. 
The log-Sobolev constant satisfies
\begin{equation}
\label{sbol-covariance}
c(p) \geq \mu_{\max} / 2~.
\end{equation}
\end{proposition}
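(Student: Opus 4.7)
The plan is to first derive a Poincaré inequality from the log-Sobolev inequality by linearizing around the reference density $p$, and then to apply the Poincaré inequality to a linear test function aligned with the top eigendirection of the covariance.

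Concretely, fix a bounded test function $g:\R^d\to\R$ with $\int g\, p\, d\data=0$, and for small $\epsilon$ consider the perturbed density $q_\epsilon(\data)=(1+\epsilon g(\data))\,p(\data)$, which is a valid probability density. Expanding $\log(1+\epsilon g)=\epsilon g-\tfrac12\epsilon^2 g^2+O(\epsilon^3)$ and integrating gives
\[
\KL{q_\epsilon}{p}=\int q_\epsilon\log\frac{q_\epsilon}{p}\,d\data=\tfrac12\epsilon^2\int g^2\,p\,d\data+O(\epsilon^3)=\tfrac12\epsilon^2\,\Var_p(g)+O(\epsilon^3).
\]
On the Fisher information side, $\nabla_\data\log(q_\epsilon/p)=\epsilon\nabla g/(1+\epsilon g)=\epsilon\nabla g+O(\epsilon^2)$, so
\[
\I(q_\epsilon,p)=\int q_\epsilon\,\Big\|\nabla_\data\log\frac{q_\epsilon}{p}\Big\|^2 d\data=\epsilon^2\int\|\nabla g\|^2\,p\,d\data+O(\epsilon^3).
\]
Plugging these two expansions into the log-Sobolev inequality (\ref{log-sobolev}) with $q=q_\epsilon$, dividing by $\epsilon^2$ and letting $\epsilon\to 0$, yields the Poincaré inequality
\[
\Var_p(g)\leq 2\,c(p)\int\|\nabla g\|^2\,p\,d\data.
\]

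Next, I extend this inequality from bounded test functions to $g\in H^1(p)$ with zero mean by a standard truncation/mollification argument, since both sides are continuous in $g$ with respect to the natural norm $\|g\|_{L^2(p)}+\|\nabla g\|_{L^2(p)}$. Let $v\in\R^d$ be a unit eigenvector of the covariance matrix $\Sigma$ of $p$ associated with $\mu_{\max}$, and take
\[
g(\data)=v\trans\bigl(\data-\EE_p[\data]\bigr).
\]
Then $\Var_p(g)=v\trans\Sigma\,v=\mu_{\max}$ and $\nabla g(\data)=v$ with $\|v\|=1$, so $\int\|\nabla g\|^2\,p\,d\data=1$. Substituting into the Poincaré inequality gives $\mu_{\max}\leq 2\,c(p)$, i.e.\ $c(p)\geq \mu_{\max}/2$, as claimed.

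The one step that needs a little care is the passage from bounded test functions to the linear function $g$, which is unbounded; but since $p$ has a finite covariance by hypothesis, $g\in L^2(p)$ and $\nabla g$ is constant, so a routine truncation $g_R=\max(-R,\min(R,g))$ followed by $R\to\infty$ is enough. Everything else is bookkeeping on the $O(\epsilon^2)$ expansion.
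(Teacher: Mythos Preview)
Your proof is correct and follows essentially the same route as the paper: linearize the log-Sobolev inequality around $p$ via $q_\epsilon=(1+\epsilon g)p$ to obtain the Poincar\'e inequality, then plug in the linear function aligned with the top covariance eigenvector. The paper's version is terser (it cites Ledoux for log-Sobolev $\Rightarrow$ Poincar\'e and skips the truncation discussion), but your added care about extending from bounded to unbounded test functions is a welcome detail.
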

This proposition is proved in Appendix \ref{app:pointcarre}.
It comes from the inequality between log-Sobolev and Poincaré constants 
\cite{ledoux2000geometry}.

The Langevin diffusion is numerically calculated with the Euler-Maruyama discretization. If $\nabla_\varphi \En(\varphi)$ is uniformly $L-$Lipschitz, then
the discretization time step can be smaller or equal to
$L^{-1}$ \cite{vempala2022rapid}.
Since $L$  is the supremum of all eigenvalues of the
Hessians $\nabla^2_\varphi \En(\varphi)$,
it results from (\ref{exp-dec}) that the number of 
Langevin diffusion steps to achieve a given precision is proportional to the log-Sobolev constant multiplied by this eigenvalues' supremum.  It is a normalization of the log-Sobolev constant, which specifies the computational complexity of the Langevin sampling algorithm.

Numerically, the convergence rate of a Langevin diffusion is estimated from 
the relaxation time $\tau$ of the auto-correlation
\begin{equation}
\label{relax-time}
A(t) =\expect[p_{t_0}]{\big(\data_{t+t_0}-\E[\data_{t+t_0}]\big)\trans\big(\data_{t_0}-\E[\data_{t_0}]\big)} \propto  e^{-\frac{t}{\tau}} ,
\end{equation}
for $t_0$ big enough \citep{Sokal:1990tr,sokal1997monte}. 
In high dimension $d$, we cannot compute numerically the Kulbach Liebler divergence along the Langevin dynamic, from which we could derive a lower bound on the log-Sobolev constant, using equation (\ref{exp-dec}). A quantitative numerical estimation of the log-Sobolev constant is therefore out of reach. Instead of measuring the exponential decay of the  Kulbach Liebler divergence, we estimate the exponential decay of the autocorrelation.
For a gaussian distribution, $\tau$ is equal to the largest eigenvalue of the covariance of $p$, which is its log-Sobolev constant. For more general distributions $p$, this relaxation time, like the log-Sobolev constant, reflects the multi-modality of $p$ and the conditioning of its covariance.  The relaxation time $\tau$ gives only a partial information on the log-Sobolev constant.

To eliminate the bias introduced by the discretization step of Langevin diffusion, syntheses are generated using a Metropolis-Adjusted Langevin Algorithm (MALA) \cite{grenander1994representations}. It iterates over discretized Langevin dynamics proposals that are accepted or rejected using the Metropolis-Hastings algorithm.

\subsection{Approximation and Learning Gibbs Energies}
\label{sec:estim-gibbs}

The Gibbs energy $\En$ of a probability density $p$ is usually not known a priori.
It is approximated by a parametrized model $p_{\theta} = {\cal Z}_\theta^{-1} e^{-\En_{\theta}}$
where $\theta$ is optimized by minimizing $KL(p,p_\theta)$. Learning $\theta$ 
from a dataset of $m$ samples $\{ \data^{(i)} \}_{i \leq m}$ is 
usually done with a
maximum likelihood estimation. We shall see that it can be replaced by score matching estimations, which require much fewer calculations, but whose precision depends upon the log-Sobolev constant $c(p)$.

\paragraph{Exponential models}
We concentrate on linear approximations $\En_\theta$ of the Gibbs energy $\En$,
over a family of $m'$  functions $\Phi = \{\phi_k  \}_{k \leq m'}$
weighted by $\theta = \{\theta_k \}_{k \leq m'}$
\begin{equation}
\label{linear-decomp}
\En_\theta(\data)  =  \theta\trans \Phi(\data)  = \sum_{k \leq m'} \theta_k \, \phi_k (\data) .
\end{equation}
It defines an exponential family of probability densities
\[
p_\theta(\data) = {\cal Z}_\theta^{-1}\, e^{-  \theta\trans  \Phi(\data) } .
\]
A major issue is to find a family $\Phi$ which provides an accurate approximation $\En_\theta$ of $U$ in high dimension $d$, despite the curse of dimensionality.
In statistics, each $\phi_k$ is called a moment generating function because the maximum likelihood estimators of $\theta$ is specified by
the vector of moments $\E_p [\Phi]$ as we shall see.
The $\phi_k$ can be polynomials of $\data$.
However, if we use all high order polynomials of a fixed degree, then the dimension of $\Phi$ has a polynomial growth in $d$. The resulting estimation has a high variance, which
requires a large number $m$ of training samples \cite{higher-order}.
Markov random fields \cite{geman1984stochastic} provide a powerful framework where all interactions are supposed to be local over native variables, which defines low-dimensional models. However, it 
does not apply to data having long-range interactions, which is the case for many image textures \cite{portilla2000parametric}. 
In physics, $\En_\theta$ is called an Ansatz. The $\phi_k$ are considered as interaction potentials. Physical models  often rely on multilinear functions of derivatives of $\data$. The resulting Ansätze are usually local and low-dimensional, but are unable to capture complex systems such as turbulence.

\paragraph{Maximum Likelihood Estimation}
The next step is to optimize $\theta$ in order to best approximate $p$ by $p_\theta$.
The maximum likelihood estimator $\theta$ maximizes $\E_p (\log p_\theta)$, which
is equivalent to minimize the KL-divergence $KL(p , p_\theta)$.
If $\En_\theta(\data) =  \theta\trans \Phi (\data) $ then
this KL divergence is  a convex function of $\theta$. The maximum likelihood
estimator can thus
be calculated with a gradient-descent algorithm of step size $\epsilon$, which computes
\begin{equation}
\label{likelihood-grad}
\theta_{k+1} - \theta_k = \epsilon \Big(
\E_{p_{ \theta_k}}(\Phi) - \E_{p}(\Phi) \Big) .
\end{equation}
The maximum likelihood parameter $\theta$
satisfies $\E_{p_{\theta}}(\Phi) = \E_{p}(\Phi)$. The resulting $p_\theta$ 
is called a moment projection of $p$ \cite{bishop2006pattern}.
If the potentials of $\Phi$ are linearly independent, then it is uniquely defined if
it exists. One can verify that this moment projection is the
distribution $p_\theta$ which maximizes
the entropy ${\cal H}(p_\theta) =-\int p_\theta(\data)\,\log p_\theta(\data) \, d \data$ 
subject to the moment condition $\E_{p_{\theta}}(\Phi) = \E_{p}(\Phi)$ 
\cite{jaynes1957information}.

The expected value $\E_{p}(\Phi)$ is estimated from the $m$ samples $\data^{(i)}$ with a Monte Carlo sum $m^{-1} \sum_i \Phi(\data^{(i)}).$
The estimation of $\E_{p_{\theta_k}}(\Phi)$ with a Monte Carlo sum requires computing enough samples of $p_{\theta_k}$.
It can be done with a Langevin diffusion, but
it requires a considerable amount of computations since it must be run for a sufficiently long time so that the Langevin algorithm converges, and it must be repeated enough so that Monte Carlo sum converges. This is unfeasible if the Langevin convergence rate is too low.

\paragraph{Score Matching Estimation}
Optimizing  $p_{\theta} = {\cal Z}_\theta^{-1} e^{-\En_\theta}$ by minimizing the  $\mathrm{KL}$ divergence is computationally expensive because the gradient descent on $\theta$ includes the term $\nabla_{\theta} \log Z_{\theta} = -\expect[\pii_{\theta}]{\Phi(\data)}$.
Score matching \citep{hyvarinen2005estimation} is an 
appealing alternative. It eliminates the normalization constant ${\cal Z}_\theta$ by replacing the $\mathrm{KL}$  minimization by a minimization of the relative Fisher information, which depends on $\nabla_\data \log p$
\begin{align*}
    \I(p , p_\theta)
    &= \expect[p]{\frac12\| {\nabla_{\data} \log p_\theta (\data)} - \nabla_{\data}
      \log p (\data)\|^2}.    
\end{align*}
With an integration by parts, \citep{hyvarinen2005estimation} proves that it is equivalent to minimize a quadratic loss in $\theta$:
\begin{align}
\nonumber
    \ell(\theta) & = \expect[p]{\frac12\|\nabla_{{\data}} \En_{\theta} \|^2 - \Delta_{{\data}} \En_{\theta}} \\
    \label{eq:loss-score}
    & =
    \expect[p]{\frac12\| \theta\trans \nabla_\data \Phi(\data)  \|^2 -  \theta\trans \Delta_{{\data}}\Phi(\data)  } ~,
\end{align}
whose solution is obtained without sampling $p_\theta$
\begin{equation}
  \label{theta-calcul}
  \theta =  M^{-1}
  \expect[p]{ \Delta_{{\data}}\Phi(\data)  } ~~\mbox{with}~~
  M =  \expect[p]{ \nabla_\data \Phi(\data) \nabla_\data \Phi(\data)\trans }.
\end{equation}
Minimizing the score matching loss is thus much faster than a minimization of the KL divergence.
Expected values are estimated by en empirical sum over the $m$ samples
$\data^{(i)}$ of $p$. Estimation errors introduced by the inversion of $M$ must often be regularized,
which is done by adding $\epsilon \Id$ to $M$, where $\epsilon$ decreases like $m^{-1}$ for an estimation with $m$ samples.

Score matching is a consistent estimator, which means that if there exists a unique $\theta^*$ such that $p = p_{\theta^*} > 0$ then when $m$ goes to infinity, the minimizer $\theta$ of the score matching loss converges to $\theta^*$ \citep{hyvarinen2005estimation}. However, 
the relative precision of score matching relatively to the maximum likelihood estimation depends on if the KL divergence can be controlled by the relative Fisher information. This is captured by the log-Sobolev constant $c(p)$ defined in (\ref{log-sobolev}).
For exponential models $p_\theta$, Theorem 2 in \cite{koehler2022statistical} bounds the covariance of the score matching estimation  with the covariance of the maximum likelihood estimator multiplied by $c(p)^2$.  The bound involves a multiplicative constant which also depends upon the
regularity of $U_\theta$. Its amplitude can be approximated by the largest eigenvalue squared of the Hessian of $U_\theta$. 
Similarly to the Langevin relaxation time, 
the log-Sobolev constant needs to be normalized by this largest eigenvalue amplitude. 
A score matching thus achieves a comparable accuracy as a maximum likelihood estimator if the number $m$ of samples
is multiplied by this normalized log-Sobolev constant, which may be very large.

\paragraph{Multiscale Gaussian and non-Gaussian densities}  
If $p = {\cal Z}^{-1} e^{-U}$ is a zero-mean gaussian distribution, then its Gibbs energy is quadratic
\begin{equation}
\label{Gauss-energy}
\En(\data) =  \frac 1 2  \data\trans K \data ~~\Rightarrow~~~\Hess_\data \En(\data) = K \geq 0.
\end{equation}
The matrix $K$ is symmetric positive and its inverse $K^{-1} = C$ is the covariance matrix of $p$. The variance of $\data$ is normalized by imposing that ${\rm Trace}(C) = d.$
Let $\mu_{\max}$ be the largest eigenvalue of $C$.
The Bakry-Emery upper bound of the log-Sobolev constant in (\ref{lower-norm}) together with 
the lower bounds (\ref{sbol-covariance}) proves that 
\begin{equation}
\label{Gaussian-logSoblev}
c(p) =  \frac 1 2 \, \mu_{\max}  .
\end{equation}
We explained that the number of steps of a Langevin diffusion is proportional to a
normalized log-Sobolev constant, which is multiplied by the supremum of the eigenvalues of $\nabla_\varphi^2 U = K$. 
It is thus divided by the smallest eigenvalue $\mu_{\min}$ of the covariance $C = K^{-1}$, 
and hence proportional to the condition number $\mu_{\max}/\mu_{\min}$ of $C$. 
The inefficiency of score matching relative to maximum likelihood estimation also
depends on this normalized log-Sobolev constant and hence on the condition number of $C$.

If $p$ is stationary or has stationary increments, then $C$ is 
diagonalized in the Fourier basis. Multiscale fields are defined as random
fields having a power spectrum (covariance eigenvalues) 
which is of the order of $|\omega|^{-\eta}$ at each frequency $\om \neq 0$,
for some $\eta > 0$.
This is true for natural images \cite{Burton1987ColorAS,PhysRevLett.73.814} or physical fields such as $\varphi^4$. Numerically, $\eta$ can be estimated by a regression over the eigenvalues of the empirical covariance.
For example, if $K = - \Delta$ then $\data$ is a Brownian motion and $\eta = 2.$
The growth of eigenvalues at low frequencies means that
$\data$ has long range correlations.
In two dimensions, $\sqrt{2}\pi \geq |\om| \geq \pi d^{-1/2}$ so
$c(p) = \mu_{\max} / 2 \propto d^{\eta/2}$ grows with $d$,
which is not modified by the normalization.
If $p$ is non-Gaussian then, using Proposition \ref{prop:covar-lower}, we still have $c(p) \geq \mu_{\max} / 2$ which means
that it grows at least like $d^{\eta/2}.$
To eliminate the growth due to this bad-conditioning of the covariance, we must
separate different frequency bands where eigenvalues have different amplitudes. This is a key idea which motivates the use of wavelet transforms to compute a
hierarchical probability factorization with the renormalization group.

\section{Approximation, Learning and Sampling with Hierarchic Flows in Wavelet Bases}
\label{sec:prob-sep}

Sampling a probability density $p$ with a Langevin diffusion or estimating its Gibbs energy by score matching becomes unfeasible when the log-Sobolev constant grows with the dimension $d$. 
This is typically the case for large multiscale fields $\data$. 
This section shows that this difficulty may be avoided
if we decompose $p$ into a hierarchic flow of probabilities across scales with a wavelet transform, and if we renormalize wavelet coefficients to reduce log-Sobolev constants.

Section \ref{wavelet} reviews multiresolution approximations and wavelet transforms.
Section \ref{sec:cond-proba} explains that a hierarchic flow of probabilities 
 computes an inverse renormalization group introduced in
\cite{marchand_wavelet_2022}.
Sections \ref{sec:cond-estim} and \ref{sec:sampling}
reviews the estimation and sampling of the resulting probability models, with
score matching and Metropolis adjusted Langevin diffusions.
For stationary probabilities, model parameters satisfy a coupling flow equation given in Section \ref{sec:energy-model}.
Section \ref{sec:basis-choice} relates wavelet properties to the log-Sobolev constants of
transition probabilities.

\subsection{Multiresolution Approximations and Wavelets}
\label{wavelet}

Multiresolutions define coarse-graining approximations whose evolutions across scales depend upon decomposition
coefficients in a wavelet basis \cite{mallat1989theory,meyer_1993}. 
They can be extended to arbitrary data defined on a graph.
We begin from the construction of multiresolution approximation on graphs to show that hierarchic flows can be applied to general data structures. We then concentrate 
on images where we perform numerical experiments.

\paragraph{Multiresolution and wavelets on graphs}

We consider $\data \in \R^d$ defined by its $d$ values $\data(n)$ on the 
nodes of a graph. Multiresolution approximations compute coarse graining
approximations of $\data$ of progressively smaller dimensions, 
by iterating over coarse graining operators \cite{Kondor2022multiresolution}. 
Let us write $\data_0 = \data$.
For each $j > 0$, $\data_j$ is calculated from $\data_{j-1}$ with a coarse-graining averaging operator $G_j$
\begin{equation}
\label{dec1}
\data_j = G_j \data_{j-1}~.
\end{equation}
The rows of $G_j$ sum to $1$. It is a sparse matrix  
which averages groups of neighbor coefficients of $\data_{j-1}$. The dimension of $\data_j$ is proportional to $2^{-r j}$ where $r$ is the dimension of the graph. 
If $\data$ is an image and hence defined on a two-dimensional graph, then $r=2$. At a level $j$, each value of $\data_j$ is computed by
averaging values of $\data_0$ over groups of neighbor nodes in the graph, whose sizes are proportional to $2^{r j}$. It provides an approximation at the scale $2^j$.
The graph topology is the prior information allowing to build these
multiscale groups. In the simplest cases, the averaged groups are non-overlapping and the coarse graining defines a computational tree
\cite{gavish2010multiscale}. 
The operator $G_j$ can also be defined as a diagonal operator in the
orthogonal basis which diagonalizes the graph Laplacian \cite{hammond2011wavelets}. 
It can then be interpreted as a convolution on the graph,
which projects $\varphi$ on the lower-frequency eigenvectors. 

The coarse graining is inverted by also
computing the high-frequency variations of $\data_{j-1}$ which have been eliminated by the averaging operator $G_j$. For this purpose, we define a complement operator $\bar G_j$ whose rows are sparse with nearly
the same support as $G_j$, and such that
$\binom{G_j}{\bar G_j}$ is an invertible square matrix.
Wavelet coefficients are the high frequencies $\bar \data_j$ of
$\data_{j-1}$ computed by this complement
\begin{equation}
\label{dec12}
\bar \data_j = \bar G_j \data_{j-1}.
\end{equation}
They measure the variations of $\data_{j-1}$ over local neighborhoods where $G_j$ averages $\data_{j-1}.$
Let $(H_j , \bar H_j)$ be the inverse matrix of $\binom{G_j}{\bar G_j}$:
\begin{equation}
\label{conditions}
H_j \,G_j +  \bar H_j \bar G_j = Id ,
\end{equation}
It results from (\ref{dec1}) and (\ref{dec12}) that
$\varphi_{j-1}$ can be recovered from $(\varphi_j, \bar \varphi_j)$
\begin{equation}
\label{dec2}
\data_{j-1} = H_j \data_j + \bar H_j \bar \data_j .
\end{equation}
If $\left( {G_j} \atop {\bar G_j} \right)$ is an orthogonal matrix then
$H_j = G_j\trans$ and $\bar H_j = \bar G_j\trans$ and this decomposition is orthogonal. 

Cascading $(G_j,\bar G_j)$ from $\data$ computes a multiscale averaging
$\data_j = A_j \data$ and wavelet coefficients $\bar \data_j = W_j \data$ with
\begin{equation}
\label{wave-def22}
A_j = G_j\,G_{j-1}...\, G_1 ~~\mbox{and}~~ W_j = \bar G_j G_{j-1}...G_1  .
\end{equation}
The matrix $W = (A_J\,, W_J\,,\,...\, W_1)$ is invertible and computes a wavelet transform
at all scales. It is orthogonal if each $\left( {G_j} \atop {\bar G_j} \right)$ is orthogonal.

\paragraph{Multiresolution of Images}
Images are sampled on a uniform grid which is a Euclidean graph of dimension $r =2.$
Fast wavelet transforms \cite{Mallat1989ATF} are calculated with convolutions and subsampling operators on this graph,
which do not depend on $j$. The operators 
$G_j  = G$ and $\bar G_j = \bar G$ are defined by
\begin{equation}
\label{convol-subsampl}
G \data (n) = \data * g (2n)~~\mbox{and}~~\bar G \data (n) = \data * \bar g (2n)~,
\end{equation}
$g$ being a low-pass filter which averages neighbor pixels in the image.
The complement $\bar g = (\bar g_k )_{1 \leq k \leq 3}$ is composed of
$3$ separable high-pass filters which compute the image variations over the same neighborhood. 
The inverse operators $H$ and $\bar H$ inserts zeros between each coefficient 
of $\data_j$ and $\bar \data_j$ before computing convolutions with dual bi-orthogonal filters $h$ and $\bar h$ \cite{doi:10.1137/1.9781611970104}.
This fast wavelet transform is illustrated in Figure \ref{WaveletImage}. Appendix \ref{app:wavelets} reviews the
properties of conjugate mirror filters
$(g,\bar g)$ which define an orthogonal matrix $\left( {G} \atop {\bar G} \right)$. It implies that $h(n) = g(-n)$ and $\bar h(n) = \bar g(-n).$ 
All numerical applications are computed with conjugate mirror filters, but this is not a required condition to define a hierarchic probability flow.

\begin{figure}
\centering
\includegraphics[width=0.65\textwidth]{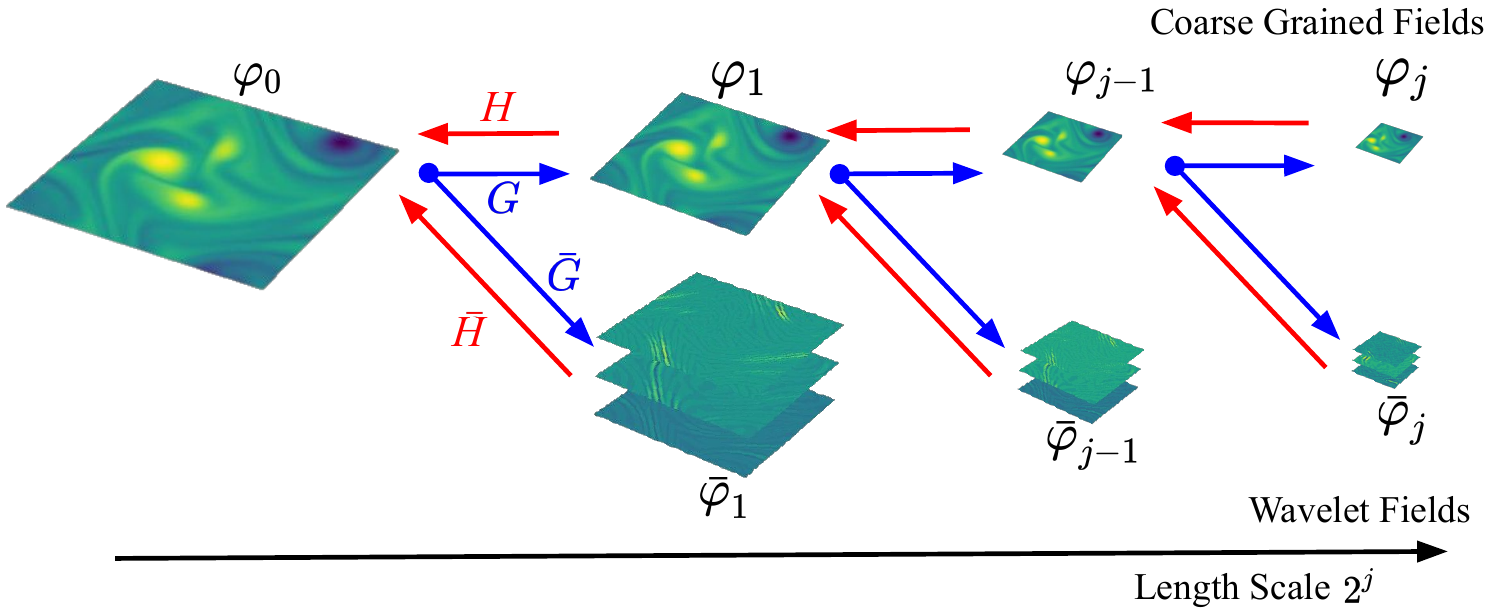}
\caption{A fast wavelet transform iteratively decomposes an image approximation
$\data_{j-1}$ into a coarser approximation $\data_{j}$ with a sub-sampled
low-pass filtering $G$,  and  $3$ wavelet coefficient images $\bar \data_{j}$. They are calculated by $\bar G$ with subsampled convolutions with $3$ band-pass filters along different orientations. A finer scale image $\data_{j-1}$ is reconstructed from $(\data_j , \bar \data_j)$
with the inverse operator $(H,\bar H)$. }
\label{WaveletImage}
\end{figure}

The operators $A_j$ and $W_j$ in (\ref{wave-def22}) are a cascade of $j$ convolutions 
and sub-samplings by $2$. They are thus convolutional operators, followed by a subsampling by $2^j$. Coarse-grained images and wavelet coefficients can therefore
be written as convolutions with a scaling filter $\phi_j$ and wavelets $\psi_{j,k}$ subsampled by $2^j$:
\begin{equation}
\label{filters}
\data_j = \big(\data * \phi_j (2^j n)\big)_{n}~~\mbox{and}~~
\bar \data_j = \big( \data * \psi_{j,k} (2^j n) \big)_{k \leq 3, n}.
\end{equation}
These scaling filters and wavelets are specified by
the filters $(g,\bar g)$ as explained
in Appendix \ref{app:wavelets}.
The support width of $\phi_j$ and $\psi_{j,k}$ is proportional to  $2^j$.
The Fourier transform $\hat \psi_{j,k}$ of each wavelet $\psi_{j,k}$
is dilated by $2^{-j}.$ These Fourier transforms are essentially localized in 
frequency annuli illustrated in Figure \ref{fig:wavelet-subd}(a), around the lower frequencies covered by $\hat \phi_j$.

The rows of the wavelet transform $W =(A_J\,, W_J\,,\,...\, W_1)$
are translated wavelets at all scales, which define a basis of $\R^d$. It is an 
orthogonal basis if $G$ and $\bar G$ are conjugate mirror filters. 
To study asymptotic properties of wavelet coefficients 
when $d$ goes to $\infty$, we need to control the convergence of these bases.
If we renormalize the support of $\data$ to $[0,1]^2$ then
discrete orthonormal wavelet bases converge to wavelet orthonormal 
bases of ${\bf L^2}([0,1]^2)$.
When the image size $d$ goes to $\infty$ then normalized discrete wavelets $\psi_{j,k}$ converge to wavelet functions
$\psi_{j,k}(x) = 2^{-j} \psi_k(2^{-j} x)$, such that
$\{ x\mapsto \psi_{j,k} (x- 2^j n) \}_{j \leq 0,~n \leq 2^{-j},~k\leq 3}$ 
is an orthonormal basis of ${\bf L^2}([0,1]^2)$ \cite{Mallat}.

\begin{figure}
\centering
\includegraphics[width=0.7\textwidth]{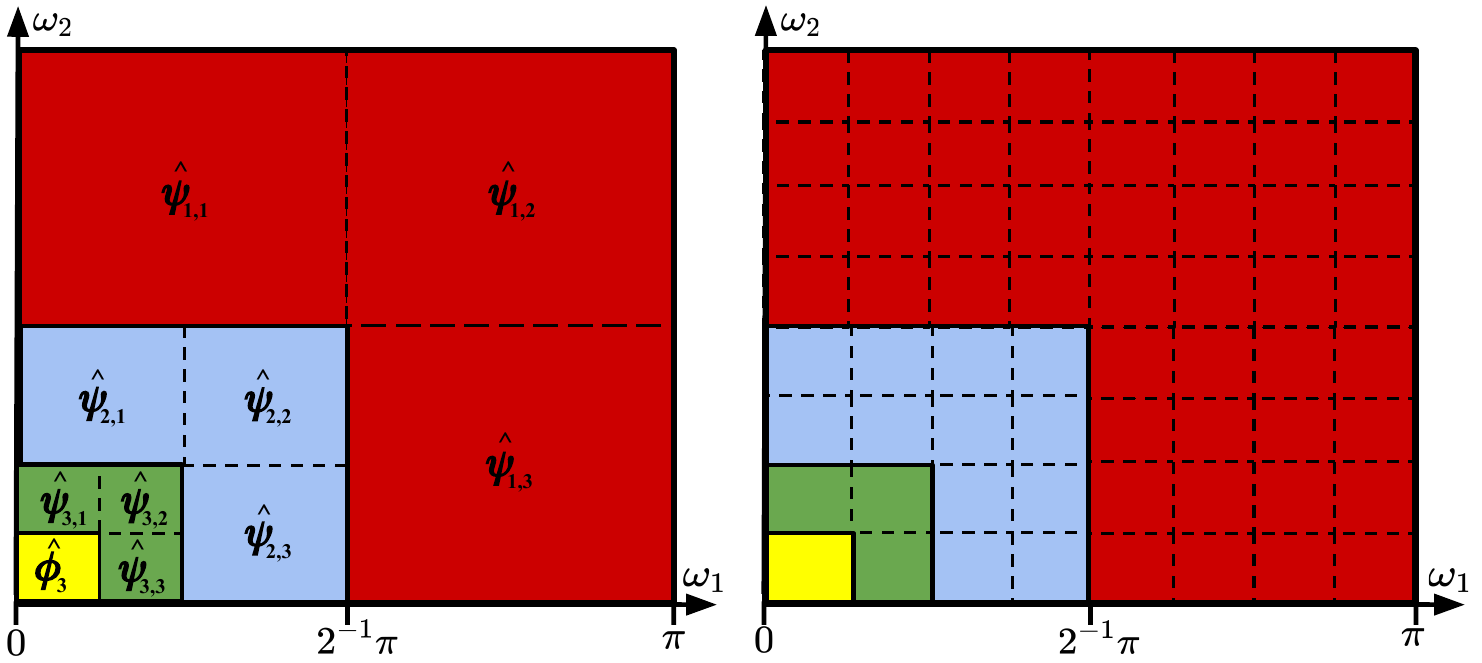}

~~~~~(a)~~~~~~~~~~~~~~~~~~~~~~~~~ ~~~~~~~~~~~~~~~~~~~~~~~~~~~~~~~~~~~(b)~~~~~~
\caption{(a): Frequency supports of Fourier transforms of two-dimensional wavelets $\hat \psi_{j,k}(\omega_1,\omega_2)$ for $1 \leq k \leq 3$ over $3$ scales $2^j$.
  (b): Frequency subdivisions of wavelet packets over $a_j$ levels at
  each scale $2^j$, with $a_1 = 2$, $a_2 = 1$ and $a_3 = 0$. }
\label{fig:wavelet-subd}
\end{figure}

\paragraph{Wavelet packets} 
The frequency resolution of wavelets 
can be improved with wavelet packet bases introduced in \cite{coifman1992wavelet}.
A wavelet packets transform sub-decomposes the frequency support of each 
wavelet. 
This is done by iteratively applying $a_j$ times the convolutions and subsampling $\left( G \atop \bar G \right)$, 
after applying $\bar G$ to $\data_{j-1}$. It computes
$\bar \varphi_j = \bar G_j \varphi_{j-1}$ with 
\begin{equation}
\label{waveletpacksn}
\bar G_j =  \left( G \atop \bar G\right)^{a_j}\, \bar G .
\end{equation}
The matrix $\left( G \atop \bar G_j \right)$ is 
orthogonal if $\left( G \atop \bar G \right)$ is orthogonal. It then computes decomposition coefficients in an
orthogonal basis of wavelet packet vectors.
The filter (\ref{waveletpacksn}) performs 
a frequency subdivision of each wavelet frequency
band into $2^{2 a_j}$ bands, illustrated in Figure \ref{fig:wavelet-subd}(b).
These wavelet packets thus 
have a Fourier transform which are concentrated
on square domains, which are $2^{a_j}$ times more narrow than for wavelets. However, 
the spatial support of these wavelet packets is $2^{a_j}$ times larger than for wavelets. 
Properties of wavelet packets are studied in \citep{coifman1992wavelet}.

\subsection{Hierarchic Flows and Renormalization Group}
\label{sec:cond-proba}

We now consider a random vector $\data$ defined on a graph whose probability
density is $p(\data)$. If $p$ has a large log-Sobolev constant, we avoid estimating and sampling $p$ directly. A hierarchic representation of $p$ is defined 
as a product of conditional probabilities of wavelet coefficients  \cite{marchand_wavelet_2022}, whose log-Sobolev
constants are reduced by renormalizing wavelet coefficients. 
This hierarchic representation is calculated as an inverse of the renormalization group
transformation.

\paragraph{Forward renormalization}
The renormalization group of Kadanoff \cite{kadanoff1976variational} and Wilson \cite{wilson1971renormalization} computes $p_j (\data_j)$ 
at each scale $2^j$, given $p(\data)$ at the finest scale. It iteratively computes $p_j$ from $p_{j-1}$
with a marginal integration over the high frequency 
degrees of freedom.
The use of an appropriate coordinate system to compute
this marginal integration
has been thoroughly studied \citep{Delamotte_2012,battle1999wavelets}.
Although they were not introduced with wavelets, Kadanoff's and Wilson's renormalization group have been revisited from this perspective by Battle and others \cite{battle1987ondelettes,battle1999wavelets}.
Kadanoff renormalization group \citep{kadanoff1976variational} computes $\data_j$ from $\data_{j-1}$ with a block averaging, 
which amounts to define $\bar \data_j$ as orthogonal wavelet
coefficients in the Haar basis, which has a minimal spatial support but is discontinuous.   
In an early publication \citep{wilson1971renormalization}, Wilson
introduces a decomposition, later identified as a decomposition in a Shannon
wavelet orthogonal basis. Shannon wavelets have a Fourier transforms
which is discontinuous, with a minimum frequency support. These wavelets
thus have a slow spatial decay. Their properties are reviewed in Appendix \ref{app:wavelets}. 
Other wavelet bases adapted to specific classes of Hamiltonians have been designed by Battle \citep{battle1999wavelets}, but are difficult to compute. 

Following \cite{marchand_wavelet_2022}, 
we represent the high frequencies of $\varphi_{j-1}$
with wavelet coefficients $\bar \data_j$.
Wavelet coefficients $\bar \data_j$ are normalized 
by dividing each coordinate $\bar \data_j(n)$ 
by its standard deviation $\sigma_{j,i}$.
Let $D_j = \diag (\sigma_{j,i}^{-1})_i$ be the corresponding
diagonal matrix. Normalizing $\bar \data_j$
is equivalent to replace
$\bar G_j$ by $D_j \bar G_j$ and $\bar H_j$ by $\bar H_j D_j^{-1}$.

Since $\data_{j-1} = H_j \data_j +  \bar H_j \bar \data_j$ we have
$d \data_{j-1} = w_j d \data_j d \bar \data_j$ 
where $w_j = |{\rm det}(H_j , \bar H_j)|$ is the Jacobian modulus.
The marginal integration of high-frequencies in this wavelet basis can be written
\begin{equation}
\label{margin-int}
p_j(\data_j) = w_j \,\int p_{j-1}(\data_{j-1})\, d \bar \data_j .
\end{equation}

\paragraph{Hierarchic flow as an inverse renormalization}
At a large scale $2^J$, 
$\data_J$ is of sufficiently low dimension so there is no difficulty
to estimate $p_J(\data_J)$ from data, and to sample this probability density.
A hierarchic flow is an inverse renormalization group transformation which 
maps $p_J$ into $p$ with a Markov chain. 
It computes $p_{j-1} (\data_{j-1})$ by multiplying $p_j (\data_j)$ with
a conditional probability of $\data_{j-1}$ given $\data_j$, which is also equal to
 the conditional density of $\bar \data_j$ given $\data_j$
\begin{equation}
\label{probcond}
p_{j-1}(\data_{j-1}) = w_j^{-1} \,p_j (\data_j)\, \bar p_j (\bar \data_j | \data_j) .
\end{equation}
Cascading (\ref{probcond}) transforms $p_J$ into
$p$ with transition kernels defined
by these conditional probabilities
\begin{equation}
\label{factor-proba}
p (\data) = w^{-1}\, p_J (\data_J) \prod_{j=1}^J \bar p_j (\bar \data_j | \data_j ) ~~\mbox{with}~~w = \prod_{j=1}^J w_j ~.
\end{equation}
This inverse wavelet renormalization group \cite{marchand_wavelet_2022} computes
$p$ starting from $p_J$.

\subsection{Estimation of a Conditional Probability Model}
\label{sec:cond-estim}

Given $m$ samples $\{ \data^{(i)} \}_{i \leq m}$ of $p$, 
a model of $p$ is estimated with the hierarchical factorization
(\ref{factor-proba}), from exponential models of each conditional probability. 
Each conditional probability model is estimated from data by score matching, whose precision
depends upon its log-Sobolev constant.

\paragraph{Hierarchic model}
An exponential model $p_\theta$ of $p$ is defined from exponential models 
$p_{\theta_J}$ and $\bar p_{\bar \theta_j}$ 
of $p_J$ and $\bar p_j$:
\begin{equation}
\label{final-model}
p_{\theta} (\data) = w^{-1}\, p_{\theta_J} (\data_J) \prod_{j=1}^{J} \bar p_{\bar \theta_{j}}(\bar \data_{j}|\data_j) .
\end{equation}

An exponential model of $p_J$ is defined as in (\ref{linear-decomp}) by
\begin{equation}
\label{internasf2}
p_{\theta_J}(\data_J)  = {\cal Z}_J^{-1} e^{-\theta_J\trans \Phi_J(\data_J)} .
\end{equation}
We choose $J$ large enough so that $\data_J$ is sufficiently low-dimensional to easily compute this estimation. For any $j \leq J$, an exponential model of $\bar p_j$ is defined by 
\begin{equation}
\label{internasf}
\bar p_{\bar \theta_j} (\bar \data_j | \data_j) = e^{F_j (\data_j) - \bar \theta_{j}\trans \Psi_j (\data_{j-1})}~,
\end{equation}
where $F_j$ is a free energy which normalizes the conditional probability:
\begin{equation}
\label{normalisat-eq}
\int \bar p_{\bar \theta_j} (\bar \data_j | \data_j)\,  d \bar \data_j = 
  e^{F_j (\data_j)} \int e^{- \bar \theta_{j}\trans \Psi_j  (\data_{j-1})}\, d \bar \data_j  = 1.  
\end{equation}
Each free energy $F_j$ is specified by $\bar \theta_j$, but it does
need to be computed to estimate $\bar \theta_j$ or sample $\bar p_{\bar \theta_j}$.

The model $p_\theta = {\cal Z}_\theta^{-1} e^{-U_\theta}$ 
has a Gibbs energy 
\begin{equation}
\label{modelwithfree}
\En_{\theta} = \theta_J\trans \Phi_J + \sum_{j=1}^{J} \big(\bar \theta_{j}\trans \Psi_j - F_j \big).
\end{equation}
Section \ref{sec:energy-model} explains that it can be calculated with a coupling
flow equation, which requires regressing each $F_j$.

\paragraph{Score Matching}
A maximum likelihood estimation computes
$\theta = (\theta_J , \bar \theta_j)_{j\leq J}$ by minimizing
$KL(p,p_{\theta})$.
It results from the factorization (\ref{factor-proba}) of $p$
and (\ref{final-model}) of $p_\theta$ that
\begin{equation}
\label{model-error2}
KL(p , p_{\theta}) = KL (p_J , p_{\theta_J}) + \sum_{j=1}^{J} \, \E_{p_j} \big(KL (\bar p_{j} , \bar p_{\bar \theta_{j}})\big) .
\end{equation}

The minimization of $KL(p , p_{\theta})$ is thus obtained by minimizing 
each $\E_{p_j} \big(KL (\bar p_{j} , \bar p_{\bar \theta_{j}})\big)$.

As explained in Section \ref{sec:estim-gibbs},
minimizing a $KL$ divergence with a gradient descent is computationally
expensive because it requires computing normalization constants.
We thus optimize $\{\theta_J , \bar \theta_j\}_j$ by score matching, which
replaces the $KL$ divergence by a relative Fisher information. 
The parameters $\theta_J$ are calculated 
by minimizing the Fisher information $\I(p_J , \bar p_{\theta_J})$,
which amount to minimize a quadratic loss (\ref{eq:loss-score}).
Interaction parameters $\bar \theta_j\trans$ are calculated 
by minimizing the averaged Fisher information 
\[
\expect[p_j] {\I (\bar p_{\bar \theta_j} , \bar p_j  )} =  \E_{p_{j-1}}
\Big(\|\nabla_{\bar \data_{j}} \log \bar p_{j} ( \bar \data_{j}|\data_{j}) - \nabla_{\bar \data_{j}} \log \bar p_{\bar \theta_{j}}
( \bar \data_{j} | \data_{j})\|^2 \Big).
\]
The score gradient is computed on $\bar \data_j$ for $\data_j$ fixed,
and the Fisher information does not depend on the free energy $F_j$ .
Appendix \ref{app:score-matching} shows that 
$\bar \theta_j$ is also a solution of a quadratic minimization.

\paragraph{Conditional log-Sobolev constants}
An upper bound of
$\E_{p_j} (KL (\bar p_{j} , \bar p_{\bar \theta_{j}}))$ computed from
$\E_{p_j} ( \I(\bar p_j , \bar p_{\bar \theta_j}))$  depends on a
log-Sobolev constant $c(\bar p_j )$ averaged over $p_j$.
The conditional log-Sobolev constant $c(\bar p_j)$ of $\bar p_j(\bar \data_j | \data_j)$ is defined as
the smallest constant so that for any conditional probability $\bar q(\bar \data_j | \data_j)$
\begin{equation}
\label{log-sobolev2}
\E_{p_j} \big( KL(\bar q , \bar p_j ) \big)\leq c(\bar p_j)\, \E_{p_j} \big({\cal I} (\bar q , \bar p_j ) \big).
\end{equation}

One can apply Theorem 2 in \cite{koehler2022statistical} to prove
that the number $m$ of samples needed for a score matching to achieve a comparable accuracy as a Kullback-Leibler divergence minimization 
is asymptotically multiplied by this conditional log-Sobolev constant.
Section \ref{sec:estim-gibbs} shows that for exponential models, 
minimizing a KL divergence is equivalent to match moments, and hence that
minimizing $\E_{p_j} \big(KL (\bar p_{j} , \bar p_{\bar \theta_{j}})\big)$
is equivalent to find $\bar \theta_j$ such that
\[
\E_{p_j} \E_{\bar p_{\bar \theta_j }} (\Psi_j) = \E_{p_{j-1}} (\Psi_j ).
\]
We can thus evaluate the numerical precision of
score matching estimators from this moment matching condition.

\subsection{Hierarchic Sampling}
\label{sec:sampling}

\paragraph{Sampling with a discrete stochastic equation}
A sample $\data $ of a hierarchic model $p_{\theta}$ 
of $p$ is calculated from coarse to fine scales, by first sampling $p_{\theta_J}$ and then iteratively sampling each conditional probabilities
$\bar p_{\bar \theta_j}$. These probability densities are sampled with a Metropolis Adjusted Langevin diffusion \cite{grenander1994representations,roberts1998optimal}, and we relate the rate of convergence of the unadjusted dynamic to log-Sobolev constants. This is further developed in Appendix \ref{sec:mixingtimelangevin}.

A hierarchic sampling computes a sample $\data$ of $p_{\theta}$ as follows.\\
\begin{itemize}
\item Initialization: compute a sample $\data_J$ of $p_{\theta_J}$. 
\item For $j$ from $J$ to $1$, given
$\data_j$ compute a sample $\bar \data_j$ of $\bar p_{\bar \theta_j}( \cdot | \data_j)$
and set  $\data_{j-1} = H_j \data_j + \bar H_j \bar \data_j$.\\
\end{itemize}

The sample $\data = \data_0$ of $p_{\theta}$ is thus calculated by iterating on a discrete stochastic equation, which recovers
$\data_{j-1}$ from $\data_{j}$ by sampling 
the conditional probabilities $\bar p_{\bar \theta_j}( \cdot| \data_j)$. 

\paragraph{Langevin sampling of conditional probabilities}
The wavelet conditional probability can be sampled with a
Langevin (or MALA) algorithm, which does not depend upon the normalization free energy $F_j$.
Langevin diffusions have an exponential convergence if their log-Sobolev constants are uniformly bounded.
To simplify explanations, 
we neglect the model approximation error, so $p_{\theta_J} = p_J$ and $\bar p_{\bar \theta_j} = \bar p_j$.
For a fixed $\data_j$, a Langevin diffusion approximates $\bar p_j(.|\data_j)$ by $\bar p_{j,t}$ after time $t$. The product $p_{t} = w^{-1} p_{J,t} \prod_{j=1}^J \bar p_{j,t}$ defines
an approximation of $p$.
The KL divergence error between $p$ and $p_{t}$ can be decomposed as a sum
\begin{equation}
\label{model-error0}
KL(p_{t} , p) = KL (p_{J,t} , p_{J}) + \sum_{j=1}^{J} \E_{p_{j,t}} \big(KL (\bar p_{j,t} , \bar p_{j})\big) .
\end{equation}

The decay of $KL(p_{J,t},p_{J})$ is driven by the log-Sobolev constant $c(p_J)$,
according to (\ref{exp-dec}). 
The same result applies to the conditional probabilities $\bar p_j $ if we incorporate
the expectation in $p_{j}$.

Similarly to (\ref{exp-dec}),
De Bruin identity (\ref{deBruin}) implies an exponential convergence
of the expected KL divergence:
\begin{equation}
\label{exp-dec22}
\E_{p_j} \big(KL(\bar p_{j,t} , \bar p_j)\big) \leq e^{-t/c(\bar p_j)} \E_{p_j} \big( KL(\bar p_{j,0} , \bar p_j) \big)~.
\end{equation}
Notice that the expected value is in $p_j$ and not in $p_{j,t}$ as in 
(\ref{model-error0}) but they converge to the same value because 
$p_{j,t}$ converges to $p_j$ when $t$ increases.

\paragraph{Sampling with a stochastic differential equation}
This discrete hierarchic sampling is defined from an inverse renormalization group computed in a wavelet basis, over discrete 
dyadic scales. Continuous hierarchic sampling algorithms can  also be
calculated with a renormalization group resulting from a coarse graining over a continuum of scales. Such a coarse graining defines a continuous transport of the probability distribution. It 
can be inverted with a Stochastic Differential Equation (SDE),
which sample $\varphi$ given $\varphi_J$
\cite{cotler2023renormalizingdiffusionmodels}.
The continuous coarse graining may be defined in a Fourier basis by a progressive elimination of the highest frequencies as defined
by the Polchinski's renormalization \cite{polchinski1984renormalization}.
The inverse SDE computes a sample of $p$ from $\varphi_J$
by updating high frequency Fourier coefficients.
However, the drift term of this SDE depends upon the interactions of
high frequencies with lower frequencies that are usually 
non-local and hence difficult to estimate. We shall see that
this is the case for simple models such as $\varphi^4$, which include
scalar potentials. Scalar potentials define interactions that are local in space and  hence delocalized across frequencies. 

One may also define a continuous renormalization group with a diffusion operator which computes a non-orthogonal continuous wavelet transform over a continuous range of scales \cite{Carosso_2020}.
The inverse transport is also computed with an SDE
\cite{cotler2023renormalizingdiffusionmodels}. However, the drift
term of this equation requires to 
estimate the coupling parameters of the field
at each intermediate scale. The
SDE computations must also be calculated on the fine spatial grid of $\varphi$, because there is no underlying orthogonal basis.
The use of such redundant representations
become computationally prohibitive for large size fields.

\paragraph{Score diffusion sampling}
More efficient computational strategies have been developed with
a discrete wavelet hierarchic sampling, 
by replacing the Langevin sampling of $\bar p_{\bar \theta_j} (.|\varphi_j)$ with score diffusion \cite{song2020score} or a stochastic interpolant \cite{albergo2023stochasticinterpolantsunifyingframework}. 
Score diffusion samples are calculated
by inverting an Ornstein-Uhlenbeck SDE which transports $\bar \varphi_j$
into a Gaussian white noise. All calculations are performed
on the reduced grid spatial of $\bar \varphi_j$.
It requires to estimate the gradient of the log probability 
of wavelet coefficients $\bar \varphi_j$ contaminated by
Gaussian white noises.
It has been shown that score functions can then be estimated with deep neural networks having local receptive fields
\cite{kadkhodaie-local-conditional-models}, even for complex
random processes such as human faces.
This combination of the
inverse renormalization group  with score diffusion also
accelerates computations of standard score diffusion models
\cite{guth_wavelet_2022}. State of the art image generation results
have been obtained with such multiscale score diffusion 
algorithms \cite{masuki2025generativediffusionmodelinverse,batzolis2022nonuniformdiffusionmodels,sheshmani2024renormalizationgroupflowoptimal}.

To simplify the mathematical analysis through log-Sobolev constants,
in this paper we shall use a Langevin diffusion to sample
$\bar p_{\bar \theta_j}$. It also avoids learning
a deep neural network, which requires a large training data-set.

\subsection{Coupling Flow Equation of Stationary Energy Models}
\label{sec:energy-model}

In some applications, the Gibbs energy model $U_\theta$  \cref{modelwithfree} of $p_\theta$ needs to be explicitly
calculated. For example, to compute high-dimensional integrals with
Monte Carlo reweighing  \cite{gabrie2022adaptive}, or to analyze the interaction properties of a physical system. It then requires regressing the free
energies $F_j$ over predefined potential vectors.
We define hierarchical potentials, allowing to build energy
models whose dimensions increase with the field size. This is needed to approximate complex fields having progressively more degrees of freedom when their resolution increases.
For stationary probabilities, we prove that $U_\theta$ can then be calculated
with a discrete coupling flow equation.

\paragraph{Energy calculation} 
Each conditional probability model
$\bar p_{\bar \theta_j} (\bar \data_j | \data_j) = e^{F_j (\data_j) - \bar \theta_{j}\trans \Psi_j (\data_{j-1})}$ involves a free
energy $F_j(\data_j)$ that is approximated by a linear regression
$\alpha_j\trans\Phi_j (\data_j)$. 
Inserting this approximation in the energy model 
 (\ref{modelwithfree}) gives
\begin{equation}
\label{modelwithfree2}
\En_{\theta} = \theta_J\trans \Phi_J + \sum_{j=1}^{J} \big(\bar \theta_{j}\trans \Psi_j - \alpha_j\trans \Phi_{j} \big).
\end{equation}
The  coefficients 
$\alpha_j$ are calculated from the conditional probability normalization (\ref{normalisat-eq}), up to an additive constant
\begin{equation}
\label{normalisat-eq2}
 e^{\alpha_j\trans \Phi_j (\data_j)}\int e^{- \bar \theta_{j}\trans \Psi_j  (\data_{j-1})}\, d \bar \data_j  \approx \mbox{cst} .  
\end{equation}
Appendix \ref{app:free} shows that calculating the gradient along $\data_j$ 
allows us to compute $\alpha_j$ by minimizing a quadratic form.

\paragraph{Stationary models}
If $p(\data)$ is stationary and hence has a Gibbs energy $\En(\data)$ which is
invariant to translations on the sampling grid of $\data$
then  $p_j (\data_j)$ is invariant by translation on the coarser
sampling grid of $\data_j$. Hierarchic models in wavelet  bases are not strictly stationary because 
a wavelet orthonormal wavelet basis is not invariant by translations. To define a stationary model, we iteratively project the hierarchic model over translation invariant functions.

 Let $T_\tau$ be a translation of $\data_{j-1}$ by $\tau$ (modulo periodic boundary conditions).  A projection of $f(\varphi_{j-1})$ over translation invariant functions of $\data_{j-1}$ is computed by averaging its values over all the translations of $\data_{j-1}$
on its grid ${\cal G}_{j-1}$ of size $|{\cal G}_{j-1}|$
\begin{equation}
\label{proj-fun}
    (\Ave_{j-1} f) \,(\data_{j-1}) = \frac{1}{|{\cal G}_{j-1}|}
    \sum_{\tau \in {\cal G}_{j-1}} f( T_\tau \data_{j-1}).
\end{equation}
If $f(\data_j)$ is a function of $\data_j = G \data_{j-1}$ then 
we write $\Ave_{j-1} f = \Ave_{j-1} f_G$ with
$f_G (\data_{j-1}) = f(G \data_{j-1})$.
If $f(\data_j)$ is invariant to translations of $\data_j$  on its
grid ${\cal G}_j$ then the sum (\ref{proj-fun})
can be reduced to the $4$ translations $\tau \in {\cal G}_{j-1} / {\cal G}_{j}$.
The following theorem proves that this translation invariant projection of Gibbs energies reduces the Kullback-Leibler error on stationary densities.

\begin{proposition}
\label{th:translat}
Let $p(\data_{j-1})$ be a stationary density. If $q(\data_{j-1})$ is a density of energy $\En$ and if $\tilde q(\varphi_{j-1})$ is the density of energy $\Ave_{j-1} \En$ then
\begin{equation}
\label{KL-diff}
KL(p , \tilde q) \leq KL(p , q) .
\end{equation}
\end{proposition}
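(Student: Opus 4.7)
The plan is to expand both KL divergences into (entropy) $+$ (energy expectation) $+$ (log-partition) terms and show that the first two terms coincide because $p$ is stationary, so that the whole inequality reduces to comparing the two partition functions via Jensen.

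More concretely, writing $q = Z_q^{-1} e^{-U(\varphi_{j-1})}$ and $\tilde q = Z_{\tilde q}^{-1} e^{-\Ave_{j-1} U(\varphi_{j-1})}$, I would first record the identity
\begin{equation*}
KL(p,q) = -\HH(p) + \E_p[U] + \log Z_q,
\end{equation*}
and the analogous one for $\tilde q$. The first step is then to use the stationarity of $p$: for every translation $T_\tau$ on the grid $\cal G_{j-1}$, $\E_p[U\circ T_\tau] = \E_p[U]$, hence $\E_p[\Ave_{j-1} U] = \E_p[U]$. Thus the entropy and energy terms cancel in the difference $KL(p,\tilde q) - KL(p,q)$, leaving
\begin{equation*}
KL(p,\tilde q) - KL(p,q) \;=\; \log Z_{\tilde q} - \log Z_q .
\end{equation*}

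The second step is to prove $Z_{\tilde q} \leq Z_q$. Since $x \mapsto e^{-x}$ is convex, Jensen's inequality applied to the averaging $\Ave_{j-1}$ gives
\begin{equation*}
e^{-\Ave_{j-1} U(\varphi_{j-1})} \;\leq\; \frac{1}{|\cal G_{j-1}|}\sum_{\tau\in \cal G_{j-1}} e^{-U(T_\tau \varphi_{j-1})}.
\end{equation*}
Integrating over $\varphi_{j-1}$ and using the fact that each translation $T_\tau$ has unit Jacobian (so the change of variable $\varphi_{j-1}\mapsto T_\tau\varphi_{j-1}$ preserves Lebesgue measure, provided periodic boundary conditions make $T_\tau$ a bijection on the ambient domain), each of the $|\cal G_{j-1}|$ integrals equals $Z_q$, yielding $Z_{\tilde q}\leq Z_q$. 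Combining with the previous step finishes the proof.

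The computation is essentially routine; the only subtle point is to make sure the change of variable $\varphi_{j-1}\mapsto T_\tau \varphi_{j-1}$ is well-defined as a measure-preserving bijection, which is the reason one works with periodic boundary conditions as stated in the definition of $\Ave_{j-1}$. Equality in Jensen, and hence in the proposition, holds precisely when $U$ itself is already translation-invariant, which is the intuitive sanity check.
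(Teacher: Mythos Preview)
Your proof is correct and follows the same overall architecture as the paper's: use stationarity of $p$ to equate the energy expectations, reduce the inequality to $Z_{\tilde q}\le Z_q$, and establish the latter via a convexity inequality together with the measure-preserving change of variables $\varphi_{j-1}\mapsto T_\tau\varphi_{j-1}$. The only difference is the tool used for the partition-function bound: you apply Jensen's inequality pointwise to $e^{-x}$ and then integrate, whereas the paper writes $\tilde Z=\int\prod_\tau \big(e^{-U(T_\tau\varphi)}\big)^{1/|\mathcal G_{j-1}|}\,d\varphi$ and bounds this by iterated H\"older; both are essentially the AM--GM inequality in disguise, and your version is arguably the more direct of the two.
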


The proof is in Appendix \ref{app:translat}.
This proposition proves that energy models of stationary probabilities are improved by the projection (\ref{proj-fun}) on translation invariant functions.

\paragraph{Coupling flow equation with hierarchic potentials}
We introduce
hierarchic stationary models where coupling parameters
can be calculated with a  flow equation from coarse to fine scales, using data to estimate each term. It inverts the renormalization group coupling flow equation, which goes from fine to coarse scales \cite{Delamotte_2012}.

At the largest scale $2^J$, we have computed a model $\En_{\theta_J}$
of the Gibbs energy of $p_J.$ At each scale $2^j$, we can compute an approximation $\En_{\theta_{j-1}}$ of the Gibbs energy of $p_{j-1}$
from an approximation $\En_{\theta_{j}}$ of the Gibbs energy of $p_{j}$,
by adding the interaction energy model ${\bar \theta_j}\trans \Psi_j -  \alpha_j\trans \Phi_j$ of $\bar p_j = p_{j-1} / p_j$. Proposition \ref{th:translat} proves that the projection $\Ave_{j-1}$ reduces the approximation error.
A translation invariant Gibbs energy model having a reduced error is thus
\begin{equation}
\label{iterations10}
\En_{\theta_{j-1}}  = \Ave_{j-1}(\En_{\theta_j}  + {\bar \theta_j}\trans \Psi_j - \alpha_j\trans \Phi_j ). 
\end{equation}
The following definition imposes a hierarchic condition on $\Phi_{j-1}$ so that $\theta_{j-1}$ can be calculated from $(\theta_j, \bar \theta_j, \alpha_j)$ with a linear equation.

\begin{definition}
\label{def:embedded-potential}
We say that $\{\Phi_j \}_{0 \leq j \leq J}$ are hierarchic stationary with interactions
$\{\Psi_j \}_{1 \leq j \leq J}$ if all $\Phi_j (\data_j)$ are invariant to translations of $\data_j$ and 
 if there exists a linear operator $Q_{j}$ such that
\begin{equation}
\label{multis-ansatz}
 \Ave_{j-1} (\Phi_j , \Psi_j) = Q_{j} \Phi_{j-1} ~.
\end{equation}
\end{definition}

This definition generalizes renormalizable models which are self-similar and
have the same dimension at all scales. hierarchic stationary potentials are constructed by
progressively incorporating new interaction potentials $\Psi_j$ for each $j$. The dimension
of $\Phi_j$ is therefore increasing as the scale $2^j$ decreases.
This generalization will allow us 
to build potential vectors that can approximate energies of complex fields in Section \ref{sec:scat-cov}.

For hierarchy stationary models, the following proposition derives that
the parameter vector $\theta_{j-1}$ of 
$ \En_{\theta_{j-1}}$ satisfies a linear 
equation, which relates it to $(\theta_{j},\bar \theta, \alpha_j)$.

\begin{proposition}
\label{prop:couplingflow}
If $\{ \Phi_j \}_{j\leq J}$ are hierarchic stationary 
satisfying (\ref{multis-ansatz}) then the Gibbs energy $\En_{\theta_{j-1}}$ in
(\ref{iterations10}) is given by $\En_{\theta_{j-1}} =  \theta_{j-1}\trans \Phi_{j-1}$
with
\begin{equation}
\label{prop-eq3}
\theta_{j-1} =   Q_{j}\trans(\theta_j - \alpha_j , \bar \theta_{j-1} ) .
\end{equation}
\end{proposition}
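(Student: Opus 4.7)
The plan is to carry out a direct linear-algebraic substitution into the definition \eqref{iterations10} and then invoke the hierarchic condition \eqref{multis-ansatz} to collapse the result onto the potential $\Phi_{j-1}$.

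First, I would substitute $\En_{\theta_j} = \theta_j\trans \Phi_j$ into \eqref{iterations10} to obtain
\begin{equation*}
\En_{\theta_{j-1}} = \Ave_{j-1}\!\left( (\theta_j - \alpha_j)\trans \Phi_j + \bar\theta_j\trans \Psi_j \right).
\end{equation*}
Since $\Ave_{j-1}$ is a linear operator on functions of $\data_{j-1}$ (it is a finite average over translations, cf.\ \eqref{proj-fun}) and the coefficients $(\theta_j - \alpha_j, \bar\theta_j)$ do not depend on $\data_{j-1}$, the averaging can be pulled inside:
\begin{equation*}
\En_{\theta_{j-1}} = (\theta_j - \alpha_j)\trans (\Ave_{j-1} \Phi_j) + \bar\theta_j\trans (\Ave_{j-1} \Psi_j).
\end{equation*}

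Next I would interpret the stacked vector-valued function $(\Ave_{j-1} \Phi_j, \Ave_{j-1} \Psi_j)$ as $\Ave_{j-1}(\Phi_j, \Psi_j)$ and apply the hierarchic-stationary hypothesis \eqref{multis-ansatz}, which gives $\Ave_{j-1}(\Phi_j, \Psi_j) = Q_j \Phi_{j-1}$. The sum above then rewrites as the single inner product
\begin{equation*}
\En_{\theta_{j-1}} = \bigl(\theta_j - \alpha_j, \bar\theta_j\bigr)\trans Q_j \Phi_{j-1} = \bigl(Q_j\trans (\theta_j - \alpha_j, \bar\theta_j)\bigr)\trans \Phi_{j-1},
\end{equation*}
which, by comparison with the linear form $\En_{\theta_{j-1}} = \theta_{j-1}\trans \Phi_{j-1}$, identifies $\theta_{j-1} = Q_j\trans(\theta_j - \alpha_j, \bar\theta_j)$, as claimed.

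The argument is essentially a bookkeeping exercise; no analytic estimate is involved. The only point that needs care is the interpretation of the concatenation $(\Phi_j, \Psi_j)$ as a single vector of potentials on $\data_{j-1}$ (recalling that $\Phi_j$ acts via $\data_j = G_j \data_{j-1}$, which is why $\Ave_{j-1}$ makes sense for both pieces), together with the corresponding block structure of $Q_j$ whose transpose sends $(\theta_j - \alpha_j, \bar\theta_j)$ into the coordinate system of $\Phi_{j-1}$. Once these conventions are fixed, the proof is immediate from linearity of $\Ave_{j-1}$ and the hypothesized identity \eqref{multis-ansatz}.
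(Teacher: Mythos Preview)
Your proof is correct and follows essentially the same approach as the paper: substitute $\En_{\theta_j}=\theta_j\trans\Phi_j$ into \eqref{iterations10}, use linearity, and apply \eqref{multis-ansatz} to read off $\theta_{j-1}=Q_j\trans(\theta_j-\alpha_j,\bar\theta_j)$. The only cosmetic difference is that the paper frames this explicitly as an induction on $j$ with base case $j=J$ (where $\En_{\theta_J}=\theta_J\trans\Phi_J$ is given), whereas you present only the inductive step; since the form $\En_{\theta_j}=\theta_j\trans\Phi_j$ is needed to start the substitution, you may want to mention the induction explicitly.
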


\begin{proof}
This property is proved by induction on $j$. It is valid for $j=J$ where
$\En_{\theta_J} = \theta_J\trans \Phi_J.$
Suppose that it is valid for $j \leq J$. Inserting 
$\En_{\theta_j} = \theta_j \trans \Phi_j$
in (\ref{iterations10})  implies with (\ref{multis-ansatz}) that 
\[
 \En_{\theta_{j-1}} =  ( \theta_j - \alpha_j \, ,\,  \bar \theta_j ) \trans 
 (\Phi_j , \Psi_j) =
\theta_{j-1}\trans \Phi_{j-1}~,
\] 
where  $\theta_{j-1}$ satisfies (\ref{prop-eq3}). 
\end{proof}

This proposition computes energy models with a discrete coupling flow equation in a wavelet basis, from coarse to fine scales.
It inverts the renormalization group equation, which goes from fine to coarse scale. In a Fourier basis, this renormalization group equation can be written as a differential equation, which defines a Polchinski flow \cite{polchinski1984renormalization,bauerschmidt2023stochastic}. The inverse equation involves the parameters of conditional probabilities, which specifies fine scale probabilities from coarser scales. 
The dimension of the coupling flow vector $\theta_j$ also increases as the scale decreases, which is necessary to obtain
accurate models of complex fields which are not exactly self-similar. At the finest scale $j=0$, we obtain a translation invariant Gibbs energy $\En_{\theta_0} = \theta_0\trans \Phi_0$ of a  stationary model $p_{\theta_0}$ of $p = p_0$.

\subsection{Log-Sobolev Constants and Wavelet Choice}
\label{sec:basis-choice}

We want to decompose $p$ having a large log-Sobolev constant into conditional probabilities $\bar p_j$ having smaller log-Sobolev constants, which can therefore
be learned and sampled more easily. To do so,
we study the choice of
basis and of the hierarchical projectors $G_j$ and $\bar G_j.$
In the following, we suppose that $\left( G_j \atop \bar G_j \right)$ 
is an orthogonal matrix.

\paragraph{Selection of eigenvectors}
The Bakry-Emery theorem 
gives an upper bound of $c(\bar p_j)$ from the inverse of the smallest eigenvalue of the Hessians $\Hess_{\bar \data_j} \bar \En_j$, if it is positive.
This suggests choosing 
$G_j$ and $\bar G_j$ so that it maximizes this minimum eigenvalue. In the orthogonal case,
$\data_{j-1} = \bar G_j\trans \bar \data_j + G_j\trans  \data_j  $ so
\begin{equation}
\label{Hessian-inequality}
\Hess_{\bar \data_j} \bar \En_{j}  = \bar G_j (\Hess_{\data_{j-1}} \En_{j-1} ) \, \bar G_j\trans  .
\end{equation}
The Hessian eigenvalues of $\bar \En_j$ are thus obtained by selecting the Hessian
eigenvalues of $\En_{j-1}$ with the orthogonal operator $\bar G_j$.
To minimize the log-Sobolev constant, $\bar G_j$ must eliminate small or negative eigenvalues
of $\Hess_{\data_{j-1}} \En_{j-1} (\data_{j-1})$.
If the
high amplitude eigenvectors of $\Hess_{\data_{j-1}} \En_{j-1}(\data_{j-1})$ are concentrated in a fixed linear space with high probability, then the range of
$\bar G_j\trans$ should  be included in this space.

\paragraph{Renormalized log-Sobolev lower-bound}
The renormalization of $\bar \data_j$ by $D_j$ aims at preconditioning
the covariance of $\bar p_{j}$ to avoid creating a large log-Sobolev constant.
If $\bar \mu_{{\max},j}$ is the largest eigenvalue of the covariance
$\bar C_j$ of $\bar \data_j$ then (\ref{sbol-covariance}) proves that 
\begin{equation}
    \label{log-ccrensdf}
c(\bar p_j ) \geq  \frac 1 2 \, \bar \mu_{{\max},j}.
\end{equation}
If $p$ is Gaussian then $\bar p_j (. |\data_j)$ is then also Gaussian so $c(\bar p_j ) = \bar \mu_{{\max},j} / 2$. The log-Sobolev normalization amounts to multiply by the largest eigenvalue of
$\nabla^2_{\bar \varphi_j} \bar U_j$. In the Gaussian case it divides by the smallest eigenvalue of the covariance and is
thus equal to the covariance condition number.
The covariance $C_{j}$ of
$\data_{j}$ is computed iteratively
from $C_0$ with (\ref{dec1}), which implies that
$C_j = G_j C_{j-1} G_j\trans$. 
The covariance $\bar C_j$ of $\bar \data_j$ is computed from
$\bar G_j$, which includes the renormalization. It  gives
\begin{equation}
  \label{Jacobi-preconditioning}
\bar C_j = \bar G_j C_{j-1} \bar G_j\trans ~~\mbox{with}~~\diag(\bar C_j) = Id . 
\end{equation} 
The covariance $ C_{j-1}$ is projected and renormalized by $\bar G_j$,
which sets the diagonal values of $\bar C_j$ to $1$.
The maximum eigenvalue and the condition number of
$\bar C_j$ do not grow with the dimension $d$
if $\bar G_j$ represents $C_{j-1}$ over a basis of nearly
eigenvectors, so that all eigenvalues remain of the order of $1$. This necessary condition to control the log-Sobolev constant is not sufficient. 
Non-convex Gibbs energies may have
much larger log-Sobolev constants. This issue is studied numerically in Sections \ref{subsec:scalarpots} 
and \ref{sec:applications} for scalar potential energies and two-dimensional turbulence data.

\paragraph{Choice of wavelet and wavelet packet basis}
For multiscale stationary processes, we explain that the largest eigenvalues and the condition numbers of all 
$\bar C_j$ remain bounded if 
the wavelet has a Fourier transform which is sufficiently well localized. We do not give mathematical details but quote the main results. It gives necessary conditions to control the growth of $c(\bar p_j)$ and its normalization, with the dimension $d$. 

Multiscale stationary fields have a density $p$ whose covariance is diagonalized in a Fourier basis, with eigenvalues which grow like $|\omega|^{-\eta}$ at low-frequencies. In an image of width $d^{1/2}$, if the smallest eigenvalue is of the order of $1$ then 
the largest eigenvalue is of the order of 
$d^{\eta/2}$, which implies that the log-Sobolev constant
$c (p)$ increases at least like $d^{\eta/2}$.
A hierarchical factorization tries to avoid this growth 
by renormalizing $\bar \data_j$ so that its
covariance does not grow with $d$.
Normalizing the covariance is necessary but not sufficient
to guarantee that
the normalized log-Sobolev constant remains bounded for all $d$.
This critical slowing down also appears for non-convex energies. 
For example, Multigrid Monte Carlo \cite{PhysRevD.40.2035} algorithms
perform such a normalization
without calculating conditional distributions across scales,
and they do not eliminate the
critical slowing down of the $\varphi^4$ model which has a non convex energy.

The condition number of the normalized covariance of $\bar \data_j$
is computed when $d$ goes to infinity by studying the 
representation of the limit covariance operator
of $\data$ in a wavelet orthonormal basis of ${\bf L^2}([0,1]^2)$.
The renormalization sets the diagonal of $\bar C_j$ to $1$.
The covariance is said to be preconditioned by its diagonal if the lowest and largest eigenvalues then remain of the order of $1$.
Classes of linear singular operators, preconditioned by their diagonal in a wavelet basis, have been thoroughly studied in harmonic analysis \cite{meyer_1993}. 
It has been applied to the fast resolution of elliptic problems \cite{jaffard1992elliptic}.
For appropriate wavelets, this preconditioning is valid for
classes of pseudo-differential operators, and classes of
singular homogeneous operators diagonalized in a Fourier
basis.

Suppose that the covariance eigenvalues are proportional to $|\omega|^{-\eta}$.
Preconditioning requires to use regular wavelets with enough
vanishing moments.
At low frequencies, each wavelet $\psi_k$ must have a Fourier transform
$\hat \psi_k$ satisfying
$|\hat \psi_{k} (\omega)| = O(|\omega|^{\eta/2})$, to avoid being contaminated by the explosion of the largest covariance eigenvalues at low-frequencies $\omega$. If a wavelet has $m$ vanishing moments, then Appendix \ref{app:wavelets} shows that $|\hat \psi_{k} (\omega)| = O(|\omega|^m)$. We thus choose a wavelet with $m \geq \eta/2$ vanishing moments.
At high frequencies, $|\hat \psi_k (\om)|$
must have a decay faster than $|\omega|^{-\eta/2}$, which is satisfied if
$\psi_k$ has $m \geq \eta/2$ derivatives in ${\bf L^2} (n[0,1]^2)$.
If the $\psi_k$ have a compact support, more than $\eta/2$ vanishing moments and
$\eta/2$ bounded derivatives, then
one can prove \cite{meyer_1993,marchand_wavelet_2022} that the maximum eigenvalue and the condition number 
of the covariance of all
$\bar \data_j$ are uniformly bounded for all $j$ and $d$.

Despite normalization, a bad conditioning may be produced by the smallest eigenvalues of the covariance, if their decay is
faster than a power law at high frequencies. 
Indeed, the frequency resolution of wavelets is 
not sufficient to renormalize this fast decay.
To ensure that the condition number of the 
covariance does not increase with $d$, one can represent the high-frequency  $\bar \varphi_j$ in a wavelet packet basis having a better frequency resolution. Each wavelet packet must have a Fourier transform concentrated in a sufficiently narrow frequency domain,
where the covariance eigenvalues vary by a bounded multiplicative factor.
The frequency width of these wavelet packets is $2^{a_j}$ times more narrow than wavelets if computed with a wavelet packet filter $\bar G_j$ defined in (\ref{waveletpacksn}). The constant $a_j$ can be adjusted to define wavelet packet coefficients $\bar \varphi_j$ whose normalized covariance
$\bar C_j$ has a condition number of the order of $1$.
Improving wavelet packet frequency resolutions by $2^{a_j}$ also increases their spatial support by a factor of $2^{a_j}$. We thus choose $a_j$ to be as small as possible. This is applied in Section \ref{sec:applications} to improve the Langevin mixing time for the vorticity fields of two-dimensional turbulent flows.

\section{Hierarchical Models of Local Scalar Potentials}
\label{sec:scal-pot-ene}

Scalar potential models introduced in this section are local interaction models often
used in statistical physics. We study the particular case of the $\varphi^4$ model to illustrate the estimation
and sampling properties of hierarchical probability flows at phase transitions.

\subsection{Scalar Potential Energies and $\vvarphi^4$ Model}
\label{sec:scalar-pot}
Physical systems at equilibrium have a probability density
$p = {\cal Z}^{-1} e^{-U}$ with an energy $U$ decomposed in a quadratic term corresponding to two-point interactions and a non-linear potential $V(\data)$ which specifies all other interactions:
\begin{equation}
\label{physics-ener}
\En(\data) = \frac {- \beta} 2  \data\trans \Delta \data  + V(\data) .
\end{equation}
The Laplacian $\Delta$ is discretized over the grid of $\data$ and defines the kinetic energy at a temperature $1/\beta$. 
Some physical systems \citep{ramond2020field} have a potential $V$ which is
reduced to a sum of scalar potentials at all locations $n$
\begin{equation}
\label{physics-ener1}
V(\data) = \sum_n v(\data(n)) .
\end{equation}
It enforces no interactions between different sites $n$ but
favors values of $\data(n)$ where $v$ is nearly minimum.
Ferromagnetism and second order phase transitions are
captured by the $\vvarphi^4$ model \cite{10.1093/oso/9780198834625.001.0001}.
Its scalar potential $v(t) = t^4 - (1+2\beta)t^2$ is non-convex, with  a double-well which pushes the values of each $\data(n)$ towards $+1$ or $-1$ \citep{10.1093/oso/9780198834625.001.0001}.

\begin{figure}
\centering
\includegraphics[width=0.7\textwidth]{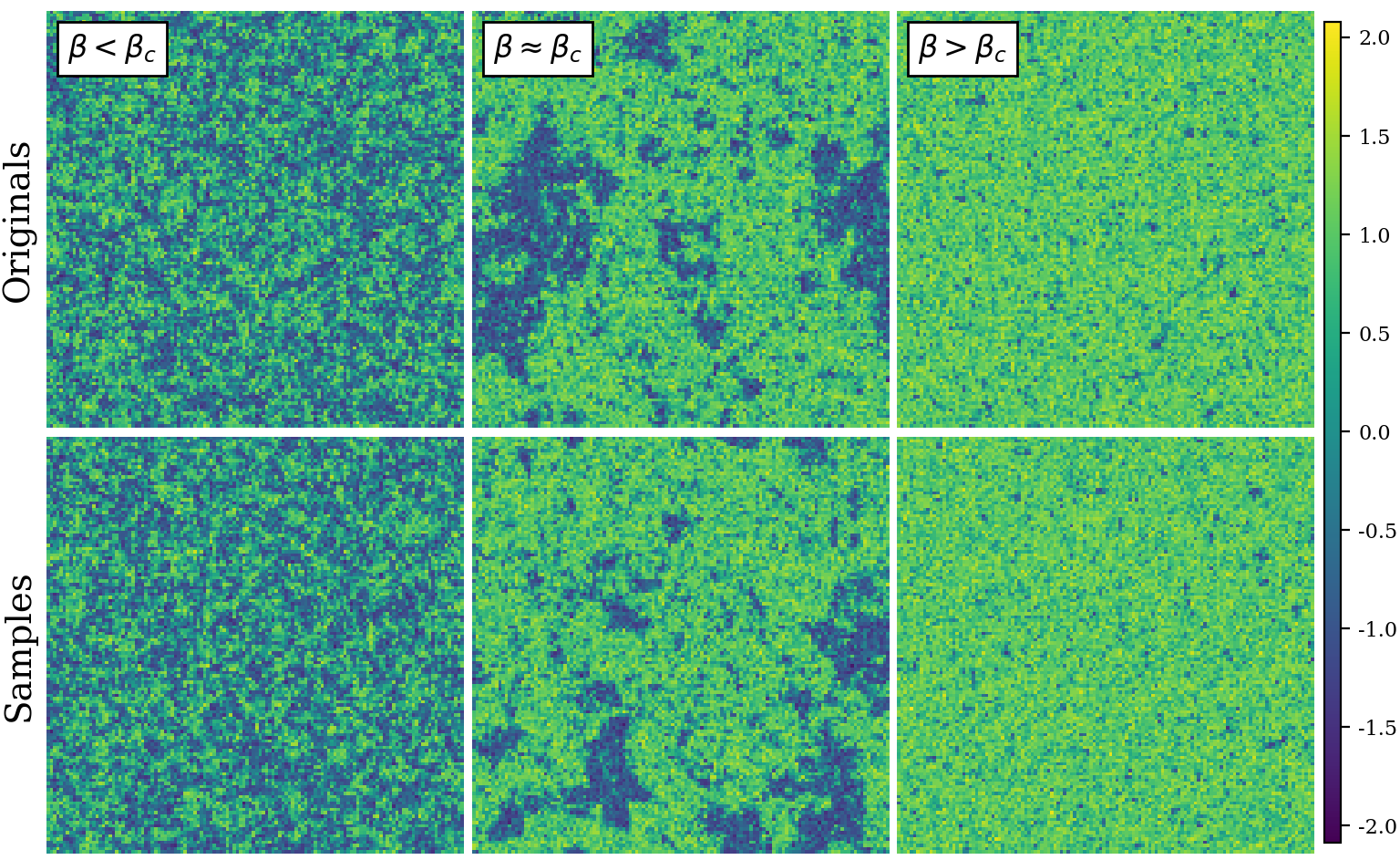}
\includegraphics[width=0.25\textwidth]{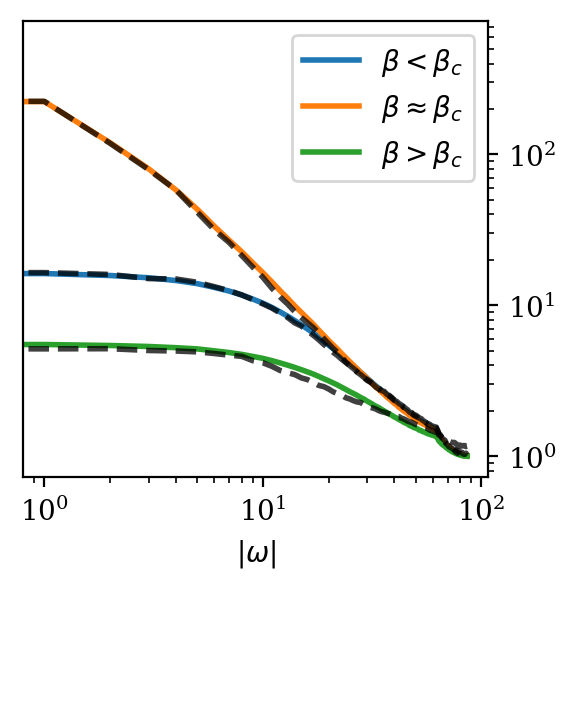}

~~~~~~(a)~~~~~~~~~~~~~~~~~~~~~~~~~~~~~~~~~~~~(b)~~~~~~~~~~~~~~~~~~~~~~~~~~~~~~~~~~~(c)~~~~~~~~~~~~~~~~~~~~~~~~~~~~~~~~~~~~~~~~~~~~~~(d)~~~~~~~
\caption{Top Row : Realisations of $\vvarphi^4$ fields at temperature $1/\beta$, and system size $d=128^2$. (a): For $\beta =0.5 < \beta_c$ , the system is disordered with short range correlations. (b): At the phase transition, $\beta = 0.68\approx  \beta_c$, the field is
  self-similar, with long range correlations. (c): For $\beta = 0.76 > \beta_c$, the system is in a ferromagnetic phase, with a non-zero mean (here positive). Bottom row: samples generated with a hierarchic factorization in a Haar
  wavelet basis with the same $\beta$ in (a,b,c).
  (d) : The graph shows the covariance eigenvalues (power spectrum) in these  $3$ cases, as a function of the two-dimensional frequency radius $|\omega| = (|\om_1|^2 + |\omega_2|^2)^{1/2}$. For $\beta = \beta_c$, eigenvalues have a power-law decay and develop a singularity at low frequencies, which correspond to long-range correlations. We superimposed in dashed line the covariance eigenvalues of a hierarchic model estimated by score matching. For visualization, the spectrum at different temperatures are multiplied by a constant which aligns their minimum eigenvalue.}
\label{fig:varphi4synthesis}
\end{figure}

If $\beta = 0$ then 
$\En(\data) = \sum_n v(\data(n))$. Each $\data(n)$ are then 
i.i.d independent random variables of density $\tilde p (t) = \eta\, e^{-v(t)}$. The power spectrum is constant. 
We saw in (\ref{product}) that the independence implies 
that the log-Sobolev constant satisfies $c(p) = c(\tilde p)$, and thus does not depend upon the dimension of $\data$. 
If $\beta > 0$ the Laplacian correlates pixels over a progressively larger neighborhood as $\beta$ increases. It increases the power spectrum at low frequencies. 
In the thermodynamic limit $d \to \infty$ of infinite system size, the $\vvarphi^4$ energy has a phase transition at $\beta_c\approx 0.68$ \citep{doi:10.1142/S0129183116501084}.
The power spectrum then has a power $|\om|^{-\eta}$ with $\eta = 1.75$ as shown
in Figure \ref{fig:varphi4synthesis}(d). It is singular at low-frequencies, which corresponds to a field having long range correlations. 
Figure \ref{fig:varphi4synthesis} shows realizations of $\vvarphi^4$ fields for $\beta < \beta_c$, $\beta = \beta_c$ and $\beta > \beta_c$.
For $\beta > \beta_c$ (low-temperature), there are two phases where the average field value is strictly positive or
strictly negative, which explains ferromagnetism.
Figure \ref{fig:varphi4synthesis}(d) corresponds to one phase where
most field values are close to
$1$.

\paragraph{Hessians eigenvalues}
The probability density $p$ has a non-convex energy $\En$ whose Hessian is
\[
\Hess_{\data} \En(\data) = - \beta \Delta + {\rm diag} (\mu_n)_n ~~\mbox{with}~~\mu_n = v''(\data(n)) . 
\]
The Laplacian $\Delta$ is diagonal in the Fourier basis with eigenvalues
$|\omega|^2.$
The scalar potential is diagonal in a Dirac basis with positive and negative eigenvalues. These eigenvalues are much larger than the Laplacian eigenvalues at low-frequencies and produce
eigenvectors of the Hessian $\Hess_{\data} \En$ having negative eigenvalues.

To define a hierarchic model with conditional probabilities $\bar p_j (\bar \varphi_j | \varphi_j)$ having small
log-Sobolev constants, Section \ref{sec:basis-choice} explains that
we may choose projectors $\bar G_j$ that select eigenvectors having high amplitude positive  eigenvalues, and discard negative eigenvalues. At the finest scale, 
this can be done \cite{guth2023conditionally} with
a first high-frequency filter $\bar G_1$ which eliminates low frequencies and 
selects high-frequency variables $\bar \data_1$. The resulting interaction
energy $\bar \En_1$ has a projected Hessian
\[
\Hess_{\bar \data_1} \bar \En_{1}(\data) = -\beta \bar G_1 
\Delta  \bar G_1 \trans +  \bar G_1 \, ( \diag (\mu_i)_i
) \, \bar G_1 \trans~ . 
\]
Figure \ref{fig:compareeigenvalue-histo}(a)
compares the histograms of the eigenvalues of $\Hess_{\data_{0}} \En_{0}$ 
and the Hessian  $\Hess_{\bar \data_1} \bar \En_1$ (without normalization for comparison purposes). It is computed with a Symlet-4 wavelet for $\vvarphi^4$ at critical temperature $\beta = \beta_c$. 
As expected, $\Hess_{\bar \data_{1}} \bar \En_{1}$, which is equal to $\bar G_1\Hess_{\data_{0}} \En_{0}\bar G_1\trans$,  has fewer negative eigenvalues
than $\Hess_{\data_{0}} \En_{0}$, but some still remain. These negative
eigenvalues can be almost everywhere eliminated by an orthogonal $\bar G_1$, which selects a more narrow high-frequency band  than wavelets.
This can be done with wavelet packets \cite{guth2023conditionally}. Since $\vvarphi^4$ has a self-similar probability distribution at the phase transition $\beta = \beta_c$, the same result is obtained at all other scales $2^j$.

\begin{figure}
\centering
\includegraphics[width=0.405\textwidth]{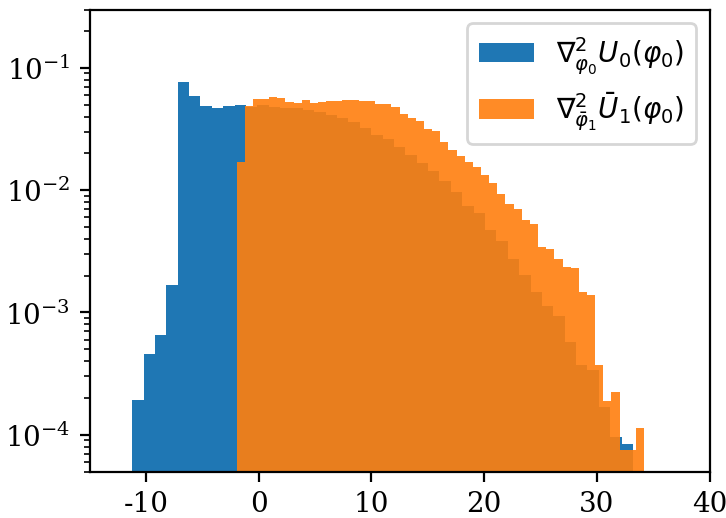}
\includegraphics[width=0.49\textwidth]
{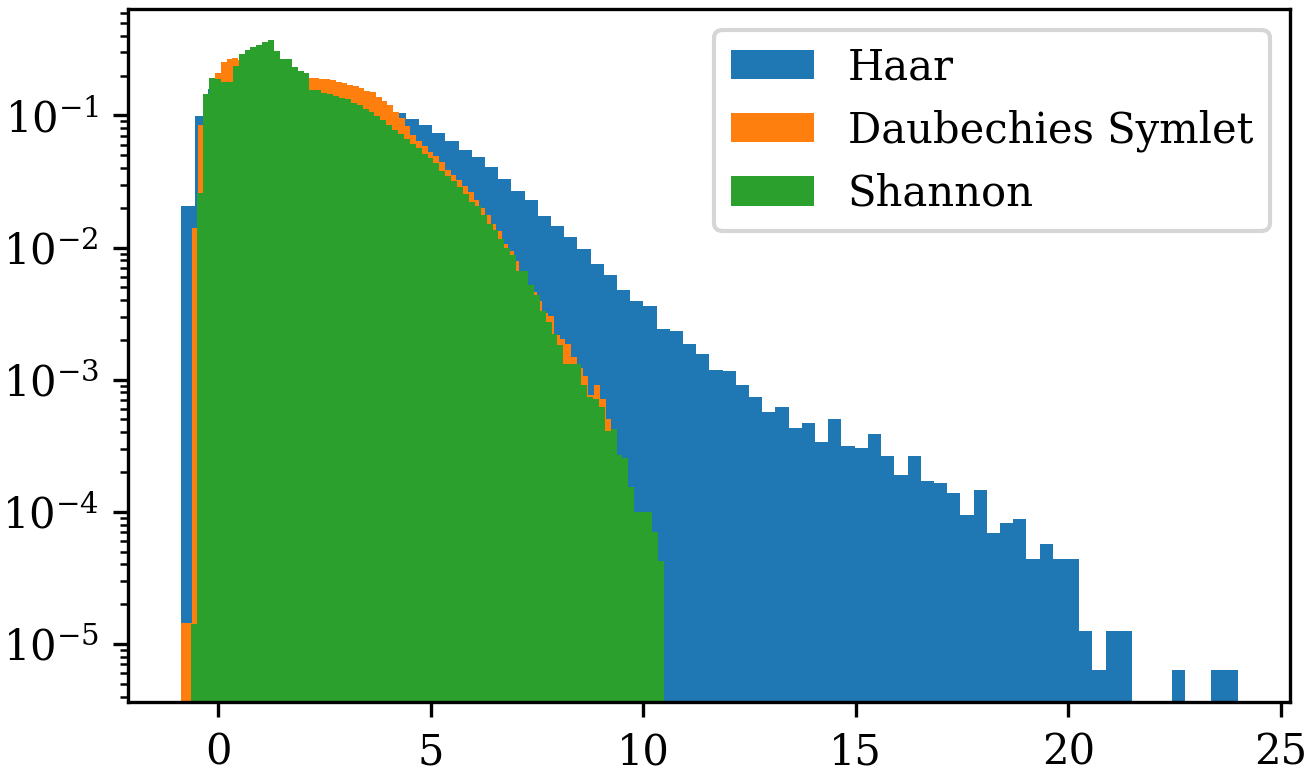}\\
~~~~~~~~~~~~~~~(a)~~~~~~~~~~~~~~~~~~~~~~~~~~~~~~~~~~~~~~~~~~~~~~~~~~~~~~~~~~~~~~~~~~~~~~~~~~~~~~~~~~~~(b)~~~~~~~~~~~~
\caption{(a): We consider  images $\data_0$ of the  $\vvarphi^4$ model of dimension $d=128^2$, for a critical $\beta = \beta_c$. The approximations $\data_1$ are calculated with a Symlet-4 filter at the finest  scale $2^1$. The graph shows the distributions of eigenvalues of the Hessian
  $\Hess_{\data_{0}} \En_{0}$
  in blue, and of $\Hess_{\bar \data_1} \bar \En_1=
  \Hess_{\bar \data_1}  \En_0 $ (without renormalization) in orange.  The most negative eigenvalues of $\Hess_{\data_{0}} \En_{0} $
  correspond to low frequency eigenvectors. They do not appear in
  $\Hess_{\bar \data_1} \bar \En_1$.
  (b): Distributions of eigenvalues of Hessians
  $\Hess_{\bar \data_j} \bar \En_{\theta_j}$ at all scales $2^j \leq 2^J$, for $d=32^2$, with $J=3$, for samples of
  hierarchical models of $\vvarphi^4$ at phase transition. They are computed for Haar (blue), Symlet-4 (orange) and Shannon wavelets (green). Eigenvalues are more concentrated when the wavelet has a better frequency
 localization.}
\label{fig:compareeigenvalue-histo}
\end{figure}

Figure \ref{fig:compareeigenvalue-histo}(b)
displays the distribution of eigenvalues of
$\Hess_{\bar \data_j} \bar \En_{\bar \theta_j} (\data_{j-1})$,
for samples $\data_{j-1}$ of a hierarchic model, computed 
at all scales $2^j \leq 2^J$. These distributions are calculated
for hierarchical factorization computed
with Haar (\textit{blue}), Symlet-4 (\textit{orange}) and Shannon (\textit{green}) wavelets. The Hessian eigenvalues are nearly the same for Shannon wavelets and Symlet-4. For Haar wavelets there are much more high amplitude eigenvalues. Indeed, Haar wavelets are not as well localized in frequency. For a Haar wavelet, $|\hat \psi(\om)|^2$ decays like $|\om|^{-2}$ at high frequencies because $\psi(x)$ is discontinuous. The high amplitude eigenvalues are due to this slow high-frequency decay, which compensate, with slow vanishing, for the growths of the Hessian eigenvalues proportional to $|\omega|^\eta$ for $\eta = 1.75$ \cite{tauber2014critical}.

The existence of negative Hessian eigenvalues prevents using the Bakry-Emery theorem to compute an upper bound on the log-Sobolev constant of wavelet conditional probabilities. However, we shall see in the next section that these remaining negative eigenvalues do not prevent the Langevin diffusion from exponential convergence, even at the phase transition.
These numerical results are an indication that log-Sobolev constant of
wavelet conditional probabilities do not depend upon the scale.

\subsection{Learning and Sampling hierarchic Scalar Potential Energies}
\label{subsec:scalarpots}

This section reviews the estimation of hierarchic models of scalar potentials 
introduced in \citep{marchand_wavelet_2022}, 
and its application to the estimation and sampling of the
$\vvarphi^4$ model at critical temperature. It is shown in \citep{marchand_wavelet_2022} that the critical slowing down
disappears when sampling
the conditional probabilities of a hierarchic factorization, although the
Hessians still have negative eigenvalues as presented in Figure \ref{fig:compareeigenvalue-histo}.  We compare different wavelets 
for learning and sampling the $\vvarphi^4$ model
at critical temperature. Model errors 
decrease when decreasing the wavelet support because
conditional distributions of wavelet coefficients $\bar\data_j$ have shorter
dependencies in space and scales. The precision of low dimensional models
versus the convergence rate of sampling introduces a trade-off between spatial
versus frequency localization. It justifies the use of a wavelet basis as opposed to a Fourier basis, and the Haar wavelet basis is the winner.

\paragraph{Hierarchic scalar potentials}
A hierarchic model is defined with a coarse scale model
$\En_{\theta_J}  = \theta_J\trans \Phi_J$ and 
interaction energy models
$\bar \En_{\bar \theta_{j}}   =  \bar \theta_j \trans \Psi_j$ at each scale $2^j \leq 2^J$.
We define $\Phi_J$ and each $\Psi_j$ for scalar potential energies, and prove that it defines a stationary hierarchic model, whose coupling parameters are
computed with a coarse to fine coupling flow equation.

At the coarsest scale, $U_{\theta_J} = \theta_J\trans \Phi_J$
includes a two-point interaction matrix and a parametric scalar potential
\begin{equation}
\label{inter-param}
\theta_J \trans \Phi_J(\data_J) =\frac 1 2 \data_J\trans  K_J \data_J
+  V_{\gamma_J} (\data_J)    ,
\end{equation}
 where $V_\gamma (\data)$ has a scalar potential $v_{\gamma}(t) = \sum_k \gamma_k\, \rho_k (t)$ decomposed over a finite approximation family $\{ \rho_k (t) \}_k$ with coefficients $\gamma = (\gamma_k )_k$. It results that
\begin{equation}
\label{scalar-pot-vec}
V_\gamma (\data) = \gamma\trans \Gamma(\data)~~\mbox{with}~~
\Gamma(\data) = \Big( \sum_n \rho_k (\data(n)) \Big)_k .
\end{equation}
We  use translated sigmoids which do not grow at infinity:
$\rho_k (t) = (1+\exp((t-t_k)/\sigma_k))^{-1}$.
In numerical calculations there are $40$ evenly spaced translations $t_k$, on the support of the distribution of each $\data_J(n)$, and $\sigma_k=\frac{3}{2}(t_{k+1}-t_k)$. 
Defining a model of $p_J$ which is stationary is equivalent to 
imposing that $K_J$ is a convolutional operator.

The interaction Gibbs energy $\bar U_{\bar \theta_j}$ of $p_j( \bar \data_j | \data_j)$
includes two-point interactions within the 
high frequencies $\bar \data_j$, between high frequencies $\bar \data_j$
and the lower frequencies $\varphi_j$,
with convolution matrices $\bar K_j$ and $\bar K'_j$, 
plus a scalar potential
\begin{equation}
\label{scalar-interaction}
\bar \En_{\bar \theta_{j}} (\data_{j-1}) =
 \bar \data_j\trans   \bar K_{j}  \bar \data_{j} +
\bar \data_j\trans   \bar K'_{j}  \data_{j} +
\bar V_{\bar \gamma_j} ( \data_{j-1}) = \bar \theta_j \trans\Psi_j (\data_{j-1})~.
\end{equation}
It defines
\begin{equation}
\label{scalar-interaction00}
\bar \theta_j = \left(
\begin{array}{c} 
\bar K_j\\
\bar K_j'\\ \bar \gamma_j
\end{array} \right)
~~
\mbox{and}~~
\Psi_j (\data_{j-1}) = 
\left(
\begin{array}{c}
\bar \data_{j}  \bar \data_j\trans \\
\bar \data_j \data_j\trans \\
\Gamma(\data_{j-1})
\end{array} \right).
\end{equation}
The stationary interaction $\bar K_j'$ between high and low-frequencies 
has an energy contribution which is typically much smaller than the interaction $\bar K_j$ within high frequencies, because $\varphi_j$ and $\bar \varphi_j$ are computed over frequency domains having a small overlap.  The following theorem defines a stationary
hierarchic model from this scalar potential interactions $\Psi_j$. 
With an abuse of notation, we write $\varphi * \varphi\trans$ the convolution between $\varphi$ and  $\varphi\trans(n) = \varphi(-n)$.
We recall from (\ref{filters2}) that $\varphi_j(n) = \varphi * \phi_j (2^j n)$.

\begin{theorem}
\label{pro:scalar-embedded-potential}
For any $j \geq J$,
\begin{equation}
\label{potential-equations}
\Phi_j (\data_j) = \left(
\begin{array}{c}
\data_j* \data_j\trans \\
\Gamma ( \data_j * \phi_\ell )
\end{array} \right)_{J- j \geq \ell \geq 0} 
\end{equation}
defines stationary hierarchic  potentials with interactions $\Psi_j$ in (\ref{scalar-interaction}).
Regressing each free energy $F_j$ over $\Phi_j$ defines a fine scale Gibbs stationary model for $j=0$
\begin{equation}
\label{fine-snsdof}
 \En_{\theta} (\data) = 
\frac 1 2 \data\trans K \data + \sum_{j=0}^J   V_j (\data)~~\mbox{with}~~
 V_j (\data) = 
\gamma_j\trans \Gamma(\data * \phi_j) ~,
\end{equation}
where $K$ is a convolution matrix and $(\gamma_j)_{0 \leq j \leq J}$
are scalar potential parameters computed by a coupling flow equation.
\end{theorem}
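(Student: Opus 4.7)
The statement bundles two claims: (a) that $\{\Phi_j\}_{j \leq J}$ together with $\{\Psi_j\}$ satisfies Definition \ref{def:embedded-potential} (stationary, hierarchic), and (b) that applying the coupling flow of Proposition \ref{prop:couplingflow} from $j=J$ down to $j=0$ produces the energy (\ref{fine-snsdof}). Granted (a), part (b) is an essentially direct invocation of Proposition \ref{prop:couplingflow}; the substance is in (a). Translation invariance of each $\Phi_j(\data_j)$ is immediate, since $\data_j * \data_j\trans$ is an autocorrelation and every $\Gamma(\data_j * \phi_\ell) = \sum_n \rho_k((\data_j * \phi_\ell)(n))$ is a sum over the whole grid of $\data_j$. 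The non-trivial task is to exhibit a linear operator $Q_j$ with $\Ave_{j-1}(\Phi_j, \Psi_j) = Q_j \Phi_{j-1}$.

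\textbf{Quadratic rows.} I would argue row by row. For the three quadratic rows of $(\Phi_j, \Psi_j)$ --- namely $\data_j * \data_j\trans$ from $\Phi_j$, and $\bar\data_j * \bar\data_j\trans$, $\bar\data_j * \data_j\trans$ from $\Psi_j$ --- one substitutes $\data_j = G\,\data_{j-1}$ and $\bar\data_j = \bar G\,\data_{j-1}$ to exhibit each as a quadratic form in $\data_{j-1}$. Applying $\Ave_{j-1}$ projects onto translation-invariant quadratics, and each of the three forms then collapses to a linear combination of the entries of $\data_{j-1} * \data_{j-1}\trans$, the coefficients being explicit convolutions of the filters $g$, $\bar g$ and their conjugates. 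This produces the quadratic block of $Q_j$, mapping the three quadratic rows of $(\Phi_j, \Psi_j)$ into the single quadratic row of $\Phi_{j-1}$.

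\textbf{Scalar rows.} The key observation is that $\data_j * \phi_\ell$ sampled on the coarse grid $2^j\Z^2$ coincides with $\data * \phi_{j+\ell}$ sampled on $2^j\Z^2$, and likewise $\data_{j-1} * \phi_{\ell+1}$ sampled on $2^{j-1}\Z^2$ equals $\data * \phi_{j+\ell}$ sampled on $2^{j-1}\Z^2$. Hence averaging $\Gamma(\data_j * \phi_\ell)$ over fine-grid translations in $\Ave_{j-1}$ yields, up to a factor of $4$, $\Gamma(\data_{j-1} * \phi_{\ell+1})$. Consequently the range $0 \leq \ell \leq J-j$ inside $\Phi_j$ shifts to $1 \leq \ell+1 \leq J-j+1$ inside $\Phi_{j-1}$, and the missing $\ell'=0$ entry $\Gamma(\data_{j-1})$ is supplied by $\bar V_{\bar\gamma_j}(\data_{j-1})$ sitting inside $\Psi_j$. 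Together with the quadratic block, these row-wise identities define $Q_j$ and establish Definition \ref{def:embedded-potential}.

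\textbf{Conclusion and main obstacle.} Iterating Proposition \ref{prop:couplingflow} from $j=J$ down to $j=0$ then yields $\En_\theta = \theta_0\trans \Phi_0(\data)$, and reading off $\Phi_0$ from (\ref{potential-equations}) gives exactly $\tfrac12 \data\trans K \data + \sum_{j=0}^J \gamma_j\trans \Gamma(\data * \phi_j)$, with $K$ a single convolution kernel because the quadratic block of every $Q_j$ preserves convolutional structure, and with the $\gamma_j$ read off from the scalar block of the flow equation (\ref{prop-eq3}). I expect the main obstacle to be exactly the quadratic step: without the perfect reconstruction identity (\ref{conditions}) linking $(G,\bar G)$ to $(H,\bar H)$, the cross-term $\bar\data_j * \data_j\trans$ would not reassemble into a translation-invariant quadratic in $\data_{j-1}$ alone, so the whole hierarchic identity hinges on carrying out this filter algebra carefully.
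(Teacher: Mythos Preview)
Your proposal is correct and follows essentially the same route as the paper's proof: verify that $\Ave_{j-1}(\Phi_j,\Psi_j)$ is linear in $\Phi_{j-1}$ by treating the quadratic block (via $\data_j = G\data_{j-1}$, $\bar\data_j = \bar G\data_{j-1}$ and the identity $\Ave_{j-1}(\data_{j-1}\data_{j-1}\trans) \propto \data_{j-1}*\data_{j-1}\trans$) and the scalar block (via $\data_j*\phi_\ell(n)=\data_{j-1}*\phi_{\ell+1}(2n)$, so $\Ave_{j-1}\Gamma(\data_j*\phi_\ell)=\tfrac14\Gamma(\data_{j-1}*\phi_{\ell+1})$, with $\Gamma(\data_{j-1})$ supplied by $\Psi_j$), then invoke Proposition~\ref{prop:couplingflow}.

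Two small corrections. First, the quadratic rows of $\Psi_j$ in (\ref{scalar-interaction00}) are outer products $\bar\data_j\bar\data_j\trans$ and $\bar\data_j\data_j\trans$, not convolutions; it is precisely the averaging $\Ave_{j-1}$ that collapses them to functions of the autocorrelation $\data_{j-1}*\data_{j-1}\trans$. Second, your ``main obstacle'' is a red herring: the perfect reconstruction identity (\ref{conditions}) is nowhere used. Since $\bar\data_j=\bar G\data_{j-1}$ and $\data_j=G\data_{j-1}$ are already linear in $\data_{j-1}$, every product $\bar\data_j\data_j\trans$ is automatically a quadratic form in $\data_{j-1}$, and $\Ave_{j-1}$ then makes it a linear function of $\data_{j-1}*\data_{j-1}\trans$; the only nontrivial ingredient for the scalar block is the a-trous recursion (\ref{phi-rec}).
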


The proof is in Appendix \ref{app:pro:scalar-embedded-potential}.
At each scale $2^j$, this Gibbs energy has a different scalar potential
$ V_j (\data) = \gamma_j\trans \Gamma(\data * \phi_j)$. The convolution
with $\phi_j$ averages $\varphi$ over a domain proportional to $2^j$. As the scale $2^j$ increases, it becomes more and more non-local. 
Scalar potential energies (\ref{physics-ener}), such as the
$\vvarphi^4$ model, have a single potential $V_0(\data) = \sum_n v_0(\data(n)) $
at the finest scale $j=0$ and are thus local. They correspond to a particular case where 
$\gamma_j = 0$ for $j < 0$.
However, a single fine scale scalar potential is not always sufficient.
This is the case of cosmological weak-lensing fields, 
which can be approximated by incorporating
different scalar potentials $V_j$ at different scales. In this case,
numerical results show that hierarchic scalar potential models
provide accurate approximations of $\En$ \citep{guth2023conditionally,marchand_wavelet_2022}.

Computing the fine scale energy $U_\theta$  in (\ref{fine-snsdof}) is useful to carry a qualitative physical analysis of the estimated energy \cite{brossollet2024effectiveenergyinteractionsequilibrium}. 
However, this calculation is unstable.
Section \ref{sec:estim-gibbs} explains that the number of samples needed to achieve a given accuracy is proportional to the normalized log-Sobolev constant. Reducing the log-Sobolev constant with a hierarchical flow allows to learn precise approximations of coupling parameters $\bar \theta_j$ at each scale,
with less data. The inverse renormalization equation (\ref{prop-eq3})
involves the calculation of free energy parameters $\alpha_j$ and errors
are amplified by
the inverse renormalization matrices $Q_j$ which progressively
computes the aggregated model parametrized by $\theta$.
The resulting fine scale energy model $U_\theta$ can therefore have important errors, which may define a non-integrable measure.
Numerical experiments demonstrate that sampling directly this
Gibbs energy is computationally slow and may diverge \cite{marchand_wavelet_2022}. 
The renormalization group representation based on interaction parameters
$\bar \theta_j$ provides a more accurate parametric model, a faster and much more reliable sampling of the estimated probability distribution.

\begin{figure}
\centering

\includegraphics[width=0.8\textwidth]{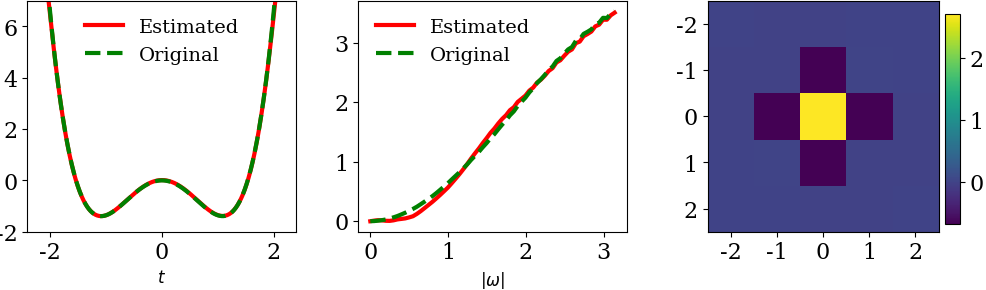}

(a)~~~~~~~~~~~~~~~~~~~~~~~~~ ~~~~~~~~~~~~~~~~~~~~(b)~~~~~~~~~~~~~~~~~~~~~~~~~~~~~~~~~~~~~~~~~~~~~~(c)

\caption{Original and estimated energy of $\vvarphi^4$ at critical temperature, for images of size $d=128 \times 128$.
  (a): Superposition of the scalar potential $v(t)$ of the $\vvarphi^4$ model for $\beta = \beta_c$ and the estimated scalar potential $v_{0} (t)$ of the
  hierarchical stationary energy model $\En_{\theta}$. (b): Superposition of the Laplacian eigenvalues (in the Fourier basis) and the eigenvalue of 
  the estimated 2-point interaction matrix $K$ of $\En_{\theta}$. (c): Estimated convolution kernel of $K$. These result show that
  the hierarchical stationary model calculated in a Haar basis gives a precise
  approximation of the $\vvarphi^4$ energy.}
\label{fig:MultiscaleEnergies}
\end{figure}

\paragraph{Hierarchic model estimation and sampling}
The parameters $\{\theta_J , \bar \theta_j\}_{j \leq J}$ are
estimated from $m$ samples $\{ \data^{(i)} \}_{i\leq m}$ of $p$, with the
conditional score matching algorithm of Section \ref{sec:cond-estim}.
Samples $\data$ of $p_{\theta} =  w^{-1} p_{\theta_J} \prod_j \bar p_{\bar \theta_j}$ are computed with the hierarchical sampling algorithm of Section \ref{sec:sampling}, which does not require the knowledge of the free energies, or normalizations, of the $\bar p_{\theta_j}$.
The MALA algorithm includes a rejection of Langevin diffusion propositions. The scalar potential also rejects proposals outside a high probability interval. 
Figure \ref{fig:varphi4synthesis} compares original samples from $p$ computed with exact $\vvarphi^4$ energies at different temperatures, and samples of a hierarchical model 
$\bar p_{\theta}$ estimated in a Haar
wavelet basis. Generated images have textures which
cannot be distinguished visually from the original image textures. 

The model precision can be evaluated by computing the resulting
stationary energy and by comparing it with the true $\varphi^4$ energy.
A hierarchic stationary model (\ref{fine-snsdof}) of $\vvarphi^4$ 
is calculated in the Haar basis from
the estimated interaction energy parameters $\bar \theta_j$ and the
free energy parameters $\alpha_j$.
The only non-zero scalar potential is at the finest scale $j=0$.
It implies that the scalar potential of the free energy $F_j$ cancels the
scalar potential of the energy at the previous scale. 
Figure \ref{fig:MultiscaleEnergies} compares the estimated
energy $U_\theta$ and original $\vvarphi^4$ energy. 
Figure \ref{fig:MultiscaleEnergies}(a) compares the estimated $v_0(t)$ and original
scalar potential function $v(t)$.
Figure \ref{fig:MultiscaleEnergies}(b) compares
the eigenvalues of the estimated two-point interaction
matrix $K$ and of a Laplacian, which is the two-point interactions of the $\varphi^4$ model. 
Figure \ref{fig:MultiscaleEnergies}(c) shows that the convolution
kernel of $K$ is indeed close to a Laplacian.
It shows that the hierarchic stationary model in a Haar wavelet basis gives an accurate
approximation of the $\vvarphi^4$ energy model at the phase transition.
We shall see at the end of this section that the estimation error becomes larger with Daubechies and Shannon wavelets, which have a spatial support larger than Haar wavelets.

\begin{figure}
\centering
\includegraphics[width=0.78\textwidth]{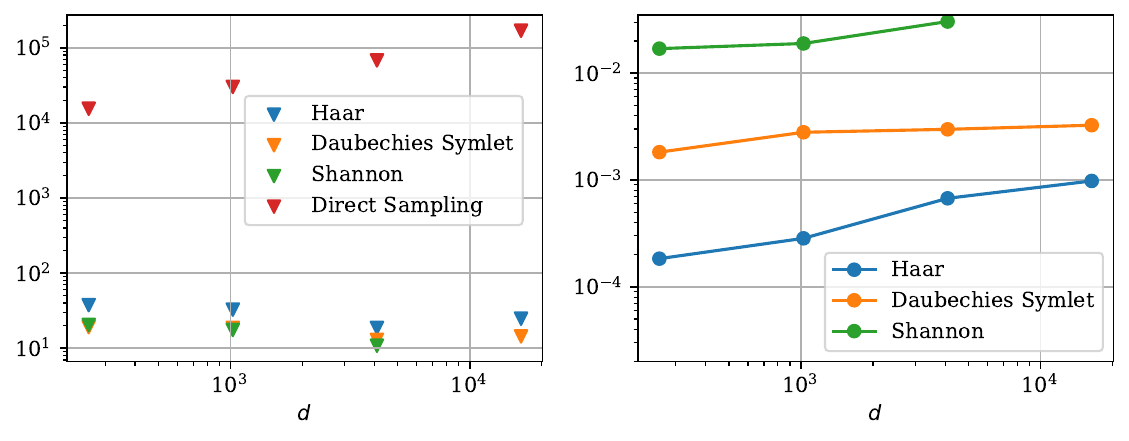}

~~~~~~~~~~(a)~~~~~~~~~~~~~~~~~~~~~~~~~~~~~~~~~~~~~~~~~~~~~~~~~~~~~~~~~~~~~~~~~~~~~~(b)~~

\caption{ (a): Normalized Langevin auto-correlation relaxation time  
of the $\vvarphi^4$ model at critical temperature and for hierarchical models computed in different wavelet bases. In red is shown the normalized relaxation time of a Langevin applied directly on the energy
  of the $\vvarphi^4$ model, which grows with the image dimension $d$. It illustrates the  critical slowing down. On the contrary, it remains constant for  all wavelet hierarchical models. This constant depends upon the log-Sobolev constant of wavelet conditional probabilities, which depends upon the wavelet choice. The hierarchic normalized relaxation time decreases when the wavelet has a Fourier transform which is better concentrated. It is maximum for Haar and smaller for Daubechies-4 Symlet and Shannon wavelets.
  (b): Approximation error of hierarchical models computed in the same bases as in (a). This error measures the difference between the marginal and second order moments of samples of $\vvarphi^4$ and the same moments computed from samples of hierarchical models in each wavelet basis. It is quantified with a KL divergence calculated in (\ref{moment-error}). The error decreases when the wavelet support decreases because it is dominated by the estimation error of the non-convex scalar potential which is local in space. It is much smaller for a Haar wavelet than for Symlet$-4$ wavelets. The Shannon wavelet which has a non-compact support with a slow spatial decay yields a much larger error than Haar and Symlet$-4$ wavelets.}
\label{fig:langevin-mixing}
\end{figure}

\paragraph{Critical slowing down at phase transition}
Sampling $p = {\cal Z}^{-1} e^{-\En}$ with an unadjusted Langevin algorithm has a computational complexity proportional to the number of iterations after discretization. It is proportional to the log-Sobolev constant $c(p)$
multiplied by the largest eigenvalue of the Hessian of $U$, as explained
in Section \ref{sec:langevin-sampling-logSobolev}.
For $\beta = \beta_c$, it suffers from a critical slowing down
due to a growth of the log-Sobolev constant when the system size $d$ increases. The convergence of Langevin diffusion is 
estimated by the auto-correlation relaxation time  defined
in (\ref{relax-time}). The computational complexity is evaluated by the normalized relaxation $\tau$ equal to 
auto-correlation relaxation time divided by the discretization time step.
Figure \ref{fig:langevin-mixing}(a)
gives the evolution of this normalized relaxation time $\tau$ as a function of the system size $d$ for $\beta = \beta_c$.
It grows like $d^{\eta_0/2}$ with $\eta_0 = 2$ \citep{Podgornkik96,Sethna2021StatisticalME,tauber2014critical}.
This behavior is partly explained by the log-Sobolev lower-bound 
$c(p) \geq \mu_{\max} / 2$ where $\mu_{\max}$ is the largest eigenvalue
of the covariance. It grows like $d^{1.75/2}$ 
for $\beta = \beta_c$, as shown by
the power spectrum in Figure \ref{fig:varphi4synthesis}.
However, this explanation is not complete since the covariance is only a lower-bound. Furthermore, the log-Sobolev constant is equal to the autocorrelation relaxation time only if the distribution is gaussian. The autocorrelation relaxation time has a faster growth exponent where $1.75$ is replaced by $2$ which gives $d$. Indeed, it also
suffers from the non-convexity of the scalar potential, which is not
captured by the covariance lower-bound. 

It has been shown in \cite{marchand_wavelet_2022} that a hierarchical factorization in
a wavelet basis avoids this critical slowing down. To compute the normalized auto-correlation relaxation time of the hierarchical sampling algorithm, 
for each $j$ we numerically estimate
the relaxation time $\bar\tau_j$ of probability $\bar p_{\bar\theta_j}(\bar \data_j |\data_j)$, like in (\ref{relax-time}). The hierarchic normalized auto-correlation relaxation time $\tau$, is defined by
\begin{equation}
\label{eq:autocorrelation}
\tau = \sum\limits_{j=1}^{J} \frac{{\bar{d}_j}}{d}\, \bar{\tau}_{j} +\frac{d_J}{d}\, \tau_{J}~,
\end{equation}
where $(\bar d_j,d_j)$ are the dimensions of $(\bar\data_j,\data_j)$.  Appendix \ref{sec:mixingtimelangevin} explains how
to estimate numerically
the relaxation time of each conditional probability,
with an Unadjusted Langevin Algorithm.
Each normalized relaxation time $\bar\tau_j$ is divided by the discretization time-step.  To evaluate the overall computational complexity, each $\bar \tau_j$ is multiplied by the relative size $\bar d_j / d$ of the gradient $\nabla_{\bar \data_j} \bar U_{\bar \theta_j}$.
Figure \ref{fig:langevin-mixing}(a) gives the hierarchic normalized auto-correlation relaxation time, depending on the system size $d$, for different wavelet basis.
For Haar, Daubechies Symlets and Shannon wavelets,
Figure \ref{fig:langevin-mixing}(a) shows that hierarchic normalized auto-correlation relaxation times do not increase with
the dimension $d$, what verifies that they do not suffer from the phase-transition critical slowing down. It reproduces the
absence of critical slowing down observed in \citep{marchand_wavelet_2022}
with a Metropolis-Hasting sampling, as we know that the MCMC mixing time
tends to an unadjusted Langevin diffusion in the continuum time limit \cite{andreanov2006field,gelman1997weak}.

These experiments give
a strong indication that log-Sobolev constants of wavelet conditional
probabilities are uniformly bounded independently of $d$, despite the fact that the energy Hessians have negative eigenvalues. This is a mathematical conjecture which has not been proved.
Calculations of the log-Sobolev constant of $\vvarphi^4$ have been carried for temperatures above the critical temperature \citep{bauerschmidt2022log,bauerschmidt2023stochastic,chen2022localization,bauerschmidt2019very}, but have not been extended up to the phase transition.
As expected, Figure \ref{fig:langevin-mixing}(a) also shows that hierarchic normalized auto-correlation relaxation time becomes smaller when improving the frequency localization of wavelets. Shannon wavelets have more vanishing moments and are more regular
than Daubechies Symlets which are themselves better localized in frequency
than Haar wavelets. Because the Haar wavelet has a poor frequency localization, the  coarse graining does not eliminate all the high frequency from $ \varphi_{j-1}$, which are responsible for big eigenvalues in $\nabla^2_{\bar\varphi_j}\bar U_j$. This tail, observed Figure \ref{fig:compareeigenvalue-histo}(b) requires reducing the time sampling step of the Langevin dynamic, and it increases the normalized relaxation times.

\paragraph{Energy estimation error}
Approximating scalar two-point potential energies requires to accurately approximate
both the kinetic energy term and the scalar potential
in (\ref{scalar-interaction}). The Hessian of the kinetic energy is nearly a discretized Laplacian, which is diagonal in a Fourier basis with positive eigenvalues. The wavelet representation of a Laplacian is not diagonal but preconditioned by its diagonal in a wavelet basis. It is sufficient to eliminate the bad conditioning with a renormalization, and obtain
a low-dimensional approximation with a band matrix. The scalar potential  is non-convex, with a Hessian which is diagonal in a Dirac basis and has negative eigenvalues. To build a low-dimensional model we must
preserve this spatial localisation by using wavelet
filters having a small spatial support.
A Fourier representation of this scalar potential introduces long range dependencies between Fourier coefficients, which leads to higher dimensional coupling models that introduce more estimation errors.

Hierarchical models can be sampled without estimating the free energies of
conditional probabilities. To evaluate the model precision without estimating
the free energies, we measure estimation errors by computing the 
model errors on a set of sufficient statistics for the 
$\varphi^4$ energy.
These moments are estimated by Monte Carlo, over samples
of $\varphi^4$. They are compared
with the moment computed from samples of hierarchic models.
The sufficient statistics are defined by 
second order moments and by the marginal distribution of $\data(n)$.
Appendix \ref{app:phi4-metric} computes 
in (\ref{moment-error}) a Kullback divergence 
error $e(p,p_\theta)$ which adds a Kullback divergence error
calculated from second order moments and a
divergence calculated from marginal distributions.
Figure \ref{fig:langevin-mixing}(b) gives the value of the moment error
for hierarchical models computed with Haar, Symlet-4 and Shannon wavelets.
These models are learned with large enough data sets so that the variance of statistical estimators are negligible.
The error is minimum for Haar wavelets because they have a support of
minimum size.
It is much larger for a Symlets-4 wavelet, whose support is $7$ times larger. For a Shannon wavelet, which has a slow spatial decay, the estimation error of marginal densities becomes extremely large. 
This error comes from longer range dependencies among wavelet coefficients when they are calculated with wavelets having a wider support, produced by the scalar potential. 

Wavelet bases seem to have a near-optimal trade-off to estimate the probability distribution of $\varphi^4$ at the phase transition while avoiding the critical slowing down. The Haar wavelet corresponding to Kadanoff's renormalizing group appears to be the best wavelet choice, thanks to its shortest support. It minimizes the model estimation error
in a low-dimensional parametric class, while avoiding the critical slowing down.

\section{Robust Multiscale High Order Interactions}
\label{sec:scat-cov}

Multiscale non-Gaussian random fields have
long range interactions across space and scales. In natural images, it appears through the existence of sharp transitions which propagate along piece-wise regular curves such as filaments or edges of objects. Non-Gaussian properties may be captured by higher order polynomials, but it typically leads to high-dimensional models and high variance estimators. 
Section \ref{sec:matrix-orga} introduces low dimensional models of multiscale
probability interactions. It defines robust approximations of high order  models.
Section \ref{sec:applications} studies numerical applications to
modeling and generation of dark matter densities and two-dimensional turbulent vorticities.

\subsection{Interactions over Multiple Hierarchies by Wavelet Scattering}
\label{sec:matrix-orga}

Hierarchic models decompose $p$ into a cascade of 
conditional probabilities across scales, which are rewritten
as the conditional probabilities of wavelet coefficients. 
In the following, we build models of such conditional probabilities,
by computing a complex wavelet transform which explicitly provides
a complex phase.
Non-Gaussian properties are captured with a second wavelet transform,
on the complex modulus of the first wavelet transform, leading to
low-dimensional models of long-range spatial dependencies and dependencies across scales.

\paragraph{Complex wavelet transform}
To model the  probability distribution of $\varphi_{j-1}$ conditioned on $\varphi_j$, we compute a complex wavelet transform of $\varphi_{j-1}$. 
The complex wavelet coefficients calculated from $\varphi_{j-1}$ can 
also be written as convolutions of $\varphi$ with complex wavelets $\tilde \psi_{j',k}$ at scales $2^{j'} \geq 2^j$. They have
$Q$ orientations indexed by $k$, sampled on the grid of $\varphi_{j-1}$ at intervals $2^{j-1}$
\[
\Big( \varphi * \tilde \psi_{j',k} (2^{j-1} n) \Big)_{ n }.
\]
Let $g$ be the low-pass filter of the coarse-graining operator $G$ which computes
$\varphi_{j-1}$ from $\varphi.$ Appendix \ref{app:wavelets} shows that
$\varphi * \tilde \psi_{j',k}$
is calculated from $\varphi_{j-1}$ with an a-trous algorithm. It
is a cascade of $j'-j -1$ convolutions of $g$, followed by convolutions with a family of complex band-pass filter
$\tilde g = (\tilde g_{k} )_{k \leq Q}$. These filters
are dilated by introducing zeros in between their coefficients.
In numerical applications, $\tilde g$ has $Q = 4$ Morlet filters specified in Appendix \ref{app:wavelets}. At each scale $2^j$, they define $4$ wavelets $\tilde \psi_{j,k}$ whose support is proportional to $2^j$. They
are approximately rotated by $0$, $\pi/4$, $\pi/2$ and $3 \pi / 4$.
Figure \ref{fig:complexfilter} shows the real and imaginary parts
of the wavelets $\psi_{j,k}$ for $j=3$ computed with these Morlet filters and the Symlet-4 filters $g$. These complex wavelets have a Hermitian symmetry $\tilde \psi_{j,k}(-n) = \tilde \psi_{j,k}^*(n)$. Their real and imaginary parts are therefore symmetric and antisymmetric. 
The lowest frequencies are retained by
the scaling filter $\phi_J$, that we write
$\phi_J = \tilde \psi_{J+1,k}$ to simplify notations.

\begin{figure}
    \centering
    \includegraphics[width=0.35\textwidth]{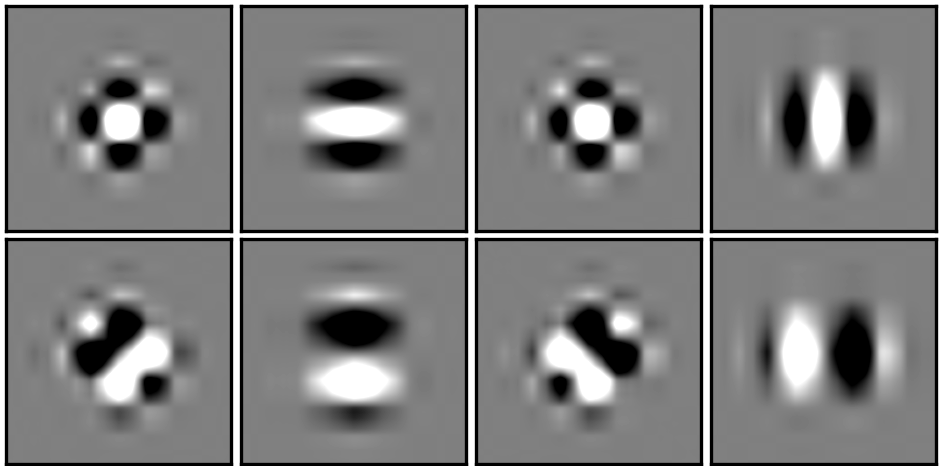}
    \caption{ 
    Complex wavelet $\tilde \psi_{j,k}$ computed with a 2D Symlet-4 low-pass filter $g$,  and $4$ oriented Morlet filters $(\tilde g_k)_{k\leq 4}$, at the scale $2^j=8$. The upper and lower rows show respectively the
      real and imaginary parts of $\tilde \psi_{j,k}$ , for $k=1,2,3,4$.}
    \label{fig:complexfilter}
\end{figure}

\begin{figure}
\centering
\includegraphics[width=0.17\textwidth]{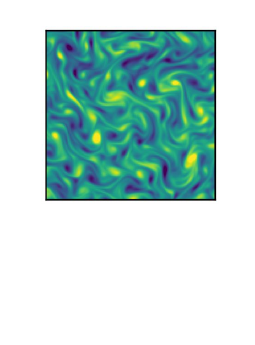}
\includegraphics[width=0.56\textwidth]{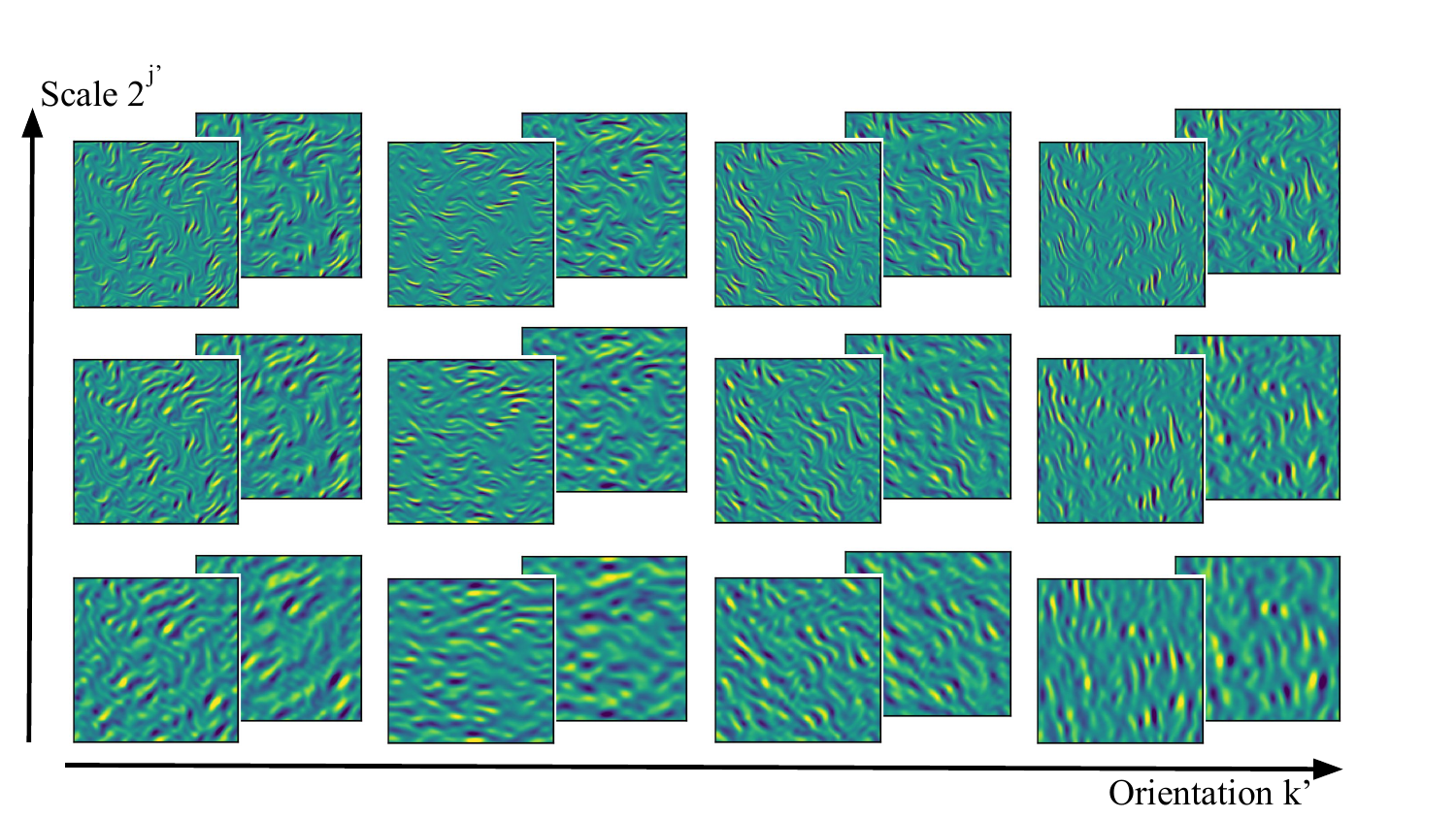}

~~~(a)~~~~~~~~~~~~~~~~~~~~~~~~~~~~~~~~~~~~~~~~~~~~~~~~~~~~~~~~~~~~~~~~~~~(b)~~~~~~~~~~~~~~~~~~~~~~~~~~~~~~~~~~~~~~

\includegraphics[width=0.42\textwidth]{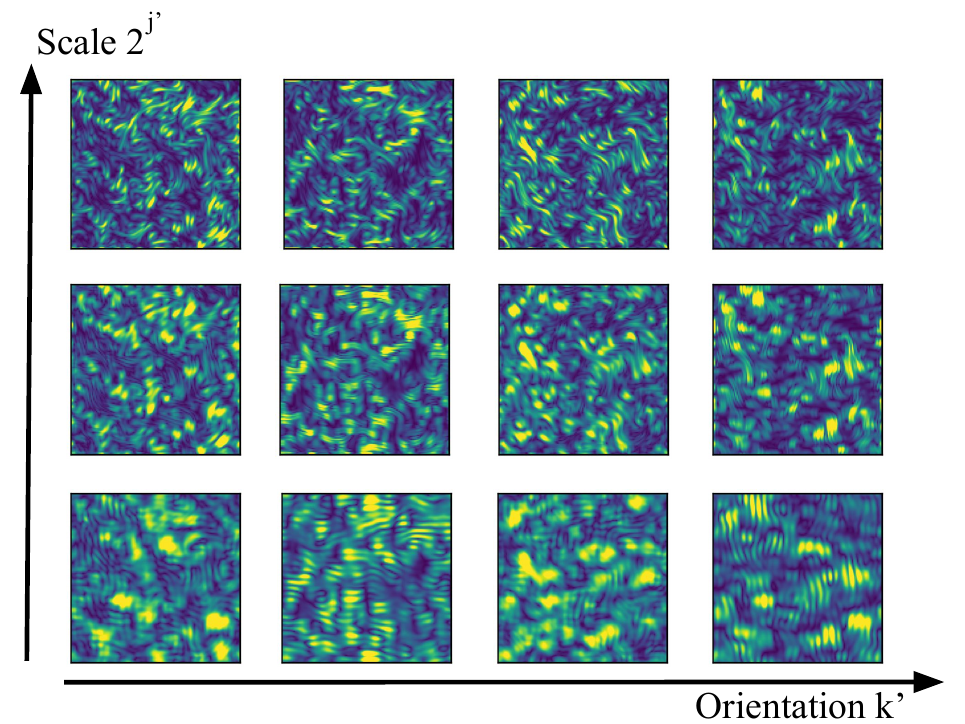}
\includegraphics[width=0.56\textwidth]{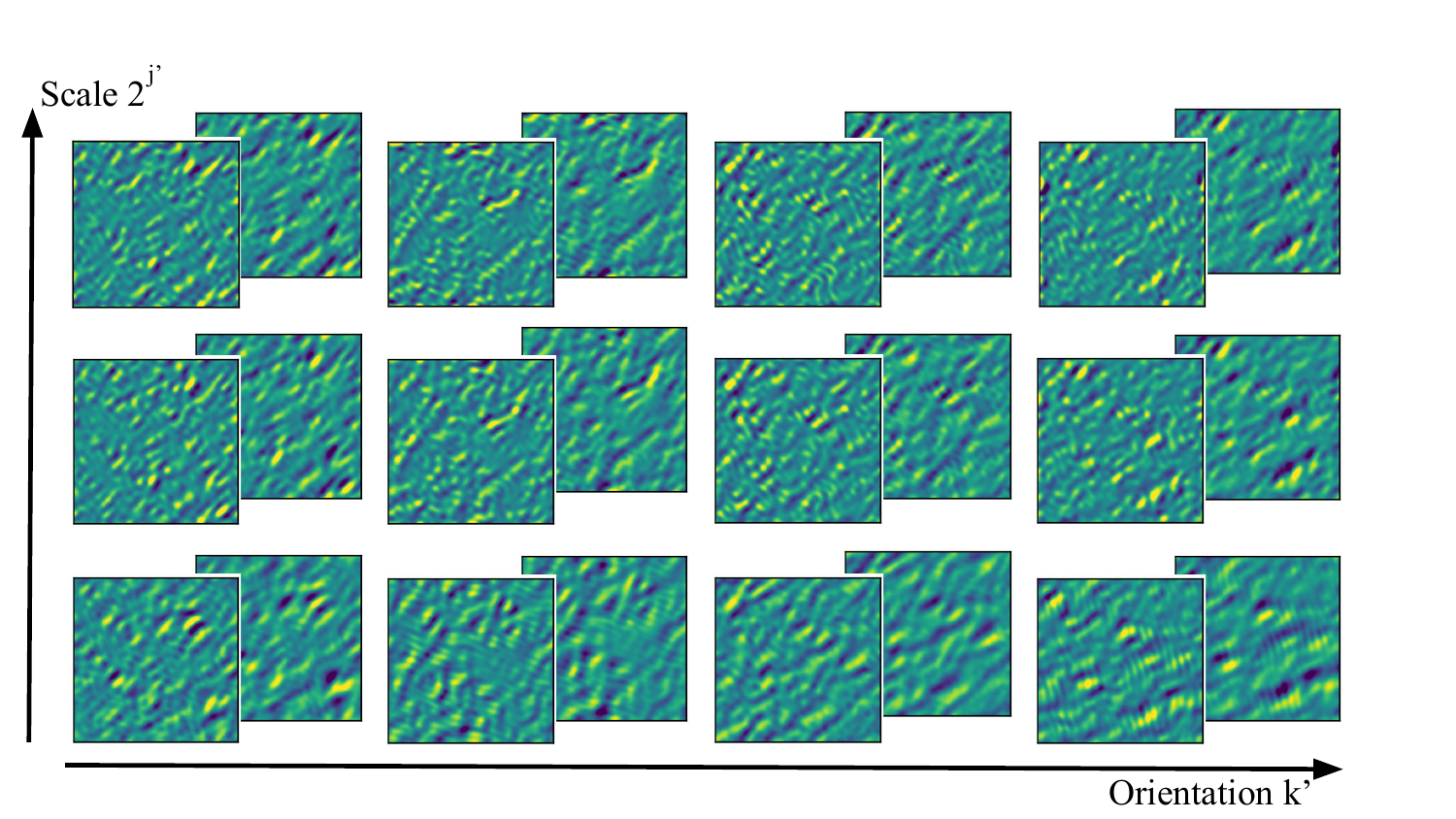}

(c)~~~~~~~~~~~~~~~~~~~~~~~~~~~~~~~~~~~~~~~~~~~~~~~~~~~~~~~~~~~~~~~~~~~~~~~~~~~~~~~~~~~~~~~~(d)~~~~~~~~~~~~~

\caption{ (a): $\varphi_{j-1}$ is a vorticity field of a 2D turbulence.  (b): Complex wavelet transform at scales $2^{j'} \geq 2^j$ computed from $\varphi_{j-1}$ without subsampling. Foreground and background images show respectively the real and imaginary parts of $\data* \tilde \psi_{j',k'}$, computed with
Morlet type wavelets. Each row corresponds to a scale $2^{j'} \geq 2^j$ 
and each column to the $Q = 4$ orientation indices $k'$.
(c) Modulus $| \data* \tilde \psi_{j',k'}|$ of wavelet coefficient images.
They have a large amplitude at sharp transitions, 
with long-range dependencies in space and across scales and orientations.
(d) The foreground and background images are the real and imaginary
parts of $|\varphi * \tilde \psi_{j',k'}| * \tilde \psi_{\ell,k}$ for different $j'$ and $k'$, and for a fixed $\ell = 3$ and $k=1$. 
These images look-alike, which shows that wavelet coefficient modulus $|\varphi * \tilde \psi_{j',k'}|$ have strong dependencies across scales $2^{j'}$ and orientations $k'$.}
\label{fig:complex-wavelet-transf}
\end{figure}

Figure \ref{fig:complex-wavelet-transf}(b) shows the 
wavelet coefficients of a vorticity image of 2D turbulence field.
The wavelet coefficients of such a stationary field are almost not correlated across scales, because wavelets have a Fourier transform
localized in different frequency bands. Wavelet coefficients in different frequency bands have phases which oscillate at different rates or along different orientations, which cancel correlations.
However, the amplitudes $|\varphi * \tilde \psi_{j',k}|$ of wavelet coefficients are strongly correlated across scales, as shown by 
Figure \ref{fig:complex-wavelet-transf}(c). 
The vorticity field has sharp variations which create
large amplitude wavelet coefficients at the same positions over multiple scales and orientations. 

\paragraph{Wavelet scattering} 
Most wavelet modulus
$|\varphi * \tilde \psi_{j',k'}|$ have long range spatial 
dependencies.
This long range  dependency is represented by local interactions
with a second hierarchic decomposition, computed with a second wavelet transform of $|\varphi * \tilde \psi_{j',k'}|$.

The wavelet coefficients $\varphi * \psi_{j',k'}$ and the second wavelet transform of
$|\varphi * \psi_{j',k'}|^q$ for $q=1$ or $q = 2$ are incorporated into a vector of scattering coefficients:
\begin{equation}
\label{complex-definition}
S_{j'}(\varphi_{j-1}) = \left(
\begin{array}{c}
\data * \tilde \psi_{j',k'} (2^{j-1}n)\\
|  \data * \tilde \psi_{j',k'}|^q * \tilde \psi_{\ell,k} (2^{j-1} n)
\end{array}
\right)_{\ell \geq j' , k,k' \leq Q,n} .
\end{equation}
If $q=2$  then upper and lower terms of $S_{j'}$
are polynomials of degree $1$ and $2$ of the values $\varphi(n)$. If $q=1$ then each term remains Lipschitz, but the complex modulus may create singularities when complex wavelet coefficients vanish,  which is addressed by replacing $|z|$ by $(|z|^2+\epsilon)^{1/2}$, for a small $\epsilon$. 
Scattering coefficients 
$| \varphi * \tilde \psi_{j',k'}|^q * \tilde \psi_{\ell,k} $ are indexed by two scales $2^{j'} \geq 2^j$ and $2^\ell \geq 2^{j'}$. This double hierarchy measures the variations of $| \varphi * \tilde \psi_{j',k'}|^q$
over neighborhoods of sizes proportional to $2^{\ell}$, 
in a direction indexed by $k$. 
Figure \ref{fig:complex-wavelet-transf}(d) shows these scattering coefficients
for a fixed $(\ell,k).$

\paragraph{Robust scattering interaction energies}
Probability models $p_\theta$ have been defined as maximum entropy 
distributions conditioned by the covariance values of scattering coefficients \cite{morel2023scale,cheng2023scattering}.
Such distributions are sampled with a microcanonical algorithm,
which avoids the calculation of the underlying Gibbs energy.
In the following we introduce a factorization of scattering energy models with the inverse wavelet renormalization group. It enables us to
  compute the Gibbs energy  parameters at each scale, with a coupling flow equation. Computing these Gibbs energies has important applications to analyze the properties of multiscale physical systems \cite{brossollet2024effectiveenergyinteractionsequilibrium}.

At the largest scale $2^J$, the Gibbs energy model 
$\En_{ \theta_J}  = \theta_J\trans \Phi_J$ of $p_{\theta_J} (\varphi_J)$
is defined with a quadratic term and a scalar potential, as in (\ref{inter-param}).
At scales $2^j \leq 2^J$,
we define a Gibbs energy model $\bar U_{\bar \theta_j}$ of
$p(\bar \varphi_j | \varphi_j)$  from interactions of scattering coefficients.
Let $z\trans$ be the complex conjugate transpose of the complex valued vector $z$.
The scattering energy model includes two-point interactions between scattering coefficients $S_j$ at the scale $2^j$ with 
$S_{j'}$ for $2^{j'} \geq 2^j$, plus a scalar potential:
\begin{equation}
\label{scalar-interaction3}
\bar \En_{\bar \theta_{j}}  =
\sum_{j' = j}^{J+1} S_j\trans   K_{j,j'} S_{j'} +
\bar V_{\bar \gamma_j}  = \bar \theta_j\trans \Psi_j  ~, 
\end{equation}
where
\begin{equation}
\label{scalar-interaction30}
\Psi_j  = 
\left(
\begin{array}{c}
S_{j} S_{j'}\trans \\
\Gamma 
\end{array}
\right)_{j' \geq j}~~
\mbox{and}~~
\bar \theta_j = 
\left(
\begin{array}{c}
K_{j,j'}\\\bar \gamma_j 
\end{array}
\right)_{j' \geq j}.
\end{equation}
The dimensionality of this model can be reduced from known symmetries
of $p$. If $p$ is stationary, then $p_{j-1} (\varphi_{j-1})$ is invariant to translation. The translation invariance of
$\bar U_{\bar \theta_j}$ is equivalent to imposing 
that each $ K_{j,j'}$ is a convolutional operator over the spatial grid of $\varphi_{j-1}$. Symmetries to actions of other groups can be enforced  with other conditions discussed in the next section.

If $q=2$, then the coordinates of each $S_{j'}$ are polynomials of degree $1$ and $2$ of the values $\varphi_{j-1}(n)$, so each interaction term $S_j\trans K_{j,j'} S_{j'}$ is
a polynomial of degree of $4$. If $q=1$, then the complex wavelet phase
is treated similarly but $\bar U_{\bar \theta_j}$ has a quadratic growth in
$\varphi$ as opposed to a degree $4$. It is obtained by replacing $|z|^2$ by $|z|$.
The polynomial of degree $4$ is changed into a polynomial of degree $2$ with
phase harmonics, whose properties are studied in \cite{zhang2021maximum}. 
It improves statistical robustness because $|z|$ is Lipschitz as opposed to $|z|^2$. 
Similarly to a linear rectifier used in neural networks, $|z|$
is homogeneous and can be computed as a linear combination of rectifiers  \cite{zhang2021maximum}. Models computed with $q=1$ have similar properties to models
computed with $q=2$, but are often more accurate because of their statistical robustness studied in the next section.

\paragraph{Local scattering spectrum interactions}
A hierarchic organization aims at creating the needed long-range interactions through local interactions in the hierarchy. Similarly,
a scattering spectrum defines long-range models of stationary fields with local low-dimensional interactions among scattering coefficients.

As in  \cite{morel2023scale,cheng2023scattering}, 
a scattering spectrum interaction model is constructed by eliminating
the interaction terms
in (\ref{scalar-interaction3}) which are a priori negligible. 
For stationary probabilities, the interaction matrices $K_{j,j'}$ are convolution operators.
Images of $S_{j'}$ are complex wavelet coefficients 
computed with $\tilde \psi_{j',k'}$ or with $\tilde \psi_{\ell,k}$ for 
$2^{\ell} \geq 2^{j'}$. Two images of $S_j$ and $S_{j'}$ 
have an energy mostly concentrated in disjoint frequency domains if
they are computed with different wavelets.
Their interaction thus has a negligible contribution to the energy $\bar U_{\bar \theta_j}$ because
each $K_{j,j'}$ in (\ref{scalar-interaction3}) is convolutional.

Two images of $S_j$ and $S_{j'}$ computed with the same $\tilde \psi_{\ell,k}$ have a priori non-zero interaction.
A scattering spectrum model further assumes that wavelet coefficient interactions are local in space. This assumption is valid for multiscale stationary processes which do not produce oscillatory phenomena. A scattering spectrum
model keeps only the interaction
of pairs of scattering coefficients in $S_j$ and $S_{j'}$ which have the same spatial position.

The resulting scattering spectrum model assumes
that interactions are local in space and over the scattering scales. 
However, it defines long-range interactions
in space and across the wavelet coefficients of the original image $\varphi$.
Appendix \ref{app:scattering} shows that this scattering spectrum model  reduces $\Psi_j $ to a vector of dimension $O(\log^2 d)$ if $\varphi$ has a dimension $d$.
The full hierarchic energy model $U_\theta$ aggregates the interaction models $\bar U_{\bar \theta_j}$ at all scales
$d \geq 2^j \geq 1$. It is thus defined by a coupling vector $\theta$ of dimension
$O(\log^3 d).$

\paragraph{Stationary Scattering Energy Model}
We give an analytical formulation of the stationary scattering energy model
$U_\theta$ obtained from interaction energy models $\bar U_{\bar \theta_j}$ 
at all scales $2^j$.
It is calculated with 
hierarchic stationary potentials defined from the
scattering interaction energies $\Psi_j$ in (\ref{scalar-interaction3}).
Let us write the complex wavelet transform of $\varphi$
\begin{equation}
\label{compln-wav-tn}
 W\data = \Big(  \data * \tilde \psi_{j,k} (n) \Big)_{0 < j \leq J+1,k,n} ~,
\end{equation}
This wavelet transform without subsampling is computed with the a-trous algorithm
of Appendix \ref{app:wavelets}.
From $\varphi_j = \varphi * \phi_j (2^j n)$,
Appendix \ref{app:wavelets} shows that the a-trous algorithm
computes wavelet coefficients of $\varphi$ at scales $2^J \geq 2^{j'} \geq 2^j$ subsampled at intervals $2^j$. We write
\[
R (\varphi_j) = \left(
\begin{array}{c}
\varphi * \phi_j (2^j n)\\
|\varphi * \tilde \psi_{j',k'} (2^j n)|^q 
\end{array}
\right)_{j' \geq j ,k',n} .
\]
If $f = (f_k )_k$ is a family of real valued fields, we also write
$f * f\trans = (f_k * f_{k'}\trans )_{k,k'}$ with $f_{k}\trans(n) = f_{k}(-n).$

\begin{theorem}
\label{prop:scat-covr-embeddge}
The potentials
\begin{equation}
\label{potential-equations2}
\Phi_j (\data_j) = \left(
\begin{array}{c}
R(\varphi_j) * R(\varphi_j)\trans\\
\Gamma ( \data_j * \phi_\ell ) 
\end{array}
\right)_{J-j \geq \ell \geq 0} 
\end{equation}
are stationary hierarchic  with interactions $\Psi_j$ defined in (\ref{scalar-interaction3}).
Regressing each free energy $F_j$ over $\Phi_j$ defines a fine scale Gibbs stationary model
\begin{equation}
\label{scat-enn-mod}
 \En_{\theta} (\data) = 
\frac 1 2 \data\trans K \data + 
 \sum_{j=0}^J   V_j (\data) + V_{\rm int} (\varphi) ~,
\end{equation}
with $V_j (\data) = 
\gamma_j\trans \Gamma(\data * \phi_j)$  and
\begin{equation}
\label{interapot}
 V_{\rm int} (\varphi) = 
\data\trans L\, (|W \data|^q) + 
\frac 1 2 (|W \data|^q)\trans M \,(|W \data|^q) ,
\end{equation} 
where $\gamma_j$ and the convolution operators $(K,L,M)$
are computed with a linear coupling flow equation.
\end{theorem}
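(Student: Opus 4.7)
The plan is to verify that $\{\Phi_j\}$ satisfies Definition \ref{def:embedded-potential}, invoke Proposition \ref{prop:couplingflow} to unfold the coupling flow from scale $2^J$ down to $2^0$, and then read off the decomposition (\ref{scat-enn-mod}) from the resulting $\En_{\theta_0} = \theta_0\trans \Phi_0(\varphi)$. Translation invariance of each entry of $\Phi_j(\varphi_j)$ is the easy part: the block $R(\varphi_j) * R(\varphi_j)\trans$ is an autocorrelation, hence translation invariant by construction, and every scalar-potential entry $\sum_n \rho_k(\varphi_j * \phi_\ell(n))$ is a full-grid sum of a translation-covariant quantity.

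The core step is to exhibit a linear operator $Q_j$ such that $\Ave_{j-1}(\Phi_j,\Psi_j) = Q_j \Phi_{j-1}$. The key ingredient is the a-trous identity of Appendix \ref{app:wavelets}: for every scale $2^{j'} \geq 2^j$ and orientation $k'$, the channels $\varphi * \tilde\psi_{j',k'}$ and $|\varphi * \tilde\psi_{j',k'}|^q$ can be recomputed from $\varphi_{j-1}$ by dilated filtering, so every coordinate of the scattering vector $S_{j'}(\varphi_{j-1})$ is a subsampled entry of $R(\varphi_{j-1})$. Consequently each quadratic entry of $\Psi_j$, of the form $S_j(n)\trans S_{j'}(n')$, is a product of two coordinates of $R(\varphi_{j-1})$ on the grid of $\varphi_{j-1}$; summing such a product over the $|{\cal G}_{j-1}|$ translations appearing in $\Ave_{j-1}$ produces exactly one entry of $R(\varphi_{j-1}) * R(\varphi_{j-1})\trans$ at lag $n-n'$, which identifies the corresponding rows of $Q_j$. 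The scalar-potential piece $\Gamma(\varphi_{j-1})$ of $\Psi_j$ matches the $\ell=0$ row of $\Phi_{j-1}$. For the $\Phi_j$ block, $\varphi_j = G\varphi_{j-1}$ together with the a-trous identity expresses $R(\varphi_j)$ as a subsampled restriction of $R(\varphi_{j-1})$ to scales $2^{j'} \geq 2^j$, and $\varphi_j * \phi_\ell = \varphi_{j-1} * \phi_{\ell+1}$ up to subsampling, so after averaging over ${\cal G}_{j-1}/{\cal G}_j$ every entry of $\Phi_j \circ G$ is also a linear combination of entries of $\Phi_{j-1}$.

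Once hierarchic stationarity is in hand, Proposition \ref{prop:couplingflow} yields $\En_{\theta_0} = \theta_0\trans \Phi_0(\varphi)$ after iterating (\ref{prop-eq3}) from $j=J$ down to $j=1$, each step absorbing $\bar\theta_j$ and the free-energy regression coefficients $\alpha_j$. To conclude, I would expand $\Phi_0(\varphi)$ using $R(\varphi) = (\varphi, |W\varphi|^q)$: the block $R(\varphi) * R(\varphi)\trans$ splits into three quadratic forms $\frac 1 2 \varphi\trans K \varphi$, $\varphi\trans L (|W\varphi|^q)$ and $\frac 1 2 (|W\varphi|^q)\trans M (|W\varphi|^q)$, and translation invariance of each autocorrelation coefficient forces $(K,L,M)$ to be convolutional. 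The blocks $\Gamma(\varphi * \phi_\ell)$ paired with their slices of $\theta_0$ give the scalar potentials $V_\ell(\varphi) = \gamma_\ell\trans \Gamma(\varphi * \phi_\ell)$. Since every $(\theta_J, \bar\theta_j, \alpha_j)$ enters linearly and each $Q_j$ is a fixed linear map, the parameters $(K,L,M,\gamma_\ell)$ inherit the announced linear coupling-flow recursion.

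I expect the main obstacle to be the combinatorial bookkeeping in the construction of $Q_j$: tracking the $2^{j-1}$-subsampling of $S_{j'}$, checking that coset averaging over ${\cal G}_{j-1}/{\cal G}_j$ combined with the lattice sum over ${\cal G}_j$ hidden inside $\Phi_j$ rebuilds the full sum over ${\cal G}_{j-1}$ needed for the autocorrelation of $R(\varphi_{j-1})$, and verifying that the cross-scale terms $S_j S_{j'}\trans$ with $j' > j$ all fall inside the same autocorrelation array without producing off-grid shifts. Once this accounting is completed, the rest is routine linear algebra.
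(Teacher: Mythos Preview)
Your plan matches the paper's proof almost step for step: show $S_{j'}$ and $R(\varphi_j)$ are linear in $R(\varphi_{j-1})$ via the a-trous identities, deduce that $\Ave_{j-1}(\Phi_j,\Psi_j)$ is linear in $R(\varphi_{j-1})*R(\varphi_{j-1})\trans$ and in the $\Gamma$-blocks of $\Phi_{j-1}$, invoke Proposition~\ref{prop:couplingflow}, and expand $\theta_0\trans\Phi_0$ with $R(\varphi)=(\varphi,|W\varphi|^q)$ to read off $(K,L,M,\gamma_j)$. One wording to tighten: coordinates of $S_{j'}$ are not literally ``subsampled entries'' of $R(\varphi_{j-1})$ --- the channel $\varphi*\tilde\psi_{j',k'}(2^{j-1}n)$ is an a-trous \emph{convolution} of the $\varphi_{j-1}$-block, and $|\varphi*\tilde\psi_{j',k'}|^q*\tilde\psi_{\ell,k}$ is a convolution of the modulus block --- so the correct statement (and the one the paper uses) is that $S_{j'}$ is a \emph{linear function} of $R(\varphi_{j-1})$, which is all you need for $S_jS_{j'}\trans$ to be linear in $R(\varphi_{j-1})R(\varphi_{j-1})\trans$.
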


The proof is in Appendix \ref{app:th:scat-covr-embeddge}.
When $q=2$, the interaction potential is thus a sum of third order and fourth order polynomials which capture interactions between wavelet coefficients at the scale $2^j$ and larger scales. Setting $q=1$ gives a robust approximation, where all terms have a quadratic growth.
The convolutional matrices
$L$ and $M$ have non-local kernels in space. A local scattering spectrum model
only keeps interaction coefficients between pairs of scattering
coefficients computed with the same wavelet, at the same position. It implies 
that $K = W\trans K' W$, $L = W\trans L' W$
and $M = W\trans M' W$, where $K'$ is diagonal and $L'$ and $M'$ are block diagonal convolutional matrix, with a total of $O(\log^3 d)$ non-zero interaction
coefficients.

Computing the fine scale Gibbs stationary model (\ref{scat-enn-mod}) is useful for a physical analysis of the energy \cite{brossollet2024effectiveenergyinteractionsequilibrium}. However, as in Theorem \ref{pro:scalar-embedded-potential}, let us emphasize that it is numerically not reliable to sample $p_\theta$ because this parametrization is often unstable. The parametrization $\bar \theta_j$ of conditional probabilities provide a more stable representation of the
Gibbs energy. It gives a faster and much more reliable sampling of the estimated probability distribution.

\subsection{Numerical Applications to Dark Matter and Turbulence Fields}
\label{sec:applications}

We evaluate the precision of hierarchical wavelet scattering models $ \En_{\theta}$ for two types of non-Gaussian physical fields having
coherent geometric structures, studied in \cite{morel2023scale,cheng2023scattering}. We numerically estimate the evolution with system size $d$ of the Langevin normalized auto-correlation relaxation time, from which we obtain a partial information on the evolution of the normalized log-Sobolev constants.

Numerical experiments are carried on $2$D fields of dark matter images,
which are the logarithm of 2D slices of simulated 3D large-scale distribution of dark matter in the Universe~\cite{Villaescusa_Navarro_2020} shown in Figure \ref{plot:QuijoteTurbuSynth}(a).
We also model 2D turbulence vorticity fields of incompressible 2D fluids stirred at the scale of size 32 pixels, at a fixed time, simulated from 2D Navier-Stokes equations \cite{schneider2006coherent}, shown in Figure \ref{plot:QuijoteTurbuSynth}(a). The time evolution of 2D turbulence is
an inverse cascade which transfers the energy towards the lowest scales \cite{kraichnan1967inertial}.
The 2D Navier-Stoke simulation initialized with a Gaussian white noise at $t=0$
defines a transport of this Gaussian white distribution into a stationary distribution at a fixed $t$. The vorticity field  is stationary in space at a fixed time, but it does come from a system at equilibrium in time. There is no closed formula for Hamiltonians of such out of equilibrium systems, except 
in particular 1D cases \citep{derrida2002large,bertini2015macroscopic}. Both datasets (100  and 3000 independent realizations) have periodic boundary conditions and are down-sampled from $d=256^2$ to $d=128^2$ pixels. They are augmented with rotations and spatial symmetries because their probability distribution is isotropic.

\begin{figure}
\centering
\includegraphics[width=0.95\textwidth]{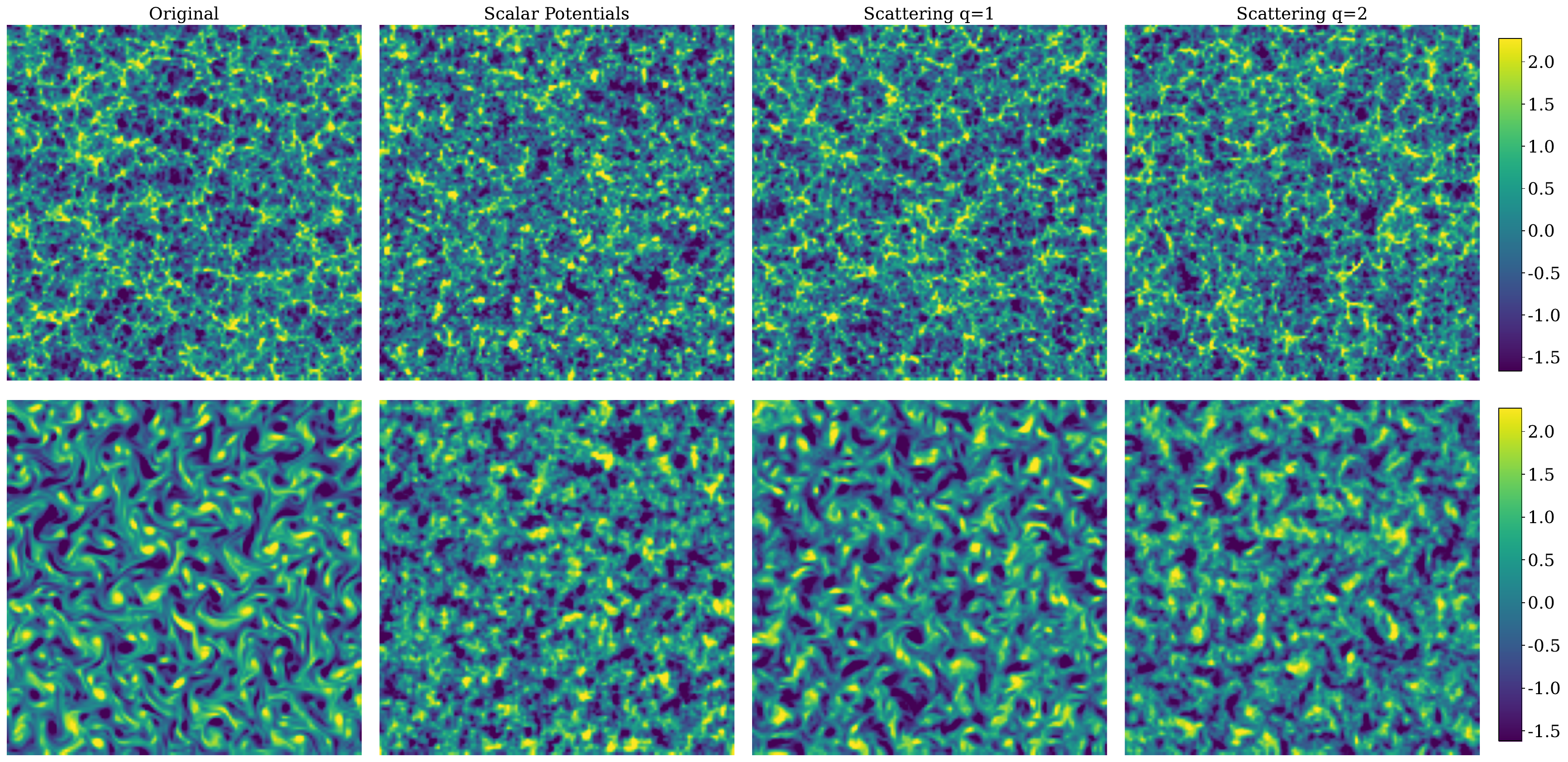}

~~(a)~~~~~~~~~~~~~~~~~~~~~~~~~~~~~~~~~~~~~~~(b)~~~~~~~~~~~~~~~~~~~~~~~~~~~~~~~~~~~~~(c)~~~~~~~~~~~~~~~~~~~~~~~~~~~~~~~~~~~~~~~(d)~~~~~~~~~~
\caption{(a): Top row: $2$D slice of a $3$D
  dark-matters density simulation \cite{Villaescusa_Navarro_2020}. Bottom row: vorticity fields of a $2$D periodic turbulent flow at a fixed time $t$, generated numerically with the 2D Navier Stokes equation from an initial Gaussian white noise \cite{schneider2006coherent}. (b): Samples of the hierarchical scalar potential model defined in (\ref{scalar-interaction}). (c): Samples of the hierarchic wavelet scattering model, with an exponent $q=1$. (d): Same as $(c)$ with an exponent $q=2$.}
  \label{plot:QuijoteTurbuSynth}
\end{figure}

\paragraph{Hierarchic model generations}
Figure \ref{plot:QuijoteTurbuSynth}  (b,c,d) shows images generated from hierarchical models, respectively with a scalar potential, robust wavelet scattering interactions with $q=1$ and higher order
wavelet scattering with $q=2$.
Over all scales, the scalar potential model 
and the wavelet scattering models have respectively 
about $300$ and $2500$ coupling parameters. There is no
scalar potential term in the scattering models. 

Generations from scalar potential models in Figure \ref{plot:QuijoteTurbuSynth}(b) do not restore random geometric structures appearing in the original fields. 
For dark matter fields, the wavelet scattering models in 
 Figure \ref{plot:QuijoteTurbuSynth}(c,d)
reproduce well the visual texture of these images.
For the vorticity fields of 2D turbulence, 
eddies and vorticity flows are well
reproduced only with $q=1$ but are degraded for $q=2$.
Indeed, as shown by
(\ref{theta-calcul}) and in Appendix \ref{app:score-matching}, computing the coupling flow parameters $\bar\theta_j$ of the hierarchical model requires inverting a matrix of empirical  moments estimated on the training dataset. 
Higher order polynomials amplify outliers. It increases 
the estimation variance of high order moments, and thus introduces more errors. As expected, the scattering model with $q=1$ is more robust than the high order model with $q=2$.
For turbulence images, estimation errors can introduce a divergence of the Langevin diffusion at the finest scale, where the spectrum has a fast decay. This is avoided by adding a small confinement term for large amplitude coefficients. It adds $\epsilon D_j\bar\varphi_j^4$ to the energy $\bar U_{\bar \theta_j}$. The value of $\epsilon$
is about $10$ times larger for $q=2$ than for $q=1$, which affects the generated image quality.

\begin{figure}
\centering
\includegraphics[width=0.45\textwidth]{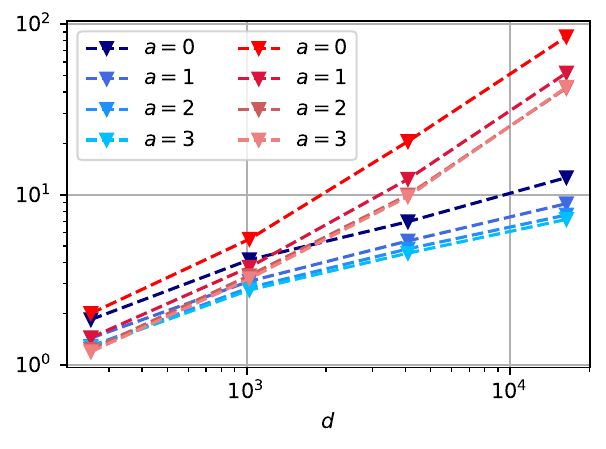}

  \caption{Langevin normalized auto-correlation relaxation time (\ref{eq:autocorrelation}) for scattering hierarchical models computed in wavelet or wavelet packet bases, with $a_j = \min(a, J-j)$, as a function of the image size $d$.
  Langevin relaxation time for dark matter, (computed with $q=2$, in a Symlet-3 basis) are in blue and for 2D turbulence fields  (computed with $q=1$, in a Symlet-4 basis) in red. Increasing the wavelet packet Fourier resolution with $a$ slightly reduces the relaxation time, but it does not avoid an exponential growth with $d$.}
\label{turbulence-mixing}
\end{figure}

\paragraph{Langevin relaxation time}
The renormalization aims at reducing the amplitude of normalized log-Sobolev constant to estimate
a model by score matching with as few samples as possible 
and to reduce the computational complexity 
of the Langevin diffusion. A necessary condition is to control
the largest eigenvalue of the covariance and its condition number. As explained in Section \ref{sec:basis-choice}, a
hierarchical model must use wavelets having enough vanishing moments and which are sufficiently regular. However, larger spatial support create longer range
spatial interactions between wavelet coefficients, as highlighted for $\vvarphi^4$ in section \ref{subsec:scalarpots}. Since the wavelet support increases with the number of vanishing moments,
we choose a wavelet with just enough vanishing moments.
For dark matter images and turbulence fields, we respectively use Symlet-$3$ and Symlet-$4$ wavelets.
These dark matter and 2D turbulence images have a
power spectrum which decays faster than a power law at the highest frequencies. To renormalize the smallest eigenvalues of the covariance at high frequencies, 
Section \ref{sec:basis-choice} also explains that $\bar \varphi_j$ can be represented in a wavelet packet basis having a sufficient frequency resolution. 
The 
wavelet packet filtering (\ref{waveletpacksn})
reduces the width of wavelet frequency support by a factor of $2^{a_j}$. In numerical experiments, we set
$a_j = \min(a,J-j)$ and we adjust $a$.
For $a=0$, the wavelet-packet basis is a wavelet basis.
Choosing a wavelet packet basis as opposed to a wavelet basis does not modify the hierarchic energy model.
It only modifies the coordinate system representing $\bar \varphi_j$ and hence the renormalization which modifies the log-Sobolev constant.

Figure \ref{turbulence-mixing} gives the
evolution of the Langevin hierarchic normalized auto-correlation relaxation time $\tau$ of the hierarchical model in (\ref{eq:autocorrelation}), as
a function of the system size $d$. 
For $a=0$ corresponding to wavelets, the relaxation time
$\tau$ increases with the system size $d$, both for dark-matter and turbulence fields. This is different from the 
relaxation time of hierarchical models of $\varphi^4$ shown in Figure \ref{fig:langevin-mixing}(a), which does not grow with $d$. However, after renormalization, the relaxation time for dark-matter and turbulence is $3$ orders of magnitude below the one for $\varphi^4$ without any renormalization. 
It shows that the hierarchical renormalization accelerates considerably the sampling, although computations increase with $d$.

Increasing the frequency resolution of wavelet packets with $a$ has a small effect on the Langevin relaxation time growth in Figure \ref{turbulence-mixing}. The remaining growth is due to the non-Gaussian multiscale behavior of the model. It is produced by scaling properties of third and fourth order energy components, which are different from second order scaling properties of the covariance, and are therefore not compensated by a renormalization based on the covariance. Such phenomena appear in multifractal random processes \cite{jaffard2004wavelet,abry2013scaling}.
The scattering model generated in Figure \ref{plot:QuijoteTurbuSynth} (b,c) 
are calculated by renormalizing wavelet coefficients
in a wavelet packet basis, with $a=3$ for turbulence images and with $a=2$ for dark matter fields.

\begin{figure}
\centering
\includegraphics[width=1\textwidth]{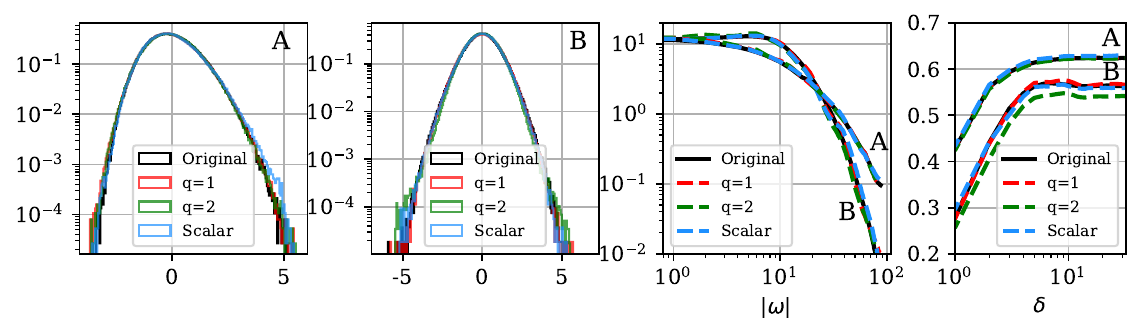}

~~~~~~~~~~~~~~~~~~(a)~~~~~~~~~~~~~~~~ ~~~~~~~~~~~~~~~~~~~~~~~~(b)~~~~~~~~~~~~~~~~~~~~ ~~~~~~~~~~~~~~~~~~~~~~(c)~~~~~~~~~~~~~~~~~~~~~~~~~~~~~~~~~~~~~~~~(d)

\includegraphics[width=0.975\textwidth]{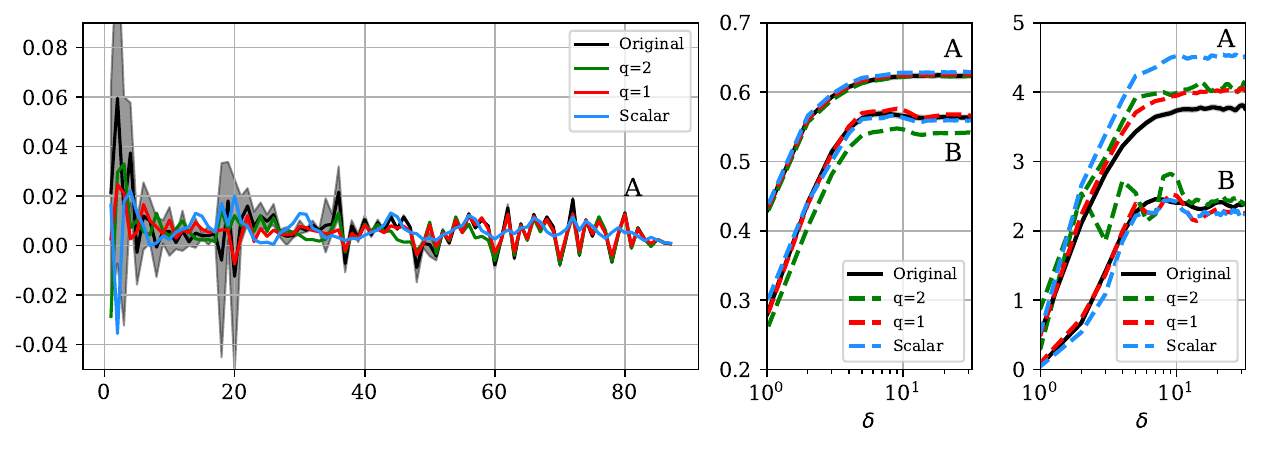}

~~~~~~~~~~~~~~~~~~~~~~~~~~~~~~~~~~~~~~~~~~~(e)~~~~~~~~~~~~~~~~~~~~~ ~~~~~~~~~~~~~~~~~~~~~~~~~~~~~~~~~~~~~~~~~~~~(f)~~~~~~~~~~~~~~~~~~~~~~~~~~~~~~~~~~~~(g)~~~~

\caption{Each graph shows a specific family of moments estimated on the
  training data (in black) and on samples of a hierarchical model (wavelet scattering with $q=1$ in red, with $q=2$ in green, and scalar potential model in blue). Models 
  are labeled (A) for dark-matter and (B) for 2D turbulence. Gray zones represent the estimation variance of these higher order moments over the training dataset. (a,b): marginal probability densities of $\data(n)$. (c): covariance eigenvalues (power-spectrum). (d): Structure function of order $m=1$ as a function of $\delta$. (e): third order bi-spectrum moments, ordered from low to high frequencies. Not shown for the 2D turbulence (B), because they all vanish. 
  (f): Structure function of order $m=4$. (g): Structure function of order $m=6$.}
\label{plot:QuijoteTurbuStats}
\end{figure}

\paragraph{Hierarchic model precision}
We have no energy model for the dark matter and turbulence fields used in numerical experiments. The model precision 
is thus evaluated numerically over a family of standard moments used in statistics and physics.
We test the distribution of point-wise marginals, the Fourier spectrum (second order moments), the bi-spectrum 
(third order moments) also calculated in the frequency domain, and structure functions. 
Each moment is estimated with a Monte Carlo sum over $m$ training samples of $p$. These moments are compared with moments of hierarchical models, also estimated with a Monte Carlo sum over enough model
samples generated by MALA sampling. 

Figure \ref{plot:QuijoteTurbuStats}(a,b) show that
scalar potential models generate
stationary fields $\varphi$, where  $\varphi(n)$ have a
marginal distribution nearly equal to the marginals of the original fields, since the model is optimized from these marginals. The same result is obtained for the wavelet
scattering models, although they do not incorporate a
scalar potential which imposes these marginal moments. 
All models reproduce  well the power spectrum, which is 
only specified in these models by a reduced diagonal matrix 
with less than $\log d$ wavelet second order moments. 

The bi-spectrum are third-order moments computed over
Fourier coefficients, which is zero for a Gaussian process. 
It is also zero if $p(\varphi) = p(-\varphi)$, 
which is the case for the 2D turbulence vorticity. This prior information allows us to 
reduce the model dimensionality by eliminating antisymmetric
terms in the model. The bi-spectrum is thus
only computed for the dark-matter density images.
The bi-spectrum calculation is explained in Appendix \ref{app:bi-tri}.
Figure \ref{plot:QuijoteTurbuStats}(e) shows that the bi-spectrum is well reproduced by the wavelet scattering
model with $q=2$, which incorporates
third order polynomial terms. It is also well reproduced with $q=1$. This robust model does not include third order terms but a lower order equivalent term which is also antisymmetric. On the opposite, the bi-spectrum of the scalar potential model is quite different from the bi-spectrum of the original
dark-matter field.  

Kolmogorov structure functions \cite{kolmogorov1941local}
are moments of order $m$ over the field of increments, indexed by the spatial lag $\delta$ 
\begin{equation}
    \text{SF}_m(\delta) = \expect[p]{|\data(n) - \data(n-\delta)|^m}.
\end{equation} 
Figures \ref{plot:QuijoteTurbuStats}(d,f) plot the structure
functions respectively for $m=1$ and $m=4$ since $2nd$ and $3rd$ order moments have already been evaluated. The only model providing an accurate approximation both for the turbulence and dark-matter fields is the robust wavelet scattering model for $q=1$. The error of the wavelet high-order model for $q=2$ over the turbulence field is due to the larger regularization needed to confine the Langevin diffusion. The reproduction of higher order moments slowly degrades when increasing the order of the moments. Figure \ref{plot:QuijoteTurbuStats}(g) shows that none of the models faithfully reproduces the structure function of order $6$ on dark matter, but it is well reproduced on the turbulence which is more Gaussian.

\begin{figure}
\centering
\includegraphics[width=0.317\textwidth]{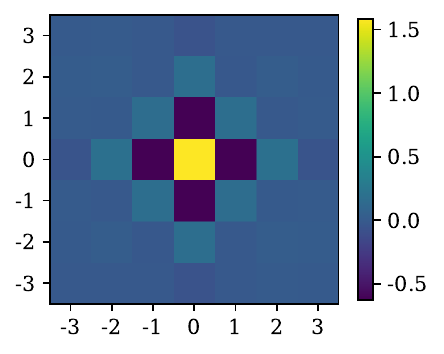}\includegraphics[width=0.3\textwidth]{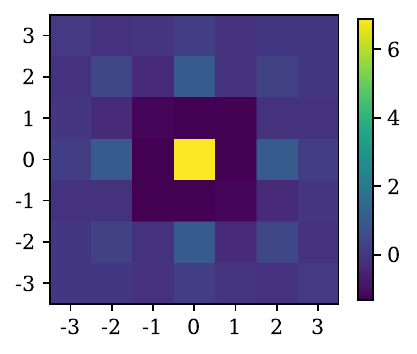}
\includegraphics[width=0.265\textwidth]{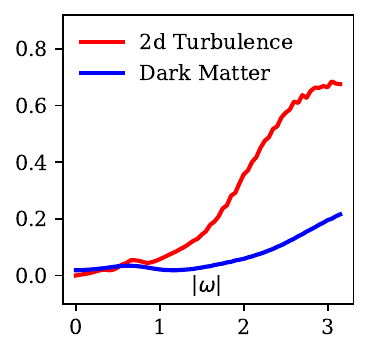}

~~~~~~~(a)~~~~~~~~~~~~~~~~~~~~~~~~~~~~~~~~~~~~~~~~~~~~~~~~~~~~~~~(b)~~~~~~~~~~~~~~~~~~~~~~~~~~~~~~~~~~~~~~~~~~~~~~~~~~~~~~~(c)~~

\caption{Estimated 2-point convolutional kernel $K$ of $\En_{\theta}$ defined in   \cref{scat-enn-mod}, for $q=1$. (a): kernel for dark matter images. (b): kernel for 2D turbulence. They are local and nearly rotation invariant. 
 (c): Fourier transform of each kernel as a function of $|\omega|$.}
\label{plot:QuijoteTurbuK}
\end{figure}

\paragraph{Fine scale Gibbs energy model}
Theorem \ref{prop:scat-covr-embeddge} proves that the
fine scale Gibbs scattering model is characterized by a parameter
vector $\theta$ composed of $3$ convolutional
operators: $K,L,M$ in (\ref{scat-enn-mod}).
They are progressively computed from the learned interaction parameters $\bar\theta_j$ at each scale, with the coupling flow equation (\ref{prop-eq3}).
Figure \ref{plot:QuijoteTurbuK}(a,b) shows the convolution kernel $K$ for turbulence and dark-matter datasets. Both kernels are local in space and nearly isotropic, which is not imposed by the model. They resemble a Laplacian. Figure \ref{plot:QuijoteTurbuK}(c) shows their Fourier transform
as a function of the frequency amplitude $|\omega|$. The estimated kernel for 2D turbulence has higher eigenvalues at high frequencies than the one for dark matter, which is coherent with the faster decrease of its power spectrum.

One can analyze some physical properties from the estimated
operators $K,L,M$ of $\theta$, but this energy
parametrization is
unstable. Sampling directly $p_\theta$ with a MALA algorithm
leads to diverging calculations which indicates that the 
computed Gibbs energy model is not integrable.
Trying to estimate directly the parameter vector $\theta$
by score matching does not improve the result.
We verified that sampling $p_\theta$ also leads to important errors,
producing non-plausible fields. 
For dark matter as well as turbulent fields, the quality of samples
in Figure \ref{plot:QuijoteTurbuSynth} can only be obtained with
the stabilization resulting from the hierarchical sampling algorithm.
It demonstrates the efficiency and stability
of Gibbs energy representations through
the wavelet renormalization group.

\section{Conclusion}

Hierarchic models provide an
inverse renormalization group decomposition of probability
distributions into conditional probabilities in wavelet
bases. It leads to low dimensional parametric models and the renormalization partly avoids the bad conditioning of learning and sampling algorithms,
by reducing log-Sobolev constants. In wavelet orthogonal bases, 
the renormalization group is computed over
dyadic scales, as in the Kadanoff scheme.
Continuous formulations in a Fourier basis would require to use
much higher dimensional parametric models.

For multiscale non-Gaussian processes such as dark matter distribution or 2D turbulence, we introduced a low-dimensional wavelet scattering model, based on robust multiscale high order interactions. Numerical tests show that they can provide an accurate model of complex fields beyond scalar potential models, although
such models are defined by unstable parametric Gibbs energies at a fine scale.
However, we remain far from satisfying models of turbulence.
More complex structures appear in 3D where the energy cascade is not inverted, and we do not consider the time evolution.

More sophisticated models of wavelet conditional probabilities can be
obtained with score diffusion algorithms \cite{guth_wavelet_2022},
and it leads to faster sampling algorithms. 
Deep neural networks have been used to define local
models of conditional probabilities, which have been applied to complex
data bases such as faces \cite{kadkhodaie-local-conditional-models}.
However, the resulting models incorporate much more parameters and are more difficult to analyze. Understanding the probability models calculated by
these deep networks is needed for physical interpretations.

\section*{Acknowledgments}
This work was supported by PR[AI]RIE-PSAI-ANR-23-IACL-0008 and the DRUIDS projet ANR-24-EXMA-0002. We thank Rudy Morel, Erwan Allys and Misaki Ozawa for providing the training datasets. We also thank Nathanaël Cuvelle-Magar, Jean Baptiste Himbert, Florentin Guth.

\clearpage

\appendix

\section{Wavelet Transforms}
\label{app:wavelets}
We review the properties of real and complex 
wavelet transforms computed by iterating over low-pass and band-pass filters, 
in one and two dimensions.

\paragraph{Conjugate mirror filters}
Orthogonal wavelets are computed with conjugate mirror filters.
In dimension $r = 1$, a pair of conjugate mirror filter $(g,\bar g)$ includes a low-pass filter $g$ whose Fourier transform $\hat{g}(\omega)=\sum_n g(n)e^{-in\omega}$ satisfies
\begin{equation}
    \label{eqn:filters}
             |\hat{g}(\omega)|^2 +\vert \hat{g}(\omega+\pi)\vert^2 = 2 ~~\mbox{and}~~            \hat{g}(0) = \sqrt{2}.
\end{equation}
The second filter $\bar g$ has a single high-pass filter defined by 
$\bar g(n) = (-1)^{1-n}\, g(1-n)$. One can verify that the resulting
convolution and subsampling operators 
\begin{equation}
\label{filtersGbarg}
G \varphi (n) = \varphi * g(2n)~~\mbox{and} ~~\bar G \varphi(n) = \varphi * \bar g(2n)
\end{equation}
define an orthogonal matrix $\left( {G} \atop {\bar G} \right)$ 
\cite{Mallat}. All convolutions are computed with periodic boundary conditions.

For images ($r = 2$), two-dimensional conjugate mirror filters 
are computed as separable products of one-dimensional conjugate mirror filters $(g,\bar g)$ \cite{Mallat}. For $n = (n_1,n_2)$ there is a single two-dimensional
separable low-pass filter $g(n_1, n_2) = g(n_1)g(n_2)$ and 
 $3$ high-pass filters in $\bar g = (\bar g_k )_{1 \leq k \leq 3}$,  with
\[            \bar g_1(n_1,n_2) = g(n_1)\bar g(n_2) ~,~
            \bar g_2(n_1,n_2) = \bar g(n_1)g(n_2) ~,~
            \bar g_3(n_1,n_2) = \bar g(n_1)\bar g(n_2).
\]
The convolutional operator $G$ and $\bar G$ in two dimensions are still defined
by (\ref{filtersGbarg}) with these two-dimensional separable filters, and $\bar G \varphi$ has $3$ output images.

\paragraph{A-trous filters}
Averaged coefficients and wavelet coefficients are 
iteratively computed for $j > 0$ with
\begin{equation}
\label{insdfdsf}
\varphi_{j}(n) = \varphi_{j-1} * g(2n)~~\mbox{and}~~\bar \varphi_j = \varphi_{j-1} * \bar g(2n) .
\end{equation}
These coefficients can be written as convolutions of the input $\varphi = \varphi_0$ with a scaling filter $\phi_j$ and wavelets $\psi_{j,k}$ subsampled by $2^j$:
\begin{equation}
\label{filters2}
\data_j = \big(\data * \phi_j (2^j n)\big)_{n}~~\mbox{and}~~
\bar \data_j = \big( \data * \psi_{j,k} (2^j n) \big)_{k, n}.
\end{equation}
These scaling filters and wavelets
satisfy recursive equations computed with "a-trous" filters resulting from the cascaded
subsampling. In one dimension, for any filter $h$ we write $h_j$ the a-trous filter such that $h_j (2^j n) = h(n)$ and $h_j (n) = 0$ if $2^{-j} n$ is not an integer in
one dimension or does not belong to 
the two-dimensional grid $\N^2$ in two dimensions. The a-trous filter $h_j$
is a dilation of $h$ by $2^j$, by setting intermediate coefficients to zero. 
One can derive from (\ref{insdfdsf})  that
\begin{equation}
\label{filters-scal}
\phi_{j} = \phi_{j-1} * g_{j-1} ~~\mbox{and}~~
\psi_{j,k} = \phi_{j-1} * \bar g_{j-1,k} .
\end{equation}
Scaling filters and wavelets are thus obtained with a cascade of a-trous filters.
One can verify that
\begin{equation}
\label{phi-rec}
\varphi_j * \phi_\ell (n) = \varphi_{j-1} * \phi_{\ell+1} (2n) ,
\end{equation}
and
\begin{equation}
\label{psi-rec}
\varphi_j * \psi_\ell (n) = \varphi_{j-1} * \psi_{\ell+1} (2n) .
\end{equation}

\paragraph{Asymptotic wavelet bases}
When $j$ goes to $\infty$, for appropriate filters $\bar g$ and low-pass filters $g$, one
can prove \cite{doi:10.1137/1.9781611970104} that $\phi_j$ and $\psi_{j,k}$ converge to  $\phi(x)$ and wavelets $\psi_k (x)$, up
to a dilation by $2^j$. These limit functions are square integrable, $\int |\phi(x)|^2 dx < \infty$ and 
$\int |\psi_k (x)|^2\, dx < \infty$. 
In $1$ dimension, conjugate mirror filters define a scaling function $\phi(x)$ and a single asymptotic wavelet $\psi(x)$ whose Fourier transforms 
$\hat \phi(\om)$ and $\hat \psi(\om)$ satisfy
\[
\hat \psi(\om) = \frac{1}{\sqrt{2}} \hat{\bar g}(2^{-1}\om )\, \hat \phi(2^{-1} \om)~~\mbox{with}~~\hat \phi(\om) = \prod_{q=1}^{+\infty}
\frac {\hat g(2^{-q} \om)} {\sqrt 2} .
\]
If we impose that $\hat g(\om) > 0$ for $\om \in [-\pi/2,\pi/2]$, then one can prove
\cite{Mallat} that $\{ 2^{-j/2} \psi(2^{-j} x -n) \}_{n \in \Z , j \in \Z}$ is an
orthonormal basis of $\Ld(\R)$. If we limit the maximum scale to $1$, with periodic boundary conditions, we obtain an orthonormal basis of $\Ld([0,1]^2)$. Wavelets with non-periodic boundary conditions may also be designed \cite{doi:10.1137/1.9781611970104}.

A Haar filter is the conjugate mirror filter having a minimum size spatial support: $g(n)=1$ if $n=0,1$ and $g(n) = 0$ otherwise.
It defines the Haar wavelet $\psi$ shown in Figure \ref{fig:wavelets}(a).
The Shannon low-pass filter has a Fourier transform having a minimum
size support: $\hat g = {1}_{[-\pi/2,\pi/2]}$. It defines a Shannon wavelet, shown in figure \ref{fig:wavelets}(c).
A Daubechies Symlet filter $g$ of order $m$ 
\cite{doi:10.1137/1.9781611970104} has a support of size $2m-1$ and defines a compactly supported wavelet having $m$ vanishing moments:
\[
\forall 0 \leq k < m~,~\int x^k\, \psi(x)\, dx = 0 .
\]
These integrals imply that $\hat \psi$ and its $m-1$ first derivatives vanish
at the frequency $\om = 0$ and hence that $|\hat \psi(\om)| = O(|\om|^m)$. 
A Symlet filter is as symmetric as possible \cite{doi:10.1137/1.9781611970104}.
The regularity of
$\psi$ also increases with $m$.
The Symlet-$4$ wavelet is shown in figure \ref{fig:wavelets}(b).

\begin{figure}
\centering
\includegraphics[width=0.5\textwidth]{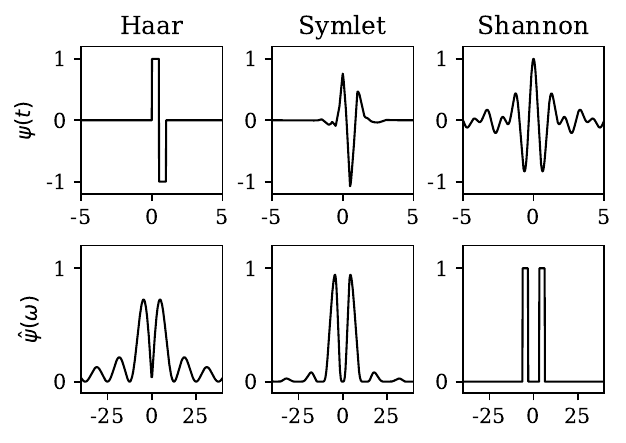}

~~~~~~~~~~~(a)~~~~~~~~~~~~~~~~~~~~~~~ (b)~~~~~~~~~~~~~~~~~~~~~~~~(c)
\caption{Top row: Graphs of different one-dimensional wavelets $\psi(t)$. Bottom row: Graphs of their Fourier transform modulus $|\hat\psi (\om)|$. (a) Haar (1 vanishing moment), (b) Daubechies Symlet (4 vanishing moments), (c) Shannon Wavelet.}
\label{fig:wavelets}
\end{figure}

\paragraph{Complex Morlet filters}
Complex wavelets $\tilde \psi_{j,k}$ are defined by replacing the real filters $\bar g = (\bar g_k )_k$
by a family of complex filters $\tilde g = (\tilde g_k )_k$.
Similarly to (\ref{filters-scal})
\begin{equation}
\label{filters-scal2}
\phi_{j} = \phi_{j-1} * g_{j-1} ~~\mbox{and}~~
\tilde \psi_{j,k} = \phi_{j-1} * \tilde g_{j-1,k}~,
\end{equation}
where $\tilde g_{j,k}$ is the a-trous filter defined from $\tilde g_k$.

In numerical applications we use $Q=4$ complex Morlet filters $\tilde g = \{\tilde g_{k} \}_{k \leq Q}$ defined by
\begin{equation}
  \label{Morlet-filt}
\tilde g_{k} (n_1,n_2) = \gamma\, e^{- \eta (n_1^2 + n_2^2)}\big(e^{i \xi (n_1 \cos\alpha_k + n_2 \sin \alpha_k)} - \beta_k\big),
\end{equation}
$\alpha_k = k \pi/Q$ and each $\beta_k$ is adjusted so that 
$\sum_{n_1,n_2} \tilde g_{k} (n_1,n_2) = 0.$
We choose $\eta = 1.67$ and $\xi = \pi$.
Figure \ref{fig:complexfilter} shows the two-dimensional complex wavelets $\tilde \psi_{j,k}$, computed with the 2D separable 
low-pass Symlet-4 conjugate mirror filter $g$ and these complex Morlet filters $\tilde g$.

 At the very large scales close to the image size, to avoid issues created by periodic boundary conditions, 
 the filters $\tilde g_k$ are modified. They are constructed from the analytic part of the conjugate
 filters $\{ \bar g_k \}_{k \leq 3}$ associated to $g$. Computing the analytic part amounts to restrict the Fourier transform over half of the Fourier plane \cite{Mallat}, which defines a complex filter. The third diagonal filter is divided into two analytic filters located in the two diagonal directions.

\paragraph{A-trous wavelet transform}
For any $j  > 0$, $\varphi_j(n) = \varphi * \phi_j (2^j n)$.
Moreover, (\ref{filters-scal2}) implies that, for $j\leq j'$,
\[
\tilde \psi_{j',k} = \phi_{j-1} * g_{j-1} * ... * g_{j'-2} * \tilde g_{j'-1} .
\]
One can thus verify that $\varphi * \tilde \psi_{j',k} (2^{j-1} n)$ is calculated
from $\varphi_{j-1}$ by an a-trous algorithm which cascades convolutions
with dilated low-pass filters
$g$ and the complex Morlet wavelet filter $\tilde g_k$
\begin{equation}
\label{psi-rec22}
\varphi * \tilde \psi_{j',k} (2^{j-1} n) = \varphi_{j-1} * g * ... * g_{j'-j -1} * \tilde g_{j'-j,k} (n) .
\end{equation}

The complex wavelet transform $W$ in (\ref{compln-wav-tn}) is computed by convolutions with $\phi_j$ and $\tilde \psi_j$ without subsampling. Since these filters are cascade of a-trous filters
$g_j$ and $\bar g_j$ in (\ref{filters-scal2}), these convolutions can be calculated as a cascade of a-trous filterings, from $\varphi = \varphi_0$:
\begin{equation}
\label{psi-rec222}
\varphi * \tilde \psi_{j,k} (n) = \varphi *  g * .. * g_{j-2} * \tilde g_{j-1}(n)~.
\end{equation}

\section{Scattering Spectrum Interactions}
\label{app:scattering}
This appendix specifies scattering spectrum interaction matrices 
$K_{j,j'}$, for the wavelet scattering model (\ref{scalar-interaction3}), with
\[
S_{j'} (\varphi_{j-1}) = \big(\varphi * \tilde \psi_{j',k} (2^{j-1} n) \,,\,|\varphi * \tilde \psi_{j',k'}|^q * \tilde \psi_{\ell,k} (2^{j -1} n)\big)_{k,k' \leq Q, \ell \geq j',n} .
\]

A scattering spectrum model for stationary processes retains interactions between coefficients of $S_j$ and $S_{j'}$ in (\ref{scalar-interaction3}) computed with the same wavelets at the same spatial position.
The pairs of
coefficients of $S_j$ and $S_{j'}$ which interact are thus
\begin{equation}
\label{Lambda20}
\data * \tilde \psi_{j,k} (2^{j-1}n)~~\mbox{and}~~\data * \tilde \psi_{j',k} (2^{j'-1}n)~~\mbox{for } j'=j,~ 1 \leq k \leq Q
\end{equation}

\begin{equation}
\label{Lambda30}
|\data * \tilde \psi_{j,k'}|^q * \tilde \psi_{\ell,k} (2^{j-1} n)~~\mbox{and}~~\data * \tilde \psi_{j',k} (2^{j-1} n)~~\mbox{for }j' = \ell, ~1 \leq k',k \leq Q,
\end{equation}
\begin{equation}
\label{Lambda40}
|\data * \tilde \psi_{j,k'}|^q * \tilde \psi_{\ell,k} (2^{j-1} n)~~\mbox{and}~~|\data * \tilde \psi_{j',k''}|^q * \psi_{\ell,k} (2^{j-1})~~\mbox{for } \ell \geq j' \geq j, ~1 \leq k,k',k'' \leq Q.
\end{equation}

For translation invariant energies, the translation invariant version of the potential $\Psi_j$ in (\ref{scalar-interaction30}) has 
four interaction potentials:
\begin{equation}
\label{scat-eqgenera}
 \Psi_j  = \Big(\Gamma \,,\, 
 \Lambda_{2,j}\,,\,
  \Lambda_{3,j} \,,\,
   \Lambda_{4,j}  \Big) .
\end{equation}
The potential vector $\Gamma$ defined in (\ref{scalar-pot-vec})  provides an approximation of scalar  potentials.
For the other $3$ potentials, 
we only keep interactions between coefficients computed with a
same wavelet at the same position. 
The interactions (\ref{Lambda20})
give the average of squared wavelet coefficients in each direction $k$:
\begin{equation}
\label{Lambda2}
\Lambda_{2,j} (\data_{j-1}) = 
\Big(\sum_n |\data * \tilde \psi_{j,k}(2^{j-1} n)|^2 \Big)_{k \leq Q} .
\end{equation}
They provide an estimation of the power spectrum  over the frequency support where
$\hat \psi_{j,k}$ is concentrated. For processes with power spectrums that decrease faster than a power law, we as well include in $\Lambda_{2,j}$ shifted two-points interactions, of the form $\sum_n \data *\tilde \psi_{j,k}(2^{j-1} n)\data *\tilde \psi_{j,k}^*(2^{j-1}(n+\tau))$, with small $\tau$.

The interaction terms (\ref{Lambda30}) capture the interaction between 
wavelet coefficients at the scale $2^{j'}$ and scattering coefficients at a
same scale $2^{\ell} = 2^{j'}$
\begin{equation}
\label{Lambda3}
\Lambda_{3,j} (\data_{j-1}) =\Big( \sum_n |\data * \psi_{j,m}|^q * \psi_{j',k}(2^{j-1}n)~ \data * \psi_{j',k}^*(2^{j-1}n) \Big)_{k,m \leq Q,j'\geq j }.
\end{equation}
It is antisymmetric in $\varphi$ and it captures phase alignment properties across scales. 
If $q=2$, then they are third order polynomial moments in $\varphi$. When $q=1$, these coefficients capture similar properties \cite{cheng2023scattering}.
The last interaction terms (\ref{Lambda40}) give the interactions between wavelet modulus coefficients at
different wavelet scales $2^j$ and $2^{j'}$ but at the
same scattering scales $2^{\ell}$ 
\begin{equation}
\label{Lambda4}
\Lambda_{4,j}(\data_{j-1}) = \Big(\sum_n |\data * \tilde \psi_{j,k'}|^q * \tilde  \psi_{\ell,k}(2^{j-1} n)~ |\data * \tilde \psi_{j',k''}|^q * \tilde \psi_{\ell,k}^*(2^{j-1} n) \Big)_{k,k',k''\leq Q, \ell\geq j' \geq j}.
\end{equation}
If $q=2$ then they define fourth order moments. When $q=1$ 
they have similar numerical properties \cite{cheng2023scattering}. 
The largest number of coefficients is within $\Lambda_{4,j}$.
The dimension of $\Psi_j$ is of the order of $Q^3 (j-J)^2$.

\paragraph{Further dimensionality reduction}

If $\varphi$ has a probability distribution which is invariant to rotations, then 
the energy remains invariant to the flipping operator which transforms $\varphi(n)$ into $\varphi(-n)$.
Since $\tilde \psi_{j,k}(-n) = \tilde \psi_{j,k}^*(n)$, one can verify, similarly to \cite{mallat2020phase,morel2023scale}, that
the imaginary parts of $\Lambda_{3,j}$ and $\Lambda_{4,j}$ change sign
when $\varphi(n)$ is transformed into $\varphi(-n)$.
It results that the interaction
coefficients of the energy $U_\theta$ over these imaginary parts are zero.
We impose this property 
in all numerical examples shown in this paper by eliminating the imaginary
parts of $\Lambda_{3,j}$ and $\Lambda_{4,j}$ from $\Psi_j$.
We also point out that for a symmetrical process, such that $\varphi$ and $-\varphi$ have the same distribution, one can as well discard $\Lambda_{3,j}$.

\section{Estimation of Conditional Energy parameters}
\label{app:est-cond-energy}

\subsection{Interaction energy estimation by score matching}
\label{app:score-matching}

Instead of minimizing directly each term 
$\expect[p_j]{KL (\bar p_{j} , \bar p_{\bar \theta_{j}})} $, and applying the algorithm 
in (\ref{likelihood-grad}), which
requires heavy calculations, we use a score matching algorithm which minimizes
a relative Fisher information.
It is calculated with a gradient relative to $\bar \data_{j}$, which eliminates the free energy $F_j (\data_j)$. The corresponding Fisher information is averaged with $p_j$:
\[
\expect[p_j] {\I (\bar p_{\bar \theta_j} , \bar p_j  )} =  \E_{p_{j}} \E_{\bar p_j}
\Big(\|\nabla_{\bar \data_{j}} \log \bar p_{j} ( \bar \data_{j}|\data_{j}) - \nabla_{\bar \data_{j}} \log \bar p_{\bar \theta_{j}}
( \bar \data_{j} | \data_{j})\|^2 \Big).
\]
According to \cite{hyvarinen2005estimation} and similarly to (\ref{eq:loss-score}),
since $\nabla_{\bar \data_{j}} \log \bar p_{\bar \theta_{j}} = \bar \theta_j\trans \nabla \Psi_j$, one can verify that it is equivalent to minimize
\begin{equation}
\label{eq:score}
\ell(\bar \theta_{j}) = \E_{p_{j-1}}
\Big(\frac 1 2 \|  \bar \theta_{j}\trans \nabla_{\bar \data_{j}} \Psi_j(\data_{j-1})
  \|^2 - 
 \bar \theta_{j}\trans\Delta_{\bar \data_{j}}  \Psi_j (\data_{j-1}) \Big) .
\end{equation}

This calculation can be done in parallel for all $j$.

Like \cref{theta-calcul}, a closed form for the minimizing $\bar\theta_j$ is given by 
\begin{equation}
\label{eq:exact_theta_bar}
\bar\theta_j = \bar M_j^{-1}\expect[p_{j-1}]{\Delta_{\bar\data_j} \Psi_j(\data_{j-1})} ~~ \text{with}~~ \bar M_j = \expect[p_{j-1}]{\nabla_{\bar\data_j} \Psi_j(\data_{j-1})\nabla_{\bar\data_j }\Psi_j(\data_{j-1})\trans} 
\end{equation}
For considered datasets, with the potential $\Psi_j$ from \cref{scalar-interaction3}, (\ref{eq:score}) is an ill-conditioned quadratic learning problem. We compute the matrix $\bar M_j$ from \cref{eq:exact_theta_bar}, and precondition $\theta$, by defining $\tilde\theta_j = (\bar M_j+\epsilon\Id)^{1/2}\bar\theta_j$, whose optimal value can be computed by minimizing the quadratic loss $$\tilde\ell(\tilde\theta) = 
\frac 1 2\tilde\theta_j\tilde\theta_j\trans
 -\tilde \theta_{j}\trans\expect[p_{j-1}]{(\bar M_j+\epsilon\Id)^{-1/2\text{T}}\Delta_{\bar \data_{j}}  \Psi_j (\data_{j-1})}~,$$ 
which has a condition number of $1$. This loss function is minimized using a single batch gradient descent. 
The same procedure applies to $\theta_J$, at the coarsest scale, for the quadratic loss defined in \cref{eq:loss-score,theta-calcul}. 
Due to the finite size of the datasets, the expectancies are replaced with their empirical estimations.
For applications in this paper, we used, for $\epsilon$, values not bigger than $10\%$ of the eigenvalues of $\bar M_j$.

\subsection{Free energy calculation}
\label{app:free}
To derive the Gibbs energy $ U_\theta(\varphi)$ in equation (\ref{modelwithfree}), we define and optimize a linear approximation $F_{\alpha_j}  = \alpha_j\trans\Phi_j$ of the
free energy $F_j$ defined by the normalization integral (\ref{normalisat-eq}) of $\bar p_{\bar \theta_{j-1}}$. Taking the derivative with respect to $\varphi_j$ in (\ref{normalisat-eq}) gives

\begin{equation}
     \nabla_{\varphi_j}F_j(\varphi_j) -  \expect[\bar p_{\bar \theta_{j}}(\bar \data_j|\data_j)]{ \bar \theta_{j}\trans \nabla_{\data_j}\Psi_j (\data_{j-1})} = 0 .
\end{equation}

Using the previous equation, it can be proven that the free energy $F_j$ is a minimizer of the quadratic function 
$f \mapsto \expect[\bar p_{\bar \theta_{j}}(\bar \data_j|\data_j) p_j(\data_j)]{\norm{
 \nabla_{\data_j} f (\data_j)  -  { \bar \theta_{j}\trans \nabla_{\data_j}\Psi_j (\data_{j-1})} }^2}$. The parameters $\alpha_j$ are optimized with the quadratic loss 
\begin{equation}
\label{cost-free}
\ell( \alpha_{j}) = \expect[\bar p_{\bar \theta_{j}}(\bar \data_j|\data_j) p_j(\data_j)]{\norm{
 \alpha_{j}\trans    \nabla_{\data_j} \Phi_j (\data_j)  -  { \bar \theta_{j}\trans \nabla_{\data_j}\Psi_j (\data_{j-1})} }^2}.
\end{equation}

This expected value is calculated with an empirical average over the $m$ samples $\data_j^{(i)}$ of $p_j$. For each $\data_j^{(i)}$ we know a sample
$\bar \data_j^{(i)}$ of $\bar p_{j} (\bar \data_j | \data^{(i)}_j )$. We modify this
sample with a MALA algorithm to obtain a
sample of $\bar p_{\bar\theta_{j}}(\bar \data_j|\data^{(i)}_j)$, which approximates
$\bar p_{j} (\bar \data_j | \data^{(i)}_j )$.

\section{Moment errors on $\vvarphi^4$}
\label{app:phi4}

\paragraph{Metric on moments}
\label{app:phi4-metric}
We evaluate 
the model errors from generated samples over a
sufficient set of moments.
For $\vvarphi^4$, these statistics are second order moments and the marginal distribution of $\data(n).$ This appendix defines a Kullback-divergence
error from these moments.

\begin{figure}
\centering
\includegraphics[width=0.65\textwidth]{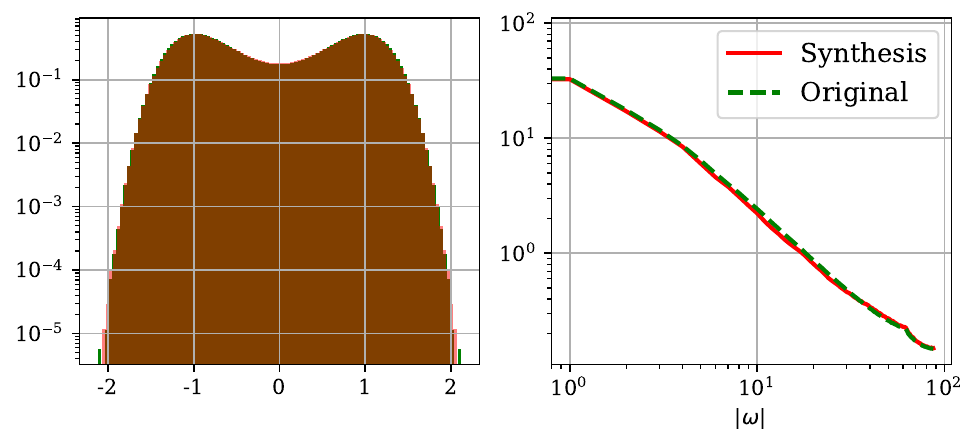}

~~~~~~~~(a)~~~~~~~~~~~~~~~~~~~~~~~~~~~~~~~~~~~ ~~~~~~~~~~~~~~~~~~(b)
\caption{(a): The marginal probability distribution of each $\data(n)$ is computed
  from samples of the $\vvarphi^4$ model at the phase transition. It is superimposed with the marginal probability distribution of samples of the hierarchic model in a Haar wavelet basis.  
  (b): Eigenvalues of 
  covariance matrices in the Fourier basis. They are computed from samples of the $\vvarphi^4$ model and with the Haar  hierarchic model, for images of size $d=128^2$.  The hierarchic model recovers precisely both types of moments in (a) and (b), which are sufficient statistics for $\vvarphi^4$.}
\label{moment-verification}
\end{figure}

The $\varphi^4$ model has a Gibbs energy
$\En (\data) = \theta\trans \Phi(\data)$,
where $\Phi$ is defined in (\ref{potential-equations}), with a single scalar potential $\Gamma(\data*\phi_\ell) = \Gamma(\data)$ for $\ell=0$. Sections \ref{sec:estim-gibbs} shows that $p$ is 
equal to the maximum entropy distribution $p_\theta$ such that
$\E_{p_{\theta}} (\Phi) =  \E_{p} (\Phi)$. These moments 
specified by 
the covariance $C$ of $p(\data)$ and by
the marginal distribution $\tilde p (t)$ of field values $t = \data(n)$. 

To evaluate the precision of a hierarchic model $p_\theta$, we
evaluate the covariance
$C_{\theta}$ and the marginal distribution $\tilde p_{\theta}(t)$, with a Monte Carlo average
over samples generated by this model. 
We compute the KL divergence between the maximum entropy
distributions having covariances a $C$ and a $C_\theta$.
Both distributions are gaussians and their KL divergence is
\[
\frac{1}{2 }\log(|C_{\theta}C^{-1}|)-\frac{d} {2} +\frac{1}{2}\Tr{C\,C_{\theta}^{-1}}  .
\]
The maximum entropy distributions having marginals $\tilde p$ and $\tilde p_{\theta}$ are random vectors with $d$ independent coordinates. Their KL divergence is
thus $d\, KL(\tilde p , \tilde p_\theta)$. 
Adding and renormalizing these KL divergences by the dimension $d$ gives an error:
\begin{equation}
  \label{moment-error}
e(p,p_\theta) = \frac{1}{2d }\log(|C_{\theta}C^{-1}|)-\frac{1} {2} +\frac{1}{2d}\Tr{C\,C_{\theta}^{-1}} + KL(\tilde p , \tilde p_\theta) . 
\end{equation}
Figure \ref{moment-verification} shows that the estimated marginal distributions $\tilde p$ 
and the eigenvalues of the covariance $C$ of $p$ are precisely approximated by the 
marginal $\tilde p_\theta$ and the covariance $C_\theta$
of the hierarchical model $p_\theta$ computed in a Haar basis.

\section{Proofs of Propositions and Theorems}

\subsection{Proof of Proposition \protect\ref{prop:covar-lower}}
\label{app:pointcarre}

The log-Sobolev equation (\ref{log-sobolev}) implies a Poincaré inequality 
\cite{ledoux2000geometry}
\[
\E [|f(\data) - \E(f(\data))|^2] \leq 2 c(p)\, \int \|\nabla f(\data)\|^2\, p(\varphi)d \data .
\]
This is proven, for regular enough functions $f$ , by applying the log-Sobolev inequality to the density $q = p + \epsilon (pf-\expect[p]{f})$, and letting $\epsilon$ go to zero.
Let $e_{\max}$ be a normalized eigenvector of the covariance of $p$ corresponding to the maximum eigenvalue $\mu_{\max}$ and $f(\data) = \langle e_{\max} , \data \rangle$.
The Poincaré inequality applied to $f$ gives
\[
\mu_{\max} \leq 2 c(p) \, \int \|e_{\max} \|^2\, p(\varphi)d \data = 2 c(p)~,
\]
which proves that $c(p) \geq \mu_{\max} / 2.$

\subsection{Proof of Proposition \protect\ref{th:translat}}
\label{app:translat}

To prove that $KL(p , \tilde q) \leq KL(p , q)$, we decompose
\[
KL(p , \tilde q) = \int p (\data) \log p (\data) d \data - \int p (\data) \log \tilde q (\data) d \data ~,
\]
with $\tilde q = \tilde {\cal Z}^{-1} e^{- \Ave_{j-1} \En}$.
Let us verify that the second term increases when $\tilde q$ is replaced by 
$q=  {\cal Z}^{-1} e^{- U}$. 

Since $p$ is translation invariant, a change of variable proves that
\begin{align}
- \int p (\data) \log \tilde q (\data) d \data = &|{\cal G}_{j-1}|^{-1} \int p (\data)  \sum_{\tau \in {\cal G}_{j-1}} \En ( T_\tau \data) d \data + \log \tilde {\cal Z} 
\nonumber\\
 & = \int p (\data)  \En(\data) d \data + \log \tilde {\cal Z} ~, 
 \label{intasdf}
\end{align}
where
\[
\tilde {\cal Z} = \int e^{- \Ave_{j-1} \En (\data)} d \data = \int \prod_{\tau \in {\cal G}_{j-1} } \big(e^{ -\En (T_\tau \data)}\big)^{|{\cal G}_{j-1}|^{-1}}
 \, d \data .
\]
We now prove that 
\begin{equation}
\label{cons-ineq}
\tilde {\cal Z} \leq {\cal Z} = \int 
e^{ - \En(\data)}
\, d \data ~,
\end{equation}
so that we can show with (\ref{intasdf}) that
\[
- \int p (\data) \log \tilde q (\data) d \data \leq - \int p (\data) q (\data) d \data ,
\]
which proves the theorem result (\ref{KL-diff}).

We prove (\ref{cons-ineq}) by iterating on the H\"older inequality 
which proves that
\begin{equation*}
\label{Holder}
 \left( \int \prod_{\tau \in {\cal G}_{j-1}} \big(e^{ -\En (T_\tau \data)}\big)^{|{\cal G}_{j-1}|^{-1}}
 \, d \data \right)^{|{\cal G}_{j-1}|} 
 \leq \left(
\int \prod_{\tau \in {\cal G}_{j-1} - \{\tau_1\}} \big(e^{ -\En( T_\tau \data)}\big)^{(|{\cal G}_{j-1}|-1)^{-1}}\, d \data \right)^{|{\cal G}_{j-1}|-1}
~ \int 
e^{ -\En (T_{\tau_1} \data)}
\, d \data  .
\end{equation*}
But $\int 
e^{ -\En (T_\tau \data)}
\, d \data = {\cal Z}$ so
reapplying $|{\cal G}_{j-1}| -1$ times the H\"older inequality to the integral to the power $|{\cal G}_{j-1}| -1$ proves that
$ (\tilde{\cal Z}) ^{|{\cal G}_{j-1}|} \leq ({\cal Z}) ^{|{\cal G}_{j-1}|}$ and hence
${\tilde{\cal Z}} \leq {\cal Z}$, which finishes the proof. $\Box$

\subsection{Proof of Theorem \protect\ref{pro:scalar-embedded-potential}}
\label{app:pro:scalar-embedded-potential}

At the coarsest scale $2^J$,  $\Phi_J$ is defined in (\ref{inter-param})  
by a convolutional operator whose kernel is written $K_J$ and a scalar
potential defined in (\ref{scalar-pot-vec}). One can thus
verify that 
\begin{equation}
\label{scalar-interaction00}
\theta_J = (K_J / 2 ,  \gamma_J )
~~
\mbox{and}~~
\Phi_J (\data_{J}) = 
\big(\data_J *  \data_J\trans ~,~
\Gamma(\data_{J}) \big).
\end{equation}
The potential $\Phi_J$ thus
satisfies (\ref{potential-equations}) for $j = J$.
 
To prove that the $\Phi_j$ define a hierarchic potential, we shall prove 
that for any $j \geq J$, if
\[
\Phi_j (\data_j) = \big(\data_j* \data_j\trans , \Gamma ( \data_j * \phi_\ell ) \big)_{J- j \geq \ell \geq 0}
\]
and if $\Psi_j$ is defined by (\ref{scalar-interaction}) and hence
\[
\Psi_{j}(\data_{j-1}) = \big( \bar \data_{j}  \bar \data_{j}\trans~,~
\bar \data_{j}  \data_{j}\trans~,~\Gamma (\data_{j-1}) \big) 
\]
then $\Ave_{j-1} (\Phi_j , \Psi_j)$ is a linear function 
of $\Phi_{j-1}$, with 
\[
\Phi_{j-1} (\data_{j-1}) = \big(\data_{j-1}* \data_{j-1}\trans , \Gamma ( \data_{j-1} * \phi_\ell ) \big)_{J- j+1 \geq \ell \geq 0} .
\]

Let us first show that $\Ave_{j-1} (\Psi_j)$
is a linear function of $\Phi_{j-1}$. 
We consider the first two
terms of $\Psi_j$.
Since $\bar \varphi_{j} = \bar G \varphi_{j-1}$ and
$\varphi_{j} =  G \varphi_{j-1}$, it results that
$\bar \varphi_{j} \bar \varphi_j\trans$ 
and
$\bar \varphi_{j} \varphi_j\trans$ 
are linear functions of
 $\varphi_{j-1} \varphi_{j-1}\trans$.
Since $\varphi_{j-1} \varphi_{j-1}\trans = (\varphi_{j-1} (n) \varphi_{j-1}(n') )_{n,n'}$ it results that 
\begin{align*}
\Ave_{j-1} (\varphi_{j-1} \varphi_{j-1}\trans) &= 
|{\cal G}_{j-1}|^{-1}
\Big(\sum_{\tau \in {\cal G}_{j-1}} \varphi_{j-1}(n-\tau) \varphi_{j-1}(n'-\tau) \Big)_{n,n'} \\
&= |{\cal G}_{j-1}|^{-1}\big(\varphi_{j-1} * \varphi_{j-1}\trans (n'-n) \big)_{n,n'}.
\end{align*}
It results that applying $\Ave_{j-1}$ to the first two terms
of $\Psi_{j-1}$ is a linear function of $\Phi_{j-1}$.
For the third term of $\Ave_{j-1}(\Psi_j)$ we have
\[
\Ave_{j-1} \Gamma (\varphi_{j-1}) = \Gamma(\varphi_{j-1}) = 
\Gamma(\varphi_{j-1} * \phi_0) ~,
\]
because $\phi_0 = \delta$, which a linear function of the
second term of $\Psi_j$ which includes this term. It concludes the
proof that $\Ave_{j-1} (\Psi_j)$
is a linear function of $\Phi_{j-1}$.

Let us now prove that $\Ave_{j-1} (\Phi_j)$
is a linear function of $\Phi_{j-1}$. For the first term in  $\Ave_{j-1} (\Phi_j)$,  equation (\ref{insdfdsf}), which states that $\data_j(n)= \data_{j-1} * g(2n)$, gives 
\begin{align*}
    \data_j* \data_j\trans &= \Big(\sum_{\tau\in{\cal G}_j} \varphi_j(-\tau)\varphi_j(n-\tau) \Big)_n \\ &=  \Big(\sum_{\tau\in{\cal G}_j} (\varphi_{j-1}*g)(-2\tau)(\varphi_{j-1}*g)(2n-2\tau) \Big)_n . 
\end{align*}
Averaging over all the translations on the grid ${\cal G}_{j-1}$ eliminates the subsampling in the previous sum, and gives

\begin{align*}
\Ave_{j-1} (\data_j * \data_j\trans)  &= 
\frac 1 4 \big(( \data_{j-1} * g )*( \data_{j-1}\trans * g \trans )(2n)\big)_{n}\\ &= \frac 1 4 
\big((g * g\trans) * (\data_{j-1} * \data_{j-1} \trans)(2n)\big)_{n} ~,
\end{align*}
where $g\trans(n) = g(-n)$. $\Ave_{j-1} (\data_j * \data_j\trans)$ is thus a linear transformation of the
first term of $\Phi_{j-1}(\varphi_{j-1})$.

To compute the second terms of $\Ave_{j-1}(\Phi_j)$, we use
(\ref{phi-rec}) which shows that
\begin{equation}
\label{scalin-relasn}
\varphi_j * \phi_\ell (n) = \varphi_{j-1} * \phi_{\ell+1} (2n) .
\end{equation}
Moreover $\Gamma$
computes a sum on the grid ${\cal G}_j$ of pointwise transformations $\rho_k$ which commute with
translations. It results that the averaging of all translations 
on the grid ${\cal G}_{j-1}$ gives
\begin{align*}
\Ave_{j-1}(\Gamma(\data_j * \phi_\ell) ) &= |{\cal G}_{j-1}|^{-1} 
\Big( \sum_{\tau \in {\cal G}_{j-1} , n \in {\cal G}_{j}} \rho_k ( \varphi_{j-1} * \phi_{\ell+1} (2n-\tau)) \Big)_k \\ &=
\frac 1 4 
\Big( \sum_{n \in {\cal G}_{j-1}} \rho_k ( \varphi_{j-1} * \phi_{\ell+1} (n) \Big)_k\\ &= \frac 1 4 \Gamma(\data_{j-1} * \phi_{\ell+1}) . 
\end{align*}
This proves that the second terms of $\Ave_{j-1}(\Phi_j)$ is a linear
function of the second term of $\Phi_{j-1}$.
This finishes the proof that the $\Phi_j$ define a 
stationary hierarchic potential.

Observe that the scalar potential of each interaction energy $\Psi_j$ creates
a scalar potential at each scale, which appears in (\ref{potential-equations}). 
If we approximate the free energies $F_j$
by $\alpha_j\trans \Phi_j$ then Proposition \ref{prop:couplingflow} 
defines stationary energy models $U_{\theta_j} = \theta_j\trans \Phi_j$ at each scale $2^j$.
The coupling vector $\theta_{j-1}$ is 
calculated from $\theta_j$ and $(\bar \theta_j,\alpha_j)$ with the
operator $Q_j$.
It defines a fine scale stationary energy model 
\[
U_\theta (\varphi) = \theta_0\trans \Phi_0 (\varphi) =
(K, \gamma_j)_{j \geq J}\trans \, \big(\varphi * \varphi\trans , \Gamma ( \data * \phi_ j) \big)_{J \geq j\geq 0} ~,
\]
and hence
\[
U_\theta (\varphi) = \frac 1 2 \varphi\trans K \varphi + \sum_{j=0}^J \gamma_j\trans \Gamma (\varphi * \varphi_j)~,
\]
which finishes the theorem proof. 

\subsection{Proof of Theorem \protect \ref{prop:scat-covr-embeddge}}
\label{app:th:scat-covr-embeddge}

To verify that the $\Phi_j$ define a hierarchic potential we shall prove 
that for any $j \geq J$, if
\[
\Phi_j (\data_j) = \big(R(\varphi_j)* R(\varphi_j)\trans , \Gamma ( \data_j * \phi_\ell ) \big)_{J- j \geq \ell \geq 0} ~~\mbox{with}~~  R(\varphi_j) = (\varphi_j , |\varphi * \tilde \psi_{j',k'}(2^{j} n)|^q)_{j' > j,k'} ~,
\]
and
\[
\Psi_j = \big(S_{j} S_{j'}\trans, \Gamma \big)_{J+1 \geq j' \geq j} ~~\mbox{with}~~S_{j'}(\varphi_{j-1}) = \Big( \varphi * \tilde \psi_{j',k'}(2^{j-1} n)   \,,\,  |  \data * \tilde \psi_{j',k'}|^q * \psi_{\ell,k} (2^{j-1} n) \Big)_{\ell \geq j' , k,k'\leq Q,n} ~,
\]
then $\Ave_{j-1} (\Phi_j )$ and $\Ave_{j-1}(\Psi_j)$ are linear functions 
of $\Phi_{j-1}$, for
\[
\Phi_{j-1} (\data_{j-1}) = \big(R(\varphi_{j-1}) * R(\varphi_{j-1})\trans , \Gamma ( \data_{j-1} * \phi_\ell ) \big)_{J- j+1 \geq \ell \geq 0} .
\]

From the proof of Theorem \ref{pro:scalar-embedded-potential},
the $\Gamma$-terms of $\Ave_{j-1} (\Phi_j )$ and $\Ave_{j-1}(\Psi_j)$ are linear functions 
of the $\Gamma$-terms of $\Phi_{j-1}$. We concentrate the first terms in $\Ave_{j-1}(\Psi_j)$ and $\Ave_{j-1}(\Phi_j)$.

For $\Ave_{j-1}(\Psi_j)$, consider $\Ave_{j-1}(S_{j}  S_{j'}\trans)$. For any $j'\geq j$, the scattering vector $S_{j'}$ is a linear function of $R(\varphi_{j-1}) = (\varphi_{j-1} , |\varphi * \tilde \psi_{j',k'} (2^{j-1} n)|^q)_{j' > 0,k'\leq Q}$. Nonlinear terms in $S_j'$ are included in $R(\varphi_{j-1})$, and, from (\ref{psi-rec22}), $\varphi * \tilde \psi_{j',k'}(2^{j-1}n)$, is a linear function of $\varphi_{j-1}$. 

It results that the terms
$\Ave_{j-1}(S_{j}  S_{j'}\trans)$ 
are linear functions of 
$\Ave_{j-1}(R_{j-1} R_{j-1}\trans)$. In the same way that we proved for Theorem \ref{pro:scalar-embedded-potential} that 
$\Ave_{j-1}(\varphi_{j-1} \varphi_{j-1}\trans)$ is a linear function
of $\varphi_{j-1} * \varphi_{j-1}\trans$, we prove that
$\Ave_{j-1}(R(\varphi_{j-1}) R(\varphi_{j-1})\trans)$ is a linear function
of $R(\varphi_{j-1}) * R(\varphi_{j-1})\trans$. The term $\Ave_{j-1}(S_{j}  S_{j'}\trans)$ is therefore a linear function of the first
term of $\Phi_{j-1}$.

Let us now prove that the first term of
$\Ave_{j-1} (\Phi_j)$, and hence $\Ave_{j-1} (R(\varphi_j) * R(\varphi_j)\trans)$,
is a linear function of $R(\varphi_{j-1}) * R(\varphi_{j-1})\trans$. 

$R(\varphi_j)$ is deduced from $R(\varphi_{j-1})$ using, the filter $G$ on $\varphi_{j-1}$, and by subsampling (or discarding) the nonlinear terms. 

$\Ave_{j-1} (R(\varphi_j) R(\varphi_j)\trans)$,
is a linear function of $\Ave_{j-1} (R(\varphi_{j-1}) R(\varphi_{j-1})\trans)$, which is itself a linear transform of $R(\varphi_{j-1})*R(\varphi_{j-1})\trans$.

This finishes the proof that the $\Phi_j$ define a 
stationary hierarchic potential.

Proposition \ref{prop:couplingflow} implies that approximating $F_j$
by $\alpha_j\trans \Phi_j$ yields a coupling flow equation (\ref{prop-eq3}). It computes 
$\theta_{j-1}$ from $\theta_j$ and $(\bar \theta_j,\alpha_j)$ with the linear
operator $Q_j$. It defines
\[
U_\theta (\varphi) = \theta_0\trans \Phi_0 (\varphi) =
\theta_0\trans \big(R(\varphi) * R(\varphi)\trans , \Gamma ( \data * \phi_\ell ) \big)_{J\geq j \geq 0} .
\]
Since
$R(\varphi) = (\varphi, |W \varphi|^q )$, 
for $\theta_0 = (K,L,M, \gamma_j )_{j \geq J}$ 
where $(K,L,M)$ are convolutional operators, this can be rewritten
\[
U_\theta (\varphi) = 
\frac 1 2 \data\trans K \data + 
\data\trans L\, (|W \data|^q) + 
\frac 1 2 (|W \data|^q)\trans M \,(|W \data|^q)  + 
 \sum_{j=0}^J   \gamma_j\trans \Gamma(\data * \phi_j),
 \]
which proves (\ref{scat-enn-mod}).

\section{Estimation Algorithms}

\subsection{Normalized Autocorrelation Relaxation Time of a Langevin Diffusion}
\label{sec:mixingtimelangevin}

This appendix specifies the calculation of normalized
relaxation time of Langevin diffusions in hierarchic models.

\paragraph{Conditional auto-correlation relaxation time} As in \citep{marchand_wavelet_2022,guth2023conditionally},
we compute the auto-correlation relaxation time (\ref{relax-time}) to sample the conditional probabilities of a hierarchic model with an unadjusted Langevin diffusion.
This conditional probability has an energy $\bar U_{\bar \theta_j}$.
The unadjusted Langevin diffusion of $\bar \varphi_j$
is numerically computed
with an Euler-Maruyama discretization with a time interval
$\delta_j$. This time interval must be smaller than the
inverse of the Lipschitz constant of $\nabla_{\bar\data_j}\bar U_{\bar\theta_j}$ \cite{vempala2022rapid}. This Lipschitz constant is evaluated by estimating the supremum of the eigenvalues of the Hessian $\nabla_{\bar\data_j}^2\bar U_{\bar\theta_j} (\varphi_{j-1})$ over typical realizations
$\varphi_{j-1}$. We set $\delta_j$ to be a fixed fraction of the inverse of this supremum.

Let $\bar \varphi_{j,\alpha}$ be the wavelet coefficients of obtained by the Langevin diffusion, after $\alpha$ steps at intervals $\delta_j$, thus corresponding to a time $t = \alpha\, \delta_j$. 
A normalized conditional auto-correlation at $\alpha$ is
\begin{equation}
    \bar A_{j}(\alpha| \data_j) = \frac{\expect[\bar p_{\bar\theta_j}(\cdot | \data_j)]{\big(\bar\data_{j,\alpha}-\expect[\bar p_{\bar\theta_j}(\cdot | \data_j)]{\bar\data_{j}}\big)\big( \bar\data_{j,0}-\expect[\bar p_{\bar\theta_j}(\cdot | \data_j)]{\bar\data_{j}}  \big) } }{\expect[\bar p_{\bar\theta_j}(\cdot | \data_j)]{\big(\bar\data_{j}-\expect[\bar p_{\bar\theta_j}(\cdot | \data_j)]{\bar\data_{j}}\big)^2 }}~,
\end{equation}
This auto-correlation is averaged over the distribution of $\varphi_j$
\begin{equation}
    A_{j}(\alpha) = \expect[p_j]{\bar A_j (\alpha | \data_j)}.
\end{equation}
The exponential decay $A_j(\alpha)$ is measured
by the normalized autocorrelation relaxation time $\bar \tau_j$, 
\begin{equation}
    A_{j}(\alpha) \approx A_j(0) \exp\paren{-\frac{\alpha}{\bar \tau_j}}.
\end{equation}
This normalized relaxation time is the number of Langevin iterations needed to reach a fixed precision.

\paragraph{Comments on MALA}
The rejection step of the MALA algorithm modifies the value
of the relaxation time. Whereas Langevin and MCMC algorithm have hierarchical autocorrelation relaxation times which do not depend upon the system size $d$, the MALA algorithm produces a  hierarchical relaxation
time which has a slow growth as a function of $d$,
highlighted in \citep{guth2023conditionally}. 
Indeed, MALA relies on a global acceptance step, which does not take advantage of the locality of interactions, as opposed to an MCMC sampling or an unadjusted Langevin diffusion
\cite{marchand_wavelet_2022,montanari2006rigorous}.
For example, MALA has a mixing time that grows with the dimension $d$, for strongly log-concave distributions having a
log-Sobolev constant which does not depend on $d$ \citep{chewi2021analysis,li2022sqrtd,wu2022minimax}, whereas
it is not the case for an MCMC sampling or a 
Langevin unadjusted diffusions.
However, MALA is the fastest algorithm to compute the
hierarchic sampling for the image sizes considered in this paper \cite{guth2023conditionally}. It is used for the generation results but not to evaluate the relaxation time in order to avoid issues related to the rejection-approval step.

\subsection{Estimation of the Bi-spectrum}
\label{app:bi-tri}
We describe the calculation of the bi-spectrum in
Figure \ref{plot:QuijoteTurbuStats}. The bi-spectrum is defined as the expected value Fourier transform of the 3 points correlation function. We use a regularized estimation of these statistics for rotationally invariant probability 
distributions, described in \citep{cheng2023scattering}.
It performs a frequency averaging in thin frequency annuli, evenly spaced in frequency log-scale, that we write ${\cal A}_k$. For the Figure \ref{plot:QuijoteTurbuStats}(e), for images of size $d=128^2$, the frequency plane is decomposed into $9$ frequency annuli ${\cal A}_k$, which select frequencies $\omega$ such that $ 2^{-7k/9}\pi \leq |\omega| \leq 2^{-7(k+1)/9}\pi $.

We consider probability distributions which are invariant to rotations. The power spectrum is thus estimated with an average over frequencies $\om$ in each annulus ${\cal A}_k$ and over multiple samples $\varphi^{(i)}$:
\begin{equation}
    P(k) = \Ave_{i, \omega\in {\cal A}_k} |\hat \data^{(i)}(\omega)|^2 .
\end{equation}

For $(k_1,k_2,k_3)$, 
a regularized normalized bi-spectrum is defined as a sum
over all $(\omega_1,\omega_2,\omega_3)$ 
from 3 frequencies $\omega_j \in {\cal A}_{k_j}$ which sum to zero $\sum_{j=1}^3 \om_j = 0$, 
by multiplying the Fourier transforms
at these frequencies of multiple samples $\varphi^{(i)}$:

\begin{equation}
B(k_1,k_2,k_3) = \Ave_{i, \omega_{j}\in {\cal A}_{k_j},\sum_{j=1}^3\omega_{j} = 0}\Big( \frac{\hat\data(\omega^{(i)}_{1})\,\hat\data(\omega^{(i)}_{2})\,\hat\data^{(i)}(\omega_{3})} {\sqrt{P(k_i)\,P(k_2)\,P(k_3)}}\Big).
\end{equation}

\bibliography{References}

\end{document}